\newtheorem{theorem}{Theorem}[section]
\newtheorem{assump}{Assumption}
\newtheorem{corollary}[theorem]{Corollary}
\newtheorem{lemma}[theorem]{Lemma}
\newtheorem{definition}[theorem]{Definition}
\theoremstyle{definition}
\newtheorem{conjecture*}{Conjecture}
\theoremstyle{plain}
\newenvironment{keywords}{%
    \small\textbf{Keywords:}\space
}{%
    \par
}
\title{Euclidean Distance Deflation Under High-Dimensional Heteroskedastic Noise}
\author{ Keyi Li${^{1,*}}$~~~~Yuval Kluger${^{1,2}}$~~~~Boris Landa${^{1,3,*}}$\\
\small{${^1}$Department of Applied \& Computational Mathematics, Yale University}\\
\small{${^2}$Department of Pathology, Yale University}\\
\small{${^3}$Department of Electrical and Computer Engineering, Yale University}\\
\small{${^*}$Corresponding author. Email: keyi.y.li@yale.edu, boris.landa@yale.edu}
}
\date{}
\begin{document}
\maketitle

\begin{abstract}
        Pairwise Euclidean distance calculation is a fundamental step in many machine learning and data analysis algorithms. In real-world applications, however, these distances are frequently distorted by heteroskedastic noise—a prevalent form of inhomogeneous corruption characterized by variable noise magnitudes across data observations. Such noise inflates the computed distances in a nontrivial way, leading to misrepresentations of the underlying data geometry. In this work, we address the tasks of estimating the noise magnitudes per observation and correcting the pairwise Euclidean distances under heteroskedastic noise. Perhaps surprisingly, we show that in general high-dimensional settings and without assuming prior knowledge on the clean data structure or noise distribution, both tasks can be performed reliably, even when the noise levels vary considerably. Specifically, we develop a principled, hyperparameter-free approach that jointly estimates the noise magnitudes and corrects the distances. We provide theoretical guarantees for our approach, establishing probabilistic bounds on the estimation errors of both noise magnitudes and distances. These bounds, measured in the normalized $\ell_1$ norm, converge to zero at polynomial rates as both feature dimension and dataset size increase. Experiments on synthetic datasets demonstrate that our method accurately estimates distances in challenging regimes, significantly improving the robustness of subsequent distance-based computations. Notably, when applied to single-cell RNA sequencing data, our method yields noise magnitude estimates consistent with an established prototypical model, enabling accurate nearest neighbor identification that is fundamental to many downstream analyses.  
\end{abstract}

\begin{keywords}
Euclidean distance, heteroskedastic noise, inhomogeneous corruption, noise robustness, kernel
\end{keywords}

\section{Introduction}
    Pairwise Euclidean distance calculation is ubiquitous in data analysis, with extensive applications in dimensionality reduction and manifold learning~\cite{MDS,tsne,laplacian_eigenmap,diffusion_map,umap}, community detection and clustering~\cite{k-means,spectral_clustering,spectral_clustering1, spectral_clustering2,spectral_clustering3, spectral_clustering4, community_detection}, image denoising~\cite{image_denoising,image_denoising2,image_denoising3,image_denoising4,image_denoising5}, and signal processing over graph domains~\cite{graph_learning, graph_learning2,graph_learning3, graph_learning4}. For instance, the Euclidean distances are often transformed into graph representations (i.e., in the form of affinity matrices) to encode similarities between data pairs. In these graphs, vertices correspond to data points while edge weights quantify similarities between them. The simplest graph is the $k$-nearest neighbor (KNN) graph, where each data point is connected to its $k$-closest neighbors with identical edge weights. For applications requiring a more nuanced characterization of similarity relationships, the Gaussian kernel is widely used to construct weighted graphs. Regardless of how Euclidean distances are leveraged, the performance of many data analysis methods depends on how accurately the computed distances reflect the underlying data geometry.

    \subsection{The influence of noise}
    
        Modern real-world datasets are typically high-dimensional with large sample sizes. Moreover, many datasets, particularly those from experimental settings, are frequently corrupted by noise from diverse sources, such as measurement errors (e.g., remote sensing~\cite{remote_sensing,remote_sensing2}), inherent stochasticity (e.g., quantum measurement~\cite{quantum_measurement}), or limitations in data acquisition techniques (e.g., mass spectrometry~\cite{mass_spec}).
        
        To analyze the effects of noise on Euclidean distances, we consider an additive noise model:\begin{equation} \label{eq:noise_model}
            \mathbf{y}_i = \mathbf{x}_i + \boldsymbol{\eta}_i,
        \end{equation}
        where $\mathbf{Y} = \{\mathbf{y}_i\}_{i=1}^n \subset \mathbb{R}^m$ represents the observed corrupted dataset, $\mathbf{X} = \{\mathbf{x}_i\}_{i=1}^n \subset \mathbb{R}^m$ denotes the underlying clean dataset, and $\{\boldsymbol{\eta}_i\}_{i=1}^n \subset \mathbb{R}^m$ are independent random noise vectors with zero means.
        
        Under the noise model in~\eqref{eq:noise_model}, for any $i\neq j$, the corrupted squared Euclidean distance $\|\mathbf{y}_i -\mathbf{y}_j\|_2^2$ relates to its clean counterpart $\|\mathbf{x}_i -\mathbf{x}_j\|_2^2$ by:
         \begin{equation} \label{eq:squared_distance_cost}
            \begin{aligned}
                 \|\mathbf{y}_i -\mathbf{y}_j\|_2^2 
                 &= \|(\mathbf{x}_i - \mathbf{x}_j) + (\boldsymbol{\eta}_i - \boldsymbol{\eta}_j)\|^2_2 \\
                & = \|\mathbf{x}_i - \mathbf{x}_j\|^2_2 + \|\boldsymbol{\eta}_i\|_2^2 + \|\boldsymbol{\eta}_j\|^2_2 + 2 \cdot \langle \mathbf{x}_i - \mathbf{x}_j,\boldsymbol{\eta}_i - \boldsymbol{\eta}_j \rangle - 2 \cdot \langle \boldsymbol{\eta}_i , \boldsymbol{\eta}_j \rangle \\
                & = \|\mathbf{x}_i - \mathbf{x}_j\|^2_2 + r_i + r_j + \epsilon_{ij},
            \end{aligned}
        \end{equation}
        where we define $r_i = \|\boldsymbol{\eta}_i\|_2^2$, $r_j = \|\boldsymbol{\eta}_j\|_2^2$, and $\epsilon_{ij} = 2 \cdot \langle \mathbf{x}_i - \mathbf{x}_j,\boldsymbol{\eta}_i - \boldsymbol{\eta}_j \rangle - 2 \cdot \langle \boldsymbol{\eta}_i , \boldsymbol{\eta}_j \rangle$. We note that $\mathbb{E}[\epsilon_{ij}] = 0$ for any $i \neq j$, while $\mathbb{E}[r_i] \geq 0$ and could be large for any $i \in [n]$.

        When the noise vectors $\{\boldsymbol{\eta}_i\}_{i=1}^n$ are identically distributed, their influence on Euclidean distances and Gaussian kernel matrices has been thoroughly investigated in~\cite{homoskedastic_theory1,homoskedastic_theory}. These works consider a high-dimensional regime where the noise is sufficiently delocalized across the feature dimensions, resulting in $\epsilon_{ij} \sim 0$ for any $i\neq j$. Under this setting, when the noise magnitudes $\{{r}_i\}_{i=1}^n$ concentrate around a global constant $c$, it was shown that $\|\mathbf{y}_i -\mathbf{y}_j\|_2^2 \sim \|\mathbf{x}_i -\mathbf{x}_j\|_2^2 + 2c$ for all $i\neq j$. Consequently, Gaussian kernel matrices constructed from the corrupted distances (i.e., $\|\mathbf{y}_i - \mathbf{y}_j\|_2$ for any $i \neq j$) exhibit a consistent multiplicative bias in their off-diagonal entries, which can be effectively removed through standard normalization methods, such as row-stochastic or symmetric normalization~\cite{homoskedastic_theory}.
        
        However, when the noise vectors $\{\boldsymbol{\eta}_i\}_{i=1}^n$ are identically distributed but their magnitudes $\{{r}_i\}_{i=1}^n$ do not concentrate well around a global constant, or in the more general case of heteroskedasticity where $\{\mathbb{E}[r_i]\}_{i=1}^n$ differ across observations, the Euclidean distances can be corrupted in a nontrivial way as demonstrated in~\eqref{eq:squared_distance_cost}. Specifically, the noise magnitudes $r_i$ and $r_j$ may vary substantially across data points and can significantly exceed the true squared distance $\|\mathbf{x}_i - \mathbf{x}_j\|^2_2$, rendering $\|\mathbf{y}_i - \mathbf{y}_j\|^2_2$ unreliable for characterizing the geometric relationship of $\mathbf{x}_i$ and $\mathbf{x}_j$. Yet, we note that such complex noise patterns are prevalent in real-world datasets. A non-exhaustive list of notable examples includes single-cell RNA sequencing data~\cite{scRNAseq_data}, photon imaging~\cite{photon_imaging}, network traffic analysis~\cite{network}, and atmospheric data~\cite{astro}.  Heteroskedasticity also arises naturally when datasets collected at different times and from different sources are merged for joint analysis~\cite{human_atlas}. 

        While few methods explicitly address the impact of heteroskedastic noise on distance computations, numerous approaches have been proposed to implicitly mitigate this by constructing robust distance-based graphs from noisy data (see Section~\ref{sec:related_work}). However, these methods typically lack transparent distance correction mechanisms, making it difficult to analyze how distance distortions are addressed. Moreover, they often require extensive hyperparameter tuning and may suffer from numerical instability or computational inefficiency, further constraining their practical applicability. See Section~\ref{sec:related_work} for a detailed discussion.
        
   \subsection{Our Contribution}
        In this work, we address the problem of estimating the noise magnitudes $\{r_i\}_{i=1}^n$ (defined in~\eqref{eq:squared_distance_cost}) and the clean Euclidean distances $\{\|\mathbf{x}_i - \mathbf{x}_j\|_2\}_{i,j=1}^n$ given only the corrupted distances $\{\|\mathbf{y}_i - \mathbf{y}_j\|_2\}_{i,j=1}^n$.
        Specifically, we focus on the additive noise model in~\eqref{eq:noise_model}, where the underlying clean dataset $\mathbf{X}$ is contaminated by independent, non-identically distributed sub-Gaussian noise $\{\boldsymbol{\eta}_i\}_{i=1}^n$. The noise magnitudes $\{{r}_i\}_{i=1}^n$ are unknown, potentially large (e.g., growing with the feature dimension $m$), and may vary considerably across data. Our analysis is conducted in a high-dimensional regime where the noise does not concentrate too much in any specific direction. In this regime, the pairwise Euclidean distances satisfy $\|\mathbf{y}_i - \mathbf{y}_j\|^2_2 \sim \|\mathbf{x}_i - \mathbf{x}_j\|_2^2 + r_i + r_j$ (i.e., $\epsilon_{ij} \sim 0$) for any $i \neq j$. We further assume that the clean dataset $\mathbf{X}$ is well-behaved and sufficiently large in sample size, such that each $\mathbf{x}_i \in \mathbf{X}$ has several near neighbors in $\mathbf{X}$ whose distances to $\mathbf{x}_i$ vanish as the sample size $n\rightarrow +\infty$. In this setting, we demonstrate that the noise magnitudes $\{r_i\}_{i=1}^n$ can be estimated accurately from the corrupted dataset $\mathbf{Y}$ and then leveraged to recover the clean distances by subtracting these estimates from the noisy distances.

        We propose a principled approach for estimating noise magnitudes and correcting pairwise distances with theoretical guarantees. Our approach is hyperparameter-free, requires no prior knowledge of noise distributions, and imposes no restrictive assumptions on the structure of the data (e.g., low rank or sparsity). Briefly, our method proceeds through three steps: first, we solve two specialized optimization problems to identify, for each $\mathbf{y}_i \in \mathbf{Y}$, two distinct points $\mathbf{y}_j, \mathbf{y}_k \in \mathbf{Y}$ whose clean counterparts $\mathbf{x}_j$ and $\mathbf{x}_k$ are near neighbors of $\mathbf{x}_i$ (such $\mathbf{y}_j$ and $\mathbf{y}_k$ are referred to as true near neighbors of $\mathbf{y}_i$ hereafter); second, we estimate the noise magnitudes using the corrupted distances between each $\mathbf{y}_i$ and its assigned true near neighbors; and finally, we correct the corrupted distances using the estimated noise magnitudes. Each step is computationally tractable, with an overall complexity of $\mathcal{O}(n^3)$. The full algorithmic details are presented in Algorithm~\ref{alg:debias}, with its underlying rationale explained in Section~\ref{sec:method derivation}.

        \begin{algorithm}[!ht]  
            \caption{Noise Magnitude Estimation and Distance Correction}
            \setlength{\baselineskip}{1\baselineskip}
    
            \setlength{\abovedisplayskip}{3pt}
            \setlength{\belowdisplayskip}{2pt}
            \setlength{\abovedisplayshortskip}{3pt}
            \setlength{\belowdisplayshortskip}{2pt}
            
            \textbf{Input:} $\mathbf{Y} = \{\mathbf{y}_i\}_{i=1}^n \subset \mathbb{R}^{m}$: A dataset of $n$ data points, each with $m$ features
            
            \begin{algorithmic}[1] 
                \State Construct the cost matrix $\widetilde{\mathbf{D}} \in \mathbb{R}^{n\times n}$ according to
                \begin{equation}\label{eq:corrupted_dist_mat}
                \widetilde{D}_{ij} = \begin{cases}
                            \|\mathbf{y}_i - \mathbf{y}_j\|^2_2, & i \neq j, \\
                            +\infty, & i = j,
                        \end{cases} \quad \forall i,j \in [n].
                \end{equation}

                \State Identify two distinct true near neighbors for each $\mathbf{y}_i \in \mathbf{Y}$ by solving the following linear sum assignment problem:
    
                \begin{enumerate}[label=(\alph*)]
                    \item 1st round of true near neighbor identification:
                    \begin{equation}\label{eq:alg_LSAP_1}
                        \widetilde{\mathbf{P}}^{(1)} = \operatorname*{argmin}_{\mathbf{P} \in \mathcal{P}^n} \operatorname{Tr}(\mathbf{P}^{T}\widetilde{\mathbf{D}}).
                    \end{equation}
                     Define the permutation function
                     $\widetilde{\sigma}_1$: $[n] \rightarrow [n]$, where $\widetilde{\sigma}_1(i) = j$ if
                     $\widetilde{P}^{(1)}_{ij} = 1$.
    
                    \item Construct a modified cost matrix $\widetilde{\mathbf{D}}^\prime \in \mathbb{R}^{n\times n}$ from $\widetilde{\mathbf{D}}$ to avoid identifying the same true near neighbor pairs already identified in~\eqref{eq:alg_LSAP_1}:
                    \begin{equation}\label{eq:modified_D}
                        \widetilde{D}^{\prime}_{ij} = \begin{cases}
                            \widetilde{D}_{ij}, & \widetilde{\sigma}_1(i) \neq j, \\
                            +\infty, & \widetilde{\sigma}_1(i) = j,
                        \end{cases} 
                        \quad \forall i,j \in[n].
                    \end{equation}
    
                    \item 2nd round of true near neighbor identification:
                    \begin{equation}\label{eq:alg_LSAP_2}
                     \widetilde{\mathbf{P}}^{(2)} = \operatorname*{argmin}_{\mathbf{P} \in \mathcal{P}^n} \operatorname{Tr}(\mathbf{P}^T\widetilde{\mathbf{D}}^{\prime}).
                    \end{equation}
                    Define $\widetilde{\sigma}_2$: $[n] \rightarrow [n]$, where $\widetilde{\sigma}_2(i) = j$ if $\widetilde{P}^{(2)}_{ij} = 1$.
                \end{enumerate}
                
                \State Estimate the noise magnitudes $\hat{\mathbf{r}} = [\hat{r}_1, \hat{r}_2, \ldots, \hat{r}_n]^T$ according to
                    \begin{equation}\label{eq:noise_mag_estimate}
                        \hat{r}_i = \frac{1}{2}\left(\widetilde{D}_{i\widetilde{\sigma}_1(i)} + \widetilde{D}_{i\widetilde{\sigma}_2(i)} - \widetilde{D}_{\widetilde{\sigma}_1(i)\widetilde{\sigma}_2(i)}\right), \quad \forall i \in [n].
                    \end{equation}
                
                \State Estimate the corrected distance matrix $\hat{\mathbf{D}} \in \mathbf{R}^{n\times n}$ according to
                \begin{equation}\label{eq:correction}
                    \hat{\mathbf{D}} = \widetilde{\mathbf{D}} - \hat{\mathbf{r}}\mathbf{1}^{T} - \mathbf{1}\hat{\mathbf{r}}^{T}.
                \end{equation}
            \end{algorithmic} \label{alg:debias}
        \end{algorithm}

        In Section~\ref{sec:theoretical guarantee}, we establish rigorous theoretical guarantees for our proposed approach. We demonstrate that in a suitable high-dimensional regime, for a sufficiently large dataset $\mathbf{Y}$ whose clean counterpart $\mathbf{X}$ satisfies certain non-restrictive geometric conditions, the pairwise Euclidean distances and the noise magnitudes can be estimated with high accuracy. Specifically, the normalized $\ell_1$ estimation errors of these estimates diminish to zero at a polynomial rate with high probability, as the dataset size $n$ and the feature dimension $m$ increase. The decay rate depends explicitly on $m$, $n$, and the sub-Gaussian norm of the noise (see Theorem~\ref{thm:estimation_error_bound}). Notably, such theoretical guarantees hold in broad settings, including the general scenario where the clean dataset $\mathbf{X}$ is sampled from arbitrary distributions supported on bounded geometries (see Theorem~\ref{thm:partition_assumption}) or generated from some mixture models (see Corollary~\ref{col:data_generating_process}). Moreover, our framework accommodates a broad class of noise distributions and remains effective even in the challenging low signal-to-noise ratio regimes where noise magnitudes are comparable to or exceed signal magnitudes.

        Numerically, we corroborate the theoretical guarantees from Section~\ref{sec:theoretical guarantee} in Section~\ref{sec:demonstrating_example}, and then apply our approach to both simulated and real-world datasets. In particular, we consider simulations where the clean dataset $\mathbf{X}$ is sampled from low-dimensional manifolds embedded in $\mathbf{R}^m$ and corrupted with heteroskedastic noise whose magnitudes depend on $\mathbf{X}$. First, in Section~\ref{sec:uniform_density}, we apply our approach to one such simulation and demonstrate that it accurately estimates the noise magnitudes and the pairwise distances. Moreover, we show that: (1) a signal-to-noise estimator based on the noise magnitude estimates enables accurate data quality assessment, and (2) Gaussian kernel matrices and KNN graphs constructed from the corrected distances exhibit improved robustness to heteroskedastic noise; see Figure~\ref{fig:unit_circle}. Second, in Section~\ref{sec:self_tune_eig}, we tackle a challenging scenario involving non-uniform sampling density and demonstrate that self-tuning kernels~\cite{self_tune} can be made robust to heteroskedastic noise when constructed from our corrected distances. Specifically, we show that the Laplacian eigenvectors from self-tuning kernels with corrected distances accurately preserve the underlying data geometry; see Figure~\ref{fig:two_circle}. Finally, in Section~\ref{sec:scRNAseq}, we demonstrate the practical benefits of our method by applying it to a single-cell RNA sequencing dataset without using any prior knowledge of its noise structure. Our approach yields noise magnitude estimates that align with the established Poisson model~\cite{scrnaseq_poisson_model}. The resulting corrected distances produce improved cell-cell similarity graphs where cells of different types are less frequently misidentified as nearest neighbors; see Figure~\ref{fig:pbmc_neighborhood}.
        
    \subsection{Related Work}\label{sec:related_work}
        Methods for addressing the impact of heteroskedastic noise on Euclidean distances have primarily focused on learning robust graphs from noisy data~\cite{boris1, boris2, snekhorn,b_matching}. Specifically,~\cite{boris1} demonstrated that under mild conditions, Gaussian kernel matrices constructed from datasets corrupted by heteroskedastic noise converge in probability to their clean counterparts after doubly stochastic normalization. This normalization is achieved through the Sinkhorn-Knopp algorithm~\cite{sinkhorn}, which iteratively scales the rows and columns of the kernel matrix until it simultaneously achieves unit sums for each row and column. Building on the foundation of~\cite{boris1},~\cite{boris2} leveraged doubly stochastic normalization to develop robust inference tools, including estimators for pairwise Euclidean distances and noise magnitudes. Despite these advances, methods based on doubly stochastic normalization face three significant limitations: (1) they encounter numerical instability and high computational cost when small bandwidth parameters are used for kernel matrices, even though small bandwidths are theoretically desirable for the resulting affinity matrices to approximate important operators~\cite{boris3,operator1,operator2} under appropriate conditions; (2) they typically require nontrivial parameter tuning; and (3) they perform poorly in settings with non-uniform data density, as the Gaussian kernels with universal bandwidth parameters inherently fail to adapt to local density variations across data points.

        Multiple works have sought to simultaneously achieve robustness to heteroskedastic noise and adaptivity to data density~\cite{snekhorn,b_matching}.
        In~\cite{snekhorn}, the authors proposed to construct an affinity matrix from data by solving an entropy-regularized optimal transport problem that exhibits robustness to heteroskedastic noise while adapting to the local density of the data. This framework imposes explicit constraints, requiring the resulting affinity matrix to be row-stochastic and symmetric (thus doubly stochastic) while enforcing identical entropy across rows to achieve adaptivity to data density. A dual descent optimization framework was proposed to solve the constrained optimization problem. However, this approach is computationally expensive (often requiring GPU acceleration) and demands careful selection of regularization parameters to guarantee the convergence of the optimization procedure. Another notable approach is $b$-matching~\cite{b_matching}, which learns a sparse, symmetric, $b$-regular adjacency matrix $\mathbf{A}$ by solving a generalized linear sum assignment problem. The sparsity property makes the learned graphs particularly valuable for computationally demanding applications. Given a corrupted squared distance matrix $\widetilde{\mathbf{D}}$ as defined in~\eqref{eq:corrupted_dist_mat}, $b$-matching seeks the optimal feasible adjacency matrix $\mathbf{A}$ that minimizes the objective function $\sum_{ij} \widetilde{D}_{ij} A_{ij}$. This optimization problem can be solved using loopy belief propagation~\cite{loopy_belief_prop} with a computational complexity of $\mathcal{O}(bn^3)$. Yet practical applications typically require hyperparameter tuning, necessitating repeated optimizations that significantly increase the computational burden.
        
        While these aforementioned methods have made important contributions to enhancing robustness against heteroskedastic noise, our proposed approach offers distinct advantages in three key aspects: (1) we directly estimate clean pairwise distances rather than constructing robust graphs, enabling seamless application to distance-based methods beyond those that rely on graphs; (2) our approach is computationally tractable, numerically stable, and requires no hyperparameter tuning; and (3) we establish theoretical guarantees in the form of probabilistic error bounds for the estimation accuracy of noise magnitudes and pairwise Euclidean distances.
        
        Our work is related to the field of metric nearness~\cite{metric_repair_review}, which addresses distance corruption under a fundamentally different setting. Specifically, metric nearness tackles the problem of optimally restoring metric properties to a given dissimilarity matrix (e.g., distance matrix) that violates metric axioms due to noise. Prominent approaches in this domain include Triangle-Fixing algorithms~\cite{metric_repair_triangle}, which iteratively correct triangle inequality violations, and norm-based optimization methods~\cite{metric_repair_norm}, which identify the closest valid metric matrix by minimizing the $\ell_{p}$ norm distance between the original and the repaired matrices subject to metric constraints. In contrast, our work considers a setting where the corrupted squared distance matrix $\widetilde{\mathbf{D}}$ in~\eqref{eq:corrupted_dist_mat} is computed directly from the noisy dataset $\mathbf{Y}$. Hence, the square roots of its off-diagonal entries automatically satisfy metric properties by construction. 

        An alternative approach to mitigate the effects of heteroskedastic noise is to first denoise the data and then compute distances from the denoised observations. Data denoising has been extensively studied across numerous fields, such as signal processing and machine learning. Notable approaches include Wavelet-based methods~\cite{wavelet_denoise}, which transform data into the wavelet domain to separate noise from meaningful signals through thresholding; Principal Component Analysis~\cite{pca}, which assumes low-rank signal structures and discards components with smaller variances that likely represent noise; and deep learning approaches such as autoencoders~\cite{autoencoder}, which compress corrupted input into latent representations before reconstructing a clean version. However, denoising methods typically require domain-specific knowledge about the underlying structure of the data, such as noise characteristics, low-rank properties, or sparsity patterns, which limits their applications in less-studied domains. In contrast, our approach requires no prior information and is suitable for diverse applications.

    \section{Method Derivation}\label{sec:method derivation}
        In this section, we explain the rationale behind Algorithm~\ref{alg:debias}. We consider a high-dimensional setting similar to those examined in~\cite{boris2,homoskedastic_theory,homoskedastic_theory1}, where the feature dimension $m$ is sufficiently large and the noise vectors $\{\boldsymbol{\eta}_i\}_{i=1}^n$ in~\eqref{eq:noise_model} do not concentrate excessively in any particular direction. This condition is formalized as Assumption~\ref{assump:noise} in Section~\ref{sec:theoretical guarantee}. Importantly, in this setting, the term $\epsilon_{ij}$ in~\eqref{eq:squared_distance_cost} concentrates around zero with high probability (see Lemma~\ref{lem:epsilon_bound_in_prob}). This concentration property leads to the following asymptotic relationship for any off-diagonal entries of the corrupted distance matrix $\widetilde{\mathbf{D}}$ in~\eqref{eq:corrupted_dist_mat}:
            \begin{equation}\label{eq:approx_ep}
                \widetilde{D}_{ij} \sim D_{ij} + {r}_i + {r}_j,
            \end{equation}
            where $r_i$ and $r_j$ are noise magnitudes defined in~\eqref{eq:squared_distance_cost}, and $D_{ij}$ is the corresponding entry in the clean distance matrix $\mathbf{D} \in \mathbb{R}^{n \times n}$. Specifically, $\mathbf{D}$ is constructed from the clean data $\mathbf{X}$ as
            \begin{equation}\label{eq:dist_mat}
            D_{ij} = 
            \begin{cases}
                \|\mathbf{x}_i - \mathbf{x}_j\|^2_2, & i \neq j, \\
                +\infty, & i = j,
            \end{cases}
            \end{equation}
        for any $i,j \in [n]$. This construction mirrors that of $\widetilde{\mathbf{D}}$ in~\eqref{eq:corrupted_dist_mat}. In particular, the diagonal entries of $\mathbf{D}$ and $\widetilde{\mathbf{D}}$ are set to $+\infty$ to exclude trivial self-comparisons and to serve important methodological purposes that will be detailed later. The asymptotic relationship in~\eqref{eq:approx_ep} enables the recovery of $D_{ij}$ by first estimating and then subtracting $r_i$ and $r_j$ from $\widetilde{D}_{ij}$.
            
        To estimate the squared noise magnitudes $\{{r}_i\}_{i=1}^n$ defined in~\eqref{eq:squared_distance_cost}, we leverage the geometric properties of the clean dataset $\mathbf{X}$. Specifically, when $\mathbf{X}$ is sufficiently large and well-behaved (see Assumption~\ref{assump:generalized_cluster}), the distances between any clean data point $\mathbf{x}_i \in \mathbf{X}$ and its near neighbors vanish to 0 as the sample size $n \rightarrow +\infty$. This property implies that if we can identify pairs of data points in $\mathbf{Y}$ whose clean counterparts are geometrically close in $\mathbf{X}$, the clean distances between these data pairs approach 0 asymptotically. In particular, suppose we know a bijective function $\widetilde{\sigma}: [n] \rightarrow [n]$ that assigns to each $\mathbf{y}_i \in \mathbf{Y}$ a distinct point $\mathbf{y}_{\widetilde{\sigma}(i)} \in \mathbf{Y}$ (with $\sigma(i) \neq i$ for all $i \in [n]$) such that their clean counterparts $\mathbf{x}_i$ and $\mathbf{x}_{\widetilde{\sigma}(i)}$ are close (i.e., $D_{i \widetilde{\sigma}(i)} = \Vert \mathbf{x}_i - \mathbf{x}_{\widetilde{\sigma}(i)}\Vert_2^2 \sim 0$ as $n\rightarrow \infty$). Then, as $m$ and $n$ tend to infinity, the corrupted distance $\widetilde{D}_{i\widetilde{\sigma}(i)}$ for any $i \in [n]$ satisfies: \begin{equation}\label{eq:approx_C}
            \widetilde{D}_{i\widetilde{\sigma}(i)}\sim r_i + r_{\widetilde{\sigma}(i)}.
        \end{equation}

        We extend this insight by considering two true near neighbors of $\mathbf{y}_i$, denoted as $\mathbf{y}_{\widetilde{\sigma}_1(i)}$ and $\mathbf{y}_{\widetilde{\sigma}_2(i)}$ (i.e., $\mathbf{x}_{\widetilde{\sigma}_1(i)}$ and $\mathbf{x}_{\widetilde{\sigma}_2(i)}$ are geometrically close to $\mathbf{x}_i$). Given that $D_{i\widetilde{\sigma}_1(i)}\sim 0$ and $D_{i\widetilde{\sigma}_2(i)} \sim 0$, the triangle inequality guarantees that $D_{\widetilde{\sigma}_1(i)\widetilde{\sigma}_2(i)} \sim 0$, yielding the following approximate $3\times3$ linear system for large $m$ and $n$:
        \begin{equation}    \label{eq:linear_system}  
            \begin{bmatrix}
                \widetilde{D}_{i\widetilde{\sigma}_1(i)} \\
                \widetilde{D}_{i\widetilde{\sigma}_2(i)} \\
                \widetilde{D}_{\widetilde{\sigma}_1(i)\widetilde{\sigma}_2(i)}
                \end{bmatrix}
                \sim
                \begin{bmatrix}
                r_i + r_{\widetilde{\sigma}_1(i)} \\
                r_i + r_{\widetilde{\sigma}_2(i)} \\
                r_{\widetilde{\sigma}_1(i)} + r_{\widetilde{\sigma}_2(i)}
                \end{bmatrix}
                =
                \begin{bmatrix}
                1 & 1 & 0 \\
                1 & 0 & 1 \\
                0 & 1 & 1
                \end{bmatrix}
                \begin{bmatrix}
                r_i \\
                r_{\widetilde{\sigma}_1(i)} \\
                r_{\widetilde{\sigma}_2(i)}
                \end{bmatrix}
        \end{equation}
        This linear system admits a unique solution for each $i\in [n]$ (solved by treating the asymptotic relationships as exact equalities), yielding the noise magnitude estimates $\{\hat{r}_i\}_{i=1}^n$ in~\eqref{eq:noise_mag_estimate}. The asymptotic relationships that underpin Algorithm~\ref{alg:debias} are summarized in Figure~\ref{fig:approx}.
        \begin{figure}[!ht]
            \centering
            \includegraphics[width=1\textwidth]{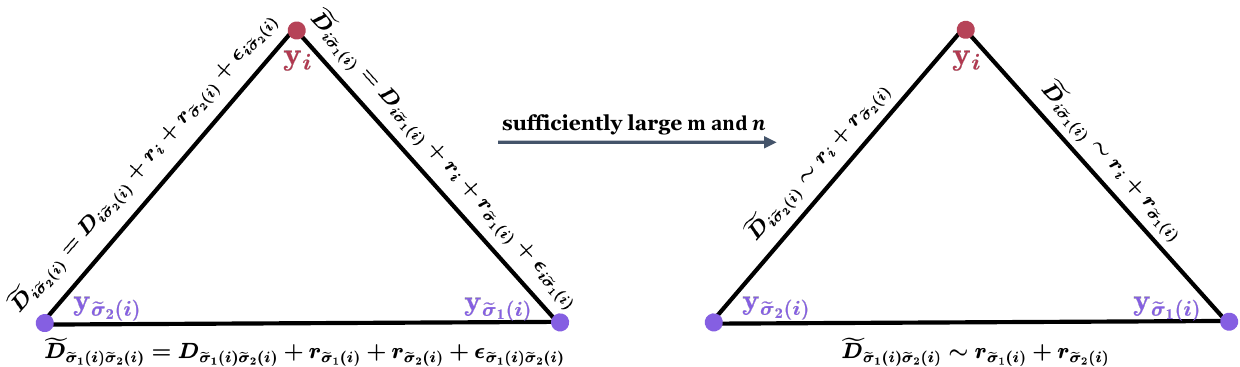}
            \caption{Key phenomena underlying our approach in the regime of large sample size ($n$) and high dimensionality ($m$). For the identified true near neighbors of $\mathbf{y}_i$, denoted by $\mathbf{y}_{\widetilde{\sigma}_1(i)}$ and $\mathbf{y}_{\widetilde{\sigma}_2(i)}$: (1) In high dimensions, $\epsilon_{i\widetilde{\sigma}_1(i)}, \epsilon_{i\widetilde{\sigma}_2(i)}, \epsilon_{\widetilde{\sigma}_1(i)\widetilde{\sigma}_2(i)} \sim 0$; (2) With large sample size, $D_{i\widetilde{\sigma}_1(i)}, D_{i\widetilde{\sigma}_2(i)}, D_{\widetilde{\sigma}_1(i)\widetilde{\sigma}_2(i)} \sim 0$.}
            \label{fig:approx}
        \end{figure}
            
        We now discuss how to identify such true near neighbors using only the corrupted dataset $\mathbf{Y}$. We note that greedily selecting nearest neighbors for each $\mathbf{y}_i \in \mathbf{Y}$ based on $\widetilde{\mathbf{D}}$ is inadequate. Specifically, as shown in~\eqref{eq:squared_distance_cost}, the noise magnitudes $\{r_i\}_{i=1}^n$ distort the distances in $\widetilde{\mathbf{D}}$ in a nontrivial way, which may create misleading proximity relationships where corrupted data points appear geometrically close even when their clean counterparts are distant.
            
        To identify true near neighbors, we propose solving an optimization problem that satisfies two key criteria: (1) when optimized using the clean distance matrix $\mathbf{D}$, it correctly identifies near neighbors for each $\mathbf{x}_i \in \mathbf{X}$, and (2) when optimized using the corrupted distance matrix $\widetilde{\mathbf{D}}$, its solution remains invariant to the additive bias terms arising from noise magnitudes (specifically, the term $r_i + r_j$ in~\eqref{eq:squared_distance_cost} for any $i\neq j$). In particular, we employ the linear sum assignment problem (LSAP)~\cite{siam_assignment_book,assignment_algorithm}, a classical combinatorial optimization problem that admits efficient solution via the celebrated Hungarian algorithm~\cite{hungarian}. The LSAP aims to identify the optimal bijective (one-to-one) matching between two sets of objects that minimizes the total costs of pairwise matching~\cite{siam_assignment_book}. It has found applications in numerous domains, including personnel assignment (optimally allocating workers to jobs) and graph theory (identifying minimum-weight perfect matchings in bipartite graphs). Formally, given a cost matrix $\mathbf{C}\in \mathbb{R}^{n\times n}$ where $C_{ij}$ represents the cost of assigning object $i$ from the first set to object $j$ from the other set, the LSAP is formulated as:
        \begin{equation}\label{eq:lsap_def}
            \mathbf{P}^\ast 
                 = \operatorname*{argmin}_{\mathbf{P} \in \mathcal{P}^n} \operatorname{Tr}(\mathbf{P}^{T}\mathbf{C}) = \operatorname*{argmin}_{\mathbf{P} \in \mathcal{P}^n} \sum_{i=1}^n\sum_{j=1}^n P_{ij}C_{ij},
        \end{equation}
        where $\mathcal{P}^n$ is the set of permutation matrices in $\mathbb{R}^{n\times n}$ and $\operatorname{Tr}(\cdot)$ is the Trace operator. The optimal assignment is encoded in $\mathbf{P}^\ast$, with $P^\ast_{ij}$ = 1 indicating that object $i$ from the first set is matched to object $j$ from the second set. In our context, when using the clean distance matrix $\mathbf{D}$ as the cost matrix, each entry $D_{ij}$ represents the cost of assigning $\mathbf{x}_j$ as the near neighbor of $\mathbf{x}_i$. By setting the diagonal of $\mathbf{D}$ to $+\infty$ in~\eqref{eq:dist_mat}, we prevent any self-assignment (i.e. $P^{\ast}_{ii} \neq 1$ for any $i \in [n]$), ensuring that the optimal solution identifies one distinct near neighbor for each $\mathbf{x}_i \in \mathbf{X}$, in a globally optimal manner.

        We now explain the aforementioned invariance property of this optimization problem. When using $\widetilde{\mathbf{D}}$ as the cost matrix, given that its diagonal is set to $+\infty$, the LSAP only considers the off-diagonal entries. For any feasible $\mathbf{P}$, observe that when $\epsilon_{ij} \sim 0$ for all $i\neq j$,
        \begin{equation} \label{eq:LSAP_robust}
            \begin{aligned}
            \operatorname{Tr}(\mathbf{P}^T\widetilde{\mathbf{D}}) 
            &= \sum_{i,j = 1}^n P_{ij} \widetilde{D}_{ij}   
            =  \sum_{i,j = 1}^n P_{ij} \left( D_{ij} + r_i + r_j + \epsilon_{ij} \right)
            \sim \sum_{i,j = 1}^n P_{ij} \left( D_{ij} + r_i + r_j \right) \\
            &= \sum_{i,j = 1}^n P_{ij}  D_{ij} +  \sum_{i=1}^n r_i \sum_{j=1}^n P_{ij} + \sum_{j=1}^n r_j \sum_{i=1}^n P_{ij} \\
            & = \sum_{i,j = 1}^n P_{ij}  D_{ij} + \sum_{i=1}^n r_i \cdot 1 + \sum_{j=1}^n r_j \cdot 1  \\
            &= \operatorname{Tr}(\mathbf{P}^T\mathbf{D}) + 2\sum_{i=1}^n r_i.
            \end{aligned}
        \end{equation}
        Since the term $2\sum_{i=1}^n r_i$ is independent of the assignment matrix $\mathbf{P}$, it does not affect the optimization. Consequently, in our setting, using $\widetilde{\mathbf{D}}$ as the cost matrix in the LSAP optimization yields the same neighbor assignment as using $\mathbf{D}$. This enables the identification of true near neighbors using only the corrupted dataset $\mathbf{Y}$. 
            
        To identify two distinct true near neighbors for each $\mathbf{y}_i \in \mathbf{Y}$, our algorithm solves the LSAP twice: first with the cost matrix $\widetilde{\mathbf{D}}$ from~\eqref{eq:corrupted_dist_mat}, then with $\widetilde{\mathbf{D}}^{\prime}$ from~\eqref{eq:modified_D}. In the second LSAP, entries of $\widetilde{\mathbf{D}}^\prime$ that correspond to the neighbor pairs identified in the first round are set to $+\infty$ to ensure that each $\mathbf{y}_i \in \mathbf{Y}$ is assigned a neighbor distinct from the first round. These neighbor pairs are then utilized to compute noise magnitude estimates $\{\hat{r}_i\}_{i=1}^n$ via~\eqref{eq:noise_mag_estimate}, which are subsequently used to derive the corrected distance matrix $\hat{\mathbf{D}}$ according to~\eqref{eq:correction}.

    \section{Theoretical Guarantees}\label{sec:theoretical guarantee}

        In this section, we establish theoretical guarantees for our approach. Specifically, we analyze the estimation accuracy of the squared noise magnitudes $\{{\hat{r}}_i\}_{i=1}^n$ from~\eqref{eq:noise_mag_estimate} and the corrected distance matrix $\hat{\mathbf{D}}$ from~\eqref{eq:correction}. We begin by formalizing our problem setting through three key assumptions: the scaling relationship between the feature dimension $m$ and the dataset size $n$ (Assumption~\ref{assump:m_and_n}), the sub-Gaussian properties of the heteroskedastic noise (Assumption~\ref{assump:noise}), and the geometric characteristics of the underlying clean dataset $\mathbf{X}$ (Assumption~\ref{assump:generalized_cluster}). We establish the broad relevance of Assumption~\ref{assump:generalized_cluster} by proving that it covers many data generative models, including models that sample data according to arbitrary distributions supported on bounded hypercubes (Theorem~\ref{thm:partition_assumption}) and mixture models that generate data from bounded geometries with potentially different intrinsic dimensions embedded in $\mathbb{R}^m$ (Corollary~\ref{col:data_generating_process}). Subsequently, we connect the geometric properties of $\mathbf{X}$ in Assumption~\ref{assump:generalized_cluster} to Algorithm~\ref{alg:debias} by deriving upper bounds for the LSAP costs associated with steps~\eqref{eq:alg_LSAP_1} and~\eqref{eq:alg_LSAP_2}; see Lemma~\ref{lem:Geometric LSAP Cost}. Finally, we use Lemma~\ref{lem:Geometric LSAP Cost} to establish our main theoretical result (Theorem~\ref{thm:estimation_error_bound}): the probabilistic bounds on the normalized $\ell_1$ estimation errors of $\{{\hat{r}}_i\}_{i=1}^n$ and $\hat{\mathbf{D}}$, which decay to zero at polynomial rates as $m,n \rightarrow +\infty$.
        
        We begin by describing our assumption on dimensionality and sample size. Specifically, we require the feature dimension $m$ to grow at least polynomially with respect to the sample size $n$, as formalized in Assumption~\ref{assump:m_and_n} below.
        \begin{assump} \label{assump:m_and_n}
            $m\geq n^\gamma$ for some constant $\gamma > 0$.
        \end{assump} 
        We note that Assumption~\ref{assump:m_and_n} generalizes to $m \geq c n^\gamma$ for any constant $c > 0$; we set $c = 1$ for simplicity. 

        We recall that a random vector $\boldsymbol{\xi} \in \mathbb{R}^m$ is called sub-Gaussian if for any vector $\mathbf{u} \in \mathbb{R}^m$, the standard inner product $\langle \boldsymbol{\xi} , \mathbf{u}\rangle$ is a sub-Gaussian random variable~\cite{HDP_book}. For each $\mathbf{x}_i \in \mathbf{X}$, we assume that the corresponding noise vector $\boldsymbol{\eta}_i$ is sampled independently from a sub-Gaussian distribution $\boldsymbol{\eta}(\mathbf{x}_i)$ with zero mean (i.e., $\mathbb{E}[\boldsymbol{\eta}(\mathbf{x}_i)] = \mathbf{0}$). The notation $\boldsymbol{\eta}(\mathbf{x}_i)$ indicates that the noise distribution may depend on the data point $\mathbf{x}_i$, allowing for heteroskedasticity across the dataset. Let $\|\boldsymbol{\eta}(\mathbf{x}_i)\|_{\psi_2}$ be the sub-Gaussian norm of $\boldsymbol{\eta}(\mathbf{x}_i)$, defined as:
        \begin{equation}
            \|\boldsymbol{\eta}(\mathbf{x}_i)\|_{\psi_2} = \sup_{\|\mathbf{u}\|_2 = 1, \boldsymbol{\eta}\sim \boldsymbol{\eta}(\mathbf{x}_i)} \|\langle \boldsymbol{\eta}, \mathbf{u}\rangle\|_{\psi_2},
        \end{equation} 
        where $\|\cdot\|_{\psi_2}$ on the right-hand side represents the sub-Gaussian norm of a random variable~\cite{HDP_book}. We make the following assumption on the sub-Gaussian norm of noise vectors.
        \begin{assump} \label{assump:noise}
            $E := \max_{\mathbf{x}_i \in \mathbf{X}} \|\boldsymbol{\eta}(\mathbf{x}_i)\|_{\psi_2} \leq \frac{C}{{m^{1/4}\sqrt{\log m}}}$
        for some absolute constant $C > 0$.
        \end{assump}
        
         Assumption~\ref{assump:noise} covers a broad class of noise distributions. For example, an $m$-dimensional Gaussian distribution $\mathcal{N}(0, \mathbf{\Sigma})$ with spectral norm $\|\mathbf{\Sigma}\|_2 \leq C/(m^{1/2} \log m)$ constitutes a qualified distribution class. This includes the special case of  $\mathbf{\Sigma} = \frac{1}{m} \mathbf{I}_m$, where the coordinates of noise vectors are independent and identically distributed. Notably, under Assumption~\ref{assump:noise}, the noise magnitude $\|\boldsymbol{\eta}(\mathbf{x}_i)\|_2$ is allowed to exceed $\|\mathbf{x}_i\|_2$ (assuming $\|\mathbf{x}_i\|_2 \leq 1$ for all $\mathbf{x}_i \in \mathbf{X}$ after proper normalization). For instance, when $\boldsymbol{\eta}(\mathbf{x}_i) = \mathcal{N}\left(\mathbf{0}, \mathbf{\Sigma}\right)$ with $ \mathbf{\Sigma}= \mathbf{I}_m / ({m^{1/2} \log m})$, we have $\mathbb{E}\|\boldsymbol{\eta}(\mathbf{x}_i)\|_2^2 = \operatorname{Tr}\left(\mathbf{\Sigma}\right) = \sqrt{m}/\log m$, where the expected squared noise magnitude grows with $m$ and can significantly exceed $\|\mathbf{x}_i\|^2_2 \leq 1$. Furthermore, Assumption~\ref{assump:noise} also accommodates noise vectors whose coordinates are not independent or identically distributed and permits different noise distributions for different $\mathbf{x}_i \in \mathbf{X}$. 

         We now turn to describe our requirements on the clean dataset $\mathbf{X}$. Before stating our main assumption (Assumption~\ref{assump:generalized_cluster}), we define the notions of \textit{diameter} and \textit{data partition} below.

        \begin{definition} \label{def:diameter}
            The diameter of a subset $\mathbf{S} \subset \mathbf{X}$ is defined as 
            $\operatorname{diam} (\mathbf{S}) := \sup\limits_{\substack{\mathbf{x}_i, \mathbf{x}_j \in \mathbf{S}}}
            \Vert \mathbf{x}_i - \mathbf{x}_j \Vert_2$.
        \end{definition}

        \begin{definition} \label{def:partition}
            A partition of the dataset $\mathbf{X}$ is defined as $\mathcal{P}_{\mathbf{X}}: =\{\mathcal{P}_i\}_{i=1}^{k}$ satisfying 
            \[
                \mathbf{X} = \bigcup_{i=1}^k \mathcal{P}_i, \quad   \text{and} \quad \mathcal{P}_i \cap \mathcal{P}_j = \emptyset \text{, } \forall i \neq j.
            \]
        \end{definition}

        \begin{assump} \label{assump:generalized_cluster}
            Given a clean dataset $\mathbf{X} = \{\mathbf{x}_i\}_{i=1}^n$ with $\|\mathbf{x}_i\|_2 \leq 1$ for all $i \in [n]$, there exists a partition $\mathcal{P}_{\mathbf{X}} = \{\mathcal{P}_j\}_{j=1}^{k}$ such that each subset $\mathcal{P}_j$ contains at least four points (i.e., $|\mathcal{P}_j|\geq 4$) and the weighted average squared diameter (weighted by the size of each subset) satisfies
            \begin{equation}
            \frac{1}{n} \sum_{\mathcal{P}_j \in \mathcal{P}_{\mathbf{X}}} |\mathcal{P}_j| \left(\operatorname{diam}(\mathcal{P}_j)\right)^2 \leq cn^{-\alpha},
            \end{equation}
            for some constant $c \geq 0$ and $\alpha > 0$, where $|\mathcal{P}_j|$ denotes the cardinality of the subset $\mathcal{P}_j$.
        \end{assump}

        The constraint $\|\mathbf{x}_i\|_2 \leq 1$ generalizes to $\|\mathbf{x}_i\|_2 \leq c$ for any positive constant $c$, as $\mathbf{X}$ can always be normalized appropriately. Assumption~\ref{assump:generalized_cluster} provides a quantitative characterization of the geometric properties of a sufficiently large and well-behaved dataset $\mathbf{X}$. It states that such datasets can be partitioned into subsets, each satisfying a minimum size constraint, such that on average, the maximum squared distance between any pair of clean data points within the same subset decreases polynomially as the dataset size $n$ grows. This property allows us to justify~\eqref{eq:approx_C}: provided the near neighbor for each $\mathbf{x}_i$ is correctly identified, $D_{i\widetilde{\sigma}(i)}$—the squared distance between $\mathbf{x}_i$ and its assigned near neighbor $\mathbf{x}_{\widetilde{\sigma}(i)}$—approaches zero as $n \rightarrow +\infty$.
        
        Assumption~\ref{assump:generalized_cluster} is non-restrictive and encompasses a wide range of practical scenarios. The simplest example is when each $\mathbf{x}_i \in \mathbf{X}$ takes a value from a finite set $\{\boldsymbol{\mu}_1, \ldots, \boldsymbol{\mu}_k\}$, with at least four data points sharing the value of each $\boldsymbol{\mu}_j$. Under this setup, we can construct a partition $\mathcal{P}_\mathbf{X}$ by grouping points with identical values in the same subset. Formally, we define each subset as $\mathcal{P}_j := \{\mathbf{x}_i \in \mathbf{X} \mid \mathbf{x}_i = \boldsymbol{\mu}_j\}, \text{ } \forall j \in [k]$. With this construction, $\operatorname{diam}(\mathcal{P}_j) = 0$ for all subsets $\mathcal{P}_j \in \mathcal{P}_\mathbf{X}$. Consequently, $\frac{1}{n} \sum_{\mathcal{P}_j \in \mathcal{P}_{\mathbf{X}}} |\mathcal{P}_j| \left(\operatorname{diam}(\mathcal{P}_j)\right)^2 = 0$, representing a special case of Assumption~\ref{assump:generalized_cluster} where $\alpha \rightarrow +\infty$ and $c = 0$. Such a structure of $\mathbf{X}$ naturally arises when the corrupted dataset $\mathbf{Y}$ is generated from a mixture model, such as the Gaussian mixture model. Specifically, each $\mathbf{y}_i \in \mathbf{Y}$ can be expressed as $\mathbf{y}_i = \mathbf{x}_i + \boldsymbol{\eta}_i$, where $\mathbf{x}_i = \boldsymbol{\mu}_l$ represents the centroid of the mixture component $l$ from which $\mathbf{y}_i$ is generated, and $\boldsymbol{\eta}_i$ is a noise vector sampled from a centered, component-specific Gaussian distribution. 
        
        Aside from the simple case of a mixture model, Assumption~\ref{assump:generalized_cluster} holds with high probability for clean datasets generated by sampling independently from any probability distribution whose support is contained within a unit hypercube, as described in Theorem~\ref{thm:partition_assumption} below.
        \begin{theorem}\label{thm:partition_assumption}
        Let $\mathbf{Z} = \{\mathbf{z}_i\}_{i=1}^{n}$ be a set of independent samples drawn from the unit hypercube $\mathcal{Q} = [0,1]^d$ according to a probability distribution $f$.

        \begin{enumerate}[label=(\alph*)]

            \item Given any distribution $f$, for any $t > 0$, there exists $n_0(d, t) > 0$ such that for any $n > n_0(d, t)$, with probability at least $1 - n^{-t}$, $\mathbf{Z}$ satisfies Assumption~\ref{assump:generalized_cluster} with any $0 < \alpha < \frac{1}{d+2}$. In particular, there exists a partition $\mathcal{P}_{\mathbf{Z}} = \{\mathcal{P}_j\}_{j=1}^{l}$ that satisfies the following condition with probability at least $1 - n^{-t}$:
            \begin{equation}\label{eq:case(a)}
                \frac{1}{n} \sum_{\mathcal{P}_j \in \mathcal{P}_{\mathbf{Z}}} |\mathcal{P}_j| \left(\operatorname{diam}(\mathcal{P}_j)\right)^2 \leq 4 d  \left(\frac{(\log n)^2}{n}\right)^\frac{1}{d+2}, \quad \text{and} \quad  |\mathcal{P}_j| \geq 4 \text{, } \forall j \in [l].         
            \end{equation}
            
            \item If the distribution $f$ is uniformly bounded away from zero (i.e., $\exists a > 0 \text{ s.t. } f(\mathbf{z}) \geq a$ for all $\mathbf{z} \in \mathcal{Q}$), then there exists $t_0 > 0$ and  $n_0(a,d,t) > 0$, such that for any $t>t_0$ and any $n > n_0(a,d,t)$, with probability at least $1 - n^{-t}$, $\mathbf{Z}$ satisfies Assumption~\ref{assump:generalized_cluster} with any $0<\alpha < \frac{2}{d}$. Specifically, there exists a partition $\mathcal{P}_{\mathbf{Z}} = \{\mathcal{P}_j\}_{j=1}^{l}$ that satisfies the following conditions with probability at least $1 - n^{-t}$:
            \begin{equation}\label{eq:case(b)}
                \frac{1}{n} \sum_{\mathcal{P}_j \in \mathcal{P}_{\mathbf{Z}}} |\mathcal{P}_j| \left(\operatorname{diam}(\mathcal{P}_j)\right)^2 \leq t^4 d  \left(\frac{\log n}{n}\right)^\frac{2}{d}, \quad \text{and} \quad  |\mathcal{P}_j| \geq 4 \text{, } \forall j \in [l].           
            \end{equation}
        \end{enumerate}
        \end{theorem}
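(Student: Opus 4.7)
The plan is to construct $\mathcal{P}_{\mathbf{Z}}$ by overlaying a uniform grid of sub-cubes of side length $h$ on $\mathcal{Q} = [0,1]^d$ and grouping each sample by the sub-cube it occupies. Any such group lies inside a cube of side $h$ and hence has diameter at most $\sqrt{d}\,h$, so the weighted average squared diameter contributed by ``dense'' cubes (those containing at least four points) is at most $d h^2$. ``Sparse'' cubes (with at most three points) must be merged into groups of size $\geq 4$. The choice of $h$ will differ between (a) and (b); in both cases the probabilistic statements come from Chernoff bounds on cube counts combined with a union bound over the $\asymp h^{-d}$ cubes.

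For part (b), the lower bound $f \geq a$ gives $\mathbb{P}[\mathbf{z} \in C] \geq a h^d$ for every cube $C$, so the count $N_C$ is binomial with mean $\geq n a h^d$. Setting $h = (Kt\log n/(na))^{1/d}$ for a sufficiently large constant $K$ makes this mean at least $Kt \log n$; Chernoff yields $\Pr[N_C < 4] \leq n^{-(t+1)}$, and a union bound over the at most $n$ cubes shows that with probability $\geq 1 - n^{-t}$ every cube has $\geq 4$ points. Declaring each cube a subset of $\mathcal{P}_{\mathbf{Z}}$ gives the weighted squared diameter $\leq d h^2 = O(t\,d(\log n/n)^{2/d})$, which fits inside $t^4 d(\log n/n)^{2/d}$ once $t \geq t_0$, the threshold being dictated by the dependence of $K$ on $a$ and $d$.

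For part (a), no lower bound on $f$ is available, so some cubes are legitimately sparse. Set $h \asymp ((\log n)^2/n)^{1/(2(d+2))}$, chosen so that $dh^2 = d((\log n)^2/n)^{1/(d+2)}$ matches the target rate. Dense cubes contribute at most $dh^2$ to the weighted average squared diameter since their sizes sum to at most $n$, while the sparse cubes collectively contain at most $3 h^{-d}$ points deterministically (each holds at most three points and there are $\leq h^{-d}$ cubes). Aggregate the sparse points into groups of size between $4$ and $7$, merging with an adjacent dense group when only a handful of sparse points remain locally, so that each merged group has diameter $\leq 2\sqrt{d}\,h$ and the total sparse contribution is $O(dh^{-d}/n)$. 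With our choice of $h$, this sparse term decays faster in $n$ than $dh^2$, so the weighted average squared diameter is bounded by $4d((\log n)^2/n)^{1/(d+2)}$ after absorbing constants. The $(\log n)^2$ factor together with the probability $1-n^{-t}$ come from the uniform concentration needed to certify the dense/sparse classification across the $\asymp h^{-d}$ grid cells.

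The main obstacle is part (a): without a lower bound on $f$, a standard Chernoff-plus-union-bound cannot make every cube dense, and the merging of sparse cubes into size-$\geq 4$ groups must not inflate the weighted squared diameter past the target rate. Balancing the dense contribution $dh^2$ against the sparse contribution $dh^{-d}/n$ forces the exponent $1/(d+2)$, and requiring the cube-count concentration to hold uniformly across the $\asymp h^{-d}$ grid cells produces the polylogarithmic factor. The minimum-size-four requirement is accommodated in the merging step, which is the only place where groups of diameter exceeding $\sqrt{d}\,h$ are created, and one must verify that their combined contribution remains of lower order.
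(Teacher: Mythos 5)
Your overall skeleton—uniform grid of side $h$, each cell a subset, with the scale $h$ chosen to balance the cell diameter against the leftover mass—is the same as the paper's, and your part (b) is essentially the paper's argument (the paper bounds $\Pr\{|\mathcal{B}_i|\in\{1,2,3\}\}$ by a direct binomial computation rather than Chernoff, and likewise needs $t>t_0$ to absorb the constants; both are fine).

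The gap is in part (a), in the treatment of sparse cells. You claim the points in cells holding at most three samples can be ``aggregated \ldots into groups of size between $4$ and $7$, merging with an adjacent dense group,'' so that \emph{each merged group has diameter $\leq 2\sqrt{d}\,h$}. This is not achievable in general: with no lower bound on $f$, a sparse cell can be isolated (all of its $3^d-1$ neighbors empty), and two sparse cells can sit in opposite corners of $\mathcal{Q}$, so any group of size $\geq 4$ formed from their points has diameter of order $\sqrt{d}$, not $O(h)$. Take $f$ supported on a few well-separated atoms plus a diffuse background and the ``adjacent dense group'' simply does not exist. The repair is to abandon the $O(h)$ diameter for the sparse part entirely: lump \emph{all} sparse points into a single subset $\mathcal{R}$ (borrowing a few points from a dense cell if $|\mathcal{R}|\leq 3$), accept $\operatorname{diam}(\mathcal{R})\leq\sqrt{d}$, and win because $|\mathcal{R}|\leq 3h^{-d}$, so its weighted contribution is at most $3d\,h^{-d}/n$, which with $h\asymp((\log n)^2/n)^{1/(2(d+2))}$ is $o\bigl(d((\log n)^2/n)^{1/(d+2)}\bigr)$. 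This is exactly the paper's aggregated region $\mathcal{R}$, and, tellingly, it is also exactly what your own final accounting assumes: a bound of $O(d\,h^{-d}/n)$ corresponds to diameter $\sqrt{d}$, whereas diameter $2\sqrt{d}\,h$ would give the much smaller $O(d\,h^{2-d}/n)$. So your arithmetic already prices in the correct fix even though the stated mechanism does not deliver it. One further remark: because you classify cells by their \emph{empirical} counts (sparse $=$ at most three observed points) rather than by their probability mass $p_i$ as the paper does, the bound $|\mathcal{R}|\leq 3h^{-d}$ is deterministic and the ``uniform concentration needed to certify the dense/sparse classification'' you invoke at the end is not actually needed; with the single-region fix, your part (a) would hold with probability one, which is a genuine (if minor) simplification over the paper's Hoeffding-plus-union-bound route.
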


        Theorem~\ref{thm:partition_assumption} establishes an explicit relationship for the convergence rate $\alpha$ in Assumption~\ref{assump:generalized_cluster} and the dimension parameter $d$. Case (a) addresses the most general scenario, where the distribution $f$, whose support is contained inside the unit hypercube $\mathcal{Q}$, may be discontinuous, non-differentiable, unbounded, and may assign zero probability to regions within $\mathcal{Q}$. Under these conditions, for any dataset $\mathbf{Z}$ consisting of $n$ independent samples from $f$, with high probability, there exists a partition of $\mathbf{Z}$ satisfying the minimum size requirement, where the average squared diameter decreases polynomially with $n$ at a rate arbitrarily close to $\frac{1}{d+2}$. In cases where $f$ is bounded below by a positive constant, the rate can be improved arbitrarily close to $\frac{2}{d}$. The conditions required by Theorem~\ref{thm:partition_assumption} are general enough to encompass a broad class of data generating processes, including sampling from manifolds with intrinsic dimension at most $d$ and their unions. The proof of Theorem~\ref{thm:partition_assumption} is presented in Supplement~\ref{sec:proof_partition}. 

        We emphasize that the dimension $d$ in Theorem~\ref{thm:partition_assumption} (referred to as intrinsic dimension hereafter) is fundamentally distinct from the feature dimension $m$ (also called the ambient or extrinsic dimension). In Corollary~\ref{col:data_generating_process} below, we describe a general data generating process where the resulting dataset $\mathbf{X}$ is sampled from a mixture of geometries, each with a potentially different intrinsic dimension and embedded in a common ambient space $\mathbb{R}^m$.
        \begin{corollary}\label{col:data_generating_process}
            Let $f_1,\ldots,f_k$ be probability distributions supported on hypercubes $\mathcal{Q}_1 = [0,\frac{1}{\sqrt{d_1}}]^{d_1}, \ldots, \mathcal{Q}_k = [0,\frac{1}{\sqrt{d_k}}]^{d_k}$, respectively. For each $j \in [k]$, we sample $n_j$ independent samples from $f_j$, given by $\mathbf{z}_1^{(j)}, \ldots, \mathbf{z}_{n_j}^{(j)}$, and then embed them in $\mathbb{R}^m$ according to $\mathbf{x}_i^{(j)} = \mathbf{R}^{(j)} \mathbf{z}_i^{(j)}$, where $\mathbf{R}^{(j)} \in \mathbb{R}^{m\times d_j}$ is a matrix with orthonormal columns. Then, as $n_j \rightarrow \infty$ for all $j \in [k]$, the dataset $\mathbf{X} = \bigcup_{j=1}^k \{\mathbf{x}_i^{(j)}\}_{i=1}^{n_j}$ satisfies Assumption~\ref{assump:generalized_cluster} with probability approaching 1.
        \end{corollary}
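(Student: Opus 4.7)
}
The plan is to reduce the claim to $k$ independent applications of Theorem~\ref{thm:partition_assumption}(a), one per mixture component, and then paste the component-wise partitions together. First, I would check the norm constraint required by Assumption~\ref{assump:generalized_cluster}: for any $\mathbf{z} \in \mathcal{Q}_j$ we have $\|\mathbf{z}\|_2 \leq \sqrt{d_j \cdot (1/\sqrt{d_j})^2} = 1$, and since $\mathbf{R}^{(j)}$ has orthonormal columns, $\|\mathbf{x}_i^{(j)}\|_2 = \|\mathbf{R}^{(j)}\mathbf{z}_i^{(j)}\|_2 = \|\mathbf{z}_i^{(j)}\|_2 \leq 1$. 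The same isometry property shows that pairwise Euclidean distances are preserved, so diameters of subsets of $\{\mathbf{x}_i^{(j)}\}_i$ equal diameters of the corresponding subsets of $\{\mathbf{z}_i^{(j)}\}_i$.

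Next, I would rescale each component to the unit cube by setting $\tilde{\mathbf{z}}_i^{(j)} = \sqrt{d_j}\,\mathbf{z}_i^{(j)}$, so that $\{\tilde{\mathbf{z}}_i^{(j)}\}_{i=1}^{n_j}$ are independent samples from a distribution supported on $[0,1]^{d_j}$. Theorem~\ref{thm:partition_assumption}(a) then yields, for any $t>0$ and all $n_j$ sufficiently large, a partition $\mathcal{P}^{(j)}$ of these rescaled samples with each cell of size at least $4$ and
\[
\frac{1}{n_j}\sum_{\mathcal{P}\in\mathcal{P}^{(j)}} |\mathcal{P}|\,\bigl(\operatorname{diam}(\mathcal{P})\bigr)^2 \;\leq\; 4 d_j \left(\frac{(\log n_j)^2}{n_j}\right)^{1/(d_j+2)},
\]
with probability at least $1-n_j^{-t}$. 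Undoing the rescaling multiplies squared diameters by $1/d_j$, and the orthogonal embedding then transfers the same partition (with identical diameters) to $\{\mathbf{x}_i^{(j)}\}_i$ in $\mathbb{R}^m$.

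I would then define the partition of $\mathbf{X}$ as the disjoint union $\mathcal{P}_{\mathbf{X}} := \bigcup_{j=1}^k \mathcal{P}^{(j)}$; since the components are disjoint subsets of $\mathbf{X}$, this is a valid partition and every cell has cardinality at least $4$. Writing $n = \sum_j n_j$ and aggregating,
\[
\frac{1}{n}\sum_{\mathcal{P}\in\mathcal{P}_{\mathbf{X}}}|\mathcal{P}|\bigl(\operatorname{diam}(\mathcal{P})\bigr)^2 \;\leq\; \frac{1}{n}\sum_{j=1}^k 4\, n_j \left(\frac{(\log n_j)^2}{n_j}\right)^{1/(d_j+2)}.
\]
A union bound over the $k$ (fixed) events yields the conclusion with probability at least $1-\sum_{j=1}^k n_j^{-t}\to 1$. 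Setting $d := \max_j d_j$ and using $n_j \leq n$, the right-hand side is bounded by $C\,(\log n)^{2/(d+2)}\,n^{-\alpha}$ for any $\alpha < 1/(d+2)$ once each $n_j$ is a positive fraction of $n$ (which holds along the limit in which all $n_j \to \infty$), verifying Assumption~\ref{assump:generalized_cluster}.

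The main obstacle I anticipate is the bookkeeping for the combined rate: Theorem~\ref{thm:partition_assumption}(a) gives a different exponent $1/(d_j+2)$ per component, and one must argue that replacing each $d_j$ by the maximum $d$ still produces a partition whose weighted average squared diameter decays polynomially in the total sample size $n$, rather than just in the individual $n_j$'s. This requires being careful about the joint scaling regime for $(n_1,\ldots,n_k)$, but under the natural reading that each $n_j$ grows proportionally to $n$ (as in a mixture model with fixed mixing proportions), the argument above goes through cleanly.
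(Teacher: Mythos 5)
Your proposal is correct and follows essentially the same route as the paper: apply Theorem~\ref{thm:partition_assumption} componentwise, use the isometry of the orthonormal embedding to transfer the partitions and their diameters, take the disjoint union of the component partitions, and union-bound over the $k$ components. The only real difference is that you track explicit polynomial rates (which forces your proportional-growth caveat on the $n_j$), whereas the paper writes the aggregate quantity as the convex combination $\sum_{j=1}^{k}\frac{n_j}{n}\bigl(\frac{1}{n_j}\sum_{l}|\mathcal{P}^{\prime(j)}_l|(\operatorname{diam}(\mathcal{P}^{\prime(j)}_l))^2\bigr)$ of the per-component averages and only argues that this tends to zero as all $n_j\to\infty$, with no assumption on their relative sizes.
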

        
        The data generative model in Corollary~\ref{col:data_generating_process} naturally supports real-world scenarios where data may reside on manifolds of varying dimensions. See Supplement~\ref{sec:proof_corollary} for the proof. 

        We now connect Assumption~\ref{assump:generalized_cluster} to the estimation error analysis of our approach. Before analyzing the estimation errors of $\hat{\mathbf{r}}$ in~\eqref{eq:noise_mag_estimate} and $\hat{\mathbf{D}}$ in~\eqref{eq:correction} for the noisy dataset $\mathbf{Y}$, we first derive bounds for the LSAP costs associated with identifying near neighbors for each clean data $\mathbf{x}_i \in \mathbf{X}$ through two successive rounds of LSAP optimization, as detailed in Lemma~\ref{lem:Geometric LSAP Cost}.

        \begin{lemma}\label{lem:Geometric LSAP Cost}
            Let $\mathbf{X} = \{\mathbf{x}_i\}_{i=1}^n$ be a clean dataset satisfying Assumption~\ref{assump:generalized_cluster} and let $\mathbf{D}$ be the squared pairwise Euclidean distance matrix constructed from $\mathbf{X}$ according to~\eqref{eq:dist_mat}. Given any permutation matrix $\mathbf{P} \in \mathcal{P}^n$, we define a masked cost matrix $\mathbf{D}^{\prime}$ as follows:
            \begin{equation}\label{eq:masked_D_def}
                D^{\prime}_{ij} = \begin{cases}
                    D_{ij}, & P_{ij} = 0, \\
                    +\infty, & P_{ij} = 1.
                \end{cases}
            \end{equation}
            Consider the optimal permutation matrices $\mathbf{P}^{(1)}$ and $\mathbf{P}^{(2)}$ obtained via:
            \begin{equation}\label{eq:lemma_lsap}
                \mathbf{P}^{(1)} = \operatorname*{argmin}_{\mathbf{P} \in \mathcal{P}^n} \operatorname{Tr}(\mathbf{P}^T\mathbf{D}), \quad \text{and} \quad \mathbf{P}^{(2)} = \operatorname*{argmin}_{\mathbf{P} \in \mathcal{P}^n} \operatorname{Tr}(\mathbf{P}^T\mathbf{D}^{\prime}).
            \end{equation}
            Then, the LSAP costs associated with $\mathbf{P}^{(1)}$ and $\mathbf{P}^{(2)}$ satisfy:
            \begin{equation}\label{eq:clean_bounds}
                \frac{1}{n}\operatorname{Tr}((\mathbf{P}^{(1)})^T\mathbf{D}) \leq c \cdot n^{-\alpha}, \quad \text{and} \quad \frac{1}{n}\operatorname{Tr}((\mathbf{P}^{(2)})^T\mathbf{D}) \leq c \cdot n^{-\alpha},
            \end{equation}
            where $c$ and $\alpha$ are the constants specified in Assumption~\ref{assump:generalized_cluster}.
        \end{lemma}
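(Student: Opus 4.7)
The plan is to exhibit, for each LSAP in the lemma, an explicit feasible permutation whose cost is controlled by the within-subset squared diameters of the partition $\mathcal{P}_{\mathbf{X}}=\{\mathcal{P}_j\}_{j=1}^{k}$ furnished by Assumption~\ref{assump:generalized_cluster}, and then invoke optimality of $\mathbf{P}^{(1)}$ and $\mathbf{P}^{(2)}$ to obtain the stated bounds.

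For the first bound, I would pick an arbitrary cyclic derangement inside every block $\mathcal{P}_j$ (possible because $|\mathcal{P}_j|\ge 4\ge 2$), yielding a global permutation $\sigma_1$ such that each pair $(i,\sigma_1(i))$ stays inside a common block of the partition. The resulting LSAP cost then telescopes over blocks,
\begin{equation*}
\operatorname{Tr}(\mathbf{P}_{\sigma_1}^T\mathbf{D})=\sum_{i=1}^n D_{i,\sigma_1(i)}\le \sum_{j=1}^k |\mathcal{P}_j|\bigl(\operatorname{diam}(\mathcal{P}_j)\bigr)^2\le c\,n^{1-\alpha},
\end{equation*}
where the last inequality is exactly Assumption~\ref{assump:generalized_cluster}. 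Optimality of $\mathbf{P}^{(1)}$ then gives the first claim after dividing by $n$.

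The second bound requires a within-block permutation $\sigma_2$ that also avoids all entries set to $+\infty$ in $\mathbf{D}'$, i.e., $\sigma_2(i)\neq i$ and $\sigma_2(i)\neq\sigma(i)$, where $\sigma$ is the permutation induced by $\mathbf{P}$. I would build $\sigma_2$ blockwise by finding a perfect matching in the bipartite graph $G_j\subset\mathcal{P}_j\times\mathcal{P}_j$ whose edges are the allowed pairs (diagonal removed, plus the pair $(i,\sigma(i))$ removed whenever $\sigma(i)\in\mathcal{P}_j$). Each vertex of $G_j$ has degree at least $|\mathcal{P}_j|-2$, which is at least $|\mathcal{P}_j|/2$ because $|\mathcal{P}_j|\ge 4$. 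This is exactly the regime in which a short Hall-type argument produces a perfect matching: any violating set $S$ would force $|S|\le 2$ since each vertex outside $N(S)$ has all its neighbors inside $\mathcal{P}_j\setminus S$, and the two residual cases $|S|\in\{1,2\}$ are both ruled out by the degree lower bound once $|\mathcal{P}_j|\ge 4$. Concatenating the per-block matchings yields the desired $\sigma_2$.

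With $\sigma_2$ in hand, $\mathbf{P}_{\sigma_2}$ is feasible under $\mathbf{D}'$ with the same finite cost as under $\mathbf{D}$, namely at most $c\,n^{1-\alpha}$ by the diameter calculation above. Optimality of $\mathbf{P}^{(2)}$ then gives $\operatorname{Tr}((\mathbf{P}^{(2)})^T\mathbf{D}')\le c\,n^{1-\alpha}$, and since finiteness of the cost forces $\mathbf{P}^{(2)}$ to avoid every $+\infty$ entry of $\mathbf{D}'$ (so $\mathbf{D}'$ agrees with $\mathbf{D}$ on the support of $\mathbf{P}^{(2)}$), I conclude $\operatorname{Tr}((\mathbf{P}^{(2)})^T\mathbf{D})=\operatorname{Tr}((\mathbf{P}^{(2)})^T\mathbf{D}')\le c\,n^{1-\alpha}$, proving the second bound after dividing by $n$. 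The main obstacle is the Hall-type matching step: it is the only place where the $|\mathcal{P}_j|\ge 4$ hypothesis in Assumption~\ref{assump:generalized_cluster} is genuinely used, since a smaller minimum block size would permit an adversarial $\mathbf{P}$ to block every within-block derangement of some $\mathcal{P}_j$.
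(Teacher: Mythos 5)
Your proposal is correct and follows essentially the same route as the paper's proof: a block-diagonal within-partition cyclic permutation for the first bound, and a per-block bipartite perfect matching via Hall's theorem (using minimum degree $|\mathcal{P}_j|-2 \geq |\mathcal{P}_j|/2$, which is where $|\mathcal{P}_j|\geq 4$ enters) for the second, followed by the observation that feasibility makes the $\mathbf{D}'$-cost equal the $\mathbf{D}$-cost. The only differences are cosmetic — your Hall casework is organized slightly differently from the paper's auxiliary matching lemma, and you make explicit the final identity $\operatorname{Tr}((\mathbf{P}^{(2)})^T\mathbf{D})=\operatorname{Tr}((\mathbf{P}^{(2)})^T\mathbf{D}')$ that the paper leaves implicit.
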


        Lemma~\ref{lem:Geometric LSAP Cost} is proved by constructing feasible assignment matrices with special structure for each optimization problem in~\eqref{eq:lemma_lsap}. These matrices are designed so that their corresponding assignment costs can be easily bounded under Assumption~\ref{assump:generalized_cluster}. The existence of such matrices under an arbitrary masking permutation matrix $\mathbf{P}$ is established through a graph-theoretical approach using Hall's marriage theorem~\cite{Hall}. See Supplement~\ref{sec:Geometric LSAP Cost} for details. We note that when the masking matrix $\mathbf{P}$ is set to $\widetilde{\mathbf{P}}^{(1)}$ from~\eqref{eq:alg_LSAP_1}, using the relationship between the corrupted and clean distances in~\eqref{eq:squared_distance_cost}, Lemma~\ref{lem:Geometric LSAP Cost} enables us to bound the LSAP costs for the neighbor assignment step~\eqref{eq:alg_LSAP_1} and~\eqref{eq:alg_LSAP_2} in Algorithm~\ref{alg:debias}. Such analysis of the LSAP problem under a noisy cost matrix provides the foundation for our main result---Theorem~\ref{thm:estimation_error_bound} below---which establishes theoretical guarantees on the estimation accuracy of our approach.
        
        \begin{theorem} \label{thm:estimation_error_bound}
            Given a corrupted dataset $\mathbf{Y} = \{\mathbf{y}_i\}_{i=1}^n \subset \mathbb{R}^m$ as in~\eqref{eq:noise_model}, under Assumptions~\ref{assump:m_and_n},~\ref{assump:noise}, and~\ref{assump:generalized_cluster}, there exist constants $t_0, m_0, n_0, C^{\prime}, C^{\prime\prime}> 0$ such that for any feature dimension $m > m_0$, any dataset size $n > n_0$, and any $t > t_0$, the estimates $\hat{\mathbf{r}} \in \mathbb{R}^n$ in~\eqref{eq:noise_mag_estimate} and $\hat{\mathbf{D}} \in \mathbb{R}^{n\times n}$ in~\eqref{eq:correction} obtained by executing Algorithm~\ref{alg:debias} on the dataset $\mathbf{Y}$ satisfy:
            \begin{align} 
                \frac{1}{n} \|\hat{\mathbf{r}} - \mathbf{r}\|_1 &\leq C^{\prime}t \left( \mathcal{E}(m) + n^{-\alpha} \right), \label{eq:estimation_L1_bound_noise} \\
                \frac{1}{n(n-1)} \sum_{i=1}^n \sum_{j\neq i}^n |
                \hat{D}_{ij} - D_{ij}| &\leq C^{\prime\prime}t \left( \mathcal{E}(m) + n^{-\alpha} \right), \label{eq:estimation_L1_bound_distance}
            \end{align}
            with probability at least $1-n^{-t}$. Here, $\mathcal{E}(m) := \sqrt{\log m} \cdot \max\{E, E^2\sqrt{m}\}$ with $E$ defined in Assumption~\ref{assump:noise}, $\mathbf{r} = [r_1, r_2, \ldots, r_n]^T$ denotes the true squared noise magnitude in~\eqref{eq:squared_distance_cost}, $\mathbf{D} \in \mathbb{R}^{n\times n}$ denotes the true squared distance matrix in~\eqref{eq:dist_mat}, and $\alpha$ is defined in Assumption~\ref{assump:generalized_cluster}.
        \end{theorem}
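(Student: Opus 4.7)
\textbf{Proof plan for Theorem~\ref{thm:estimation_error_bound}.}

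The plan is to decompose the estimation errors algebraically using the identity $\widetilde{D}_{ij} = D_{ij} + r_i + r_j + \epsilon_{ij}$ from~\eqref{eq:squared_distance_cost}, then separately control the two resulting contributions: a \emph{geometric} part involving clean squared distances along the neighbor assignments produced by Algorithm~\ref{alg:debias}, and a \emph{stochastic} part involving the cross terms $\epsilon_{ij}$. Starting from~\eqref{eq:noise_mag_estimate}, a direct substitution gives
\begin{equation*}
\hat r_i - r_i = \tfrac{1}{2}\bigl(D_{i\widetilde\sigma_1(i)} + D_{i\widetilde\sigma_2(i)} - D_{\widetilde\sigma_1(i)\widetilde\sigma_2(i)}\bigr) + \tfrac{1}{2}\bigl(\epsilon_{i\widetilde\sigma_1(i)} + \epsilon_{i\widetilde\sigma_2(i)} - \epsilon_{\widetilde\sigma_1(i)\widetilde\sigma_2(i)}\bigr),
\end{equation*}
and, from~\eqref{eq:correction}, $\hat D_{ij} - D_{ij} = (r_i-\hat r_i)+(r_j-\hat r_j)+\epsilon_{ij}$. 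Thus once I bound the averaged clean distances along the noisy assignments and a uniform bound on $|\epsilon_{ij}|$, both target inequalities follow by the triangle inequality.

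First I would invoke the concentration statement for $\epsilon_{ij}$ (Lemma~\ref{lem:epsilon_bound_in_prob}) together with a union bound over the at most $\binom{n}{2}$ pairs to obtain, with probability at least $1-n^{-t}$, the uniform noise control $\max_{i\neq j}|\epsilon_{ij}|\le C_1 t\,\mathcal{E}(m)$. This directly dominates any $\tfrac{1}{n}\sum_i \epsilon_{i\sigma(i)}$ by $\max_{ij}|\epsilon_{ij}|$ simultaneously for every permutation $\sigma$, which is precisely what the remaining optimality argument needs.

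Next I would turn to the clean geometric part using the LSAP-robustness identity~\eqref{eq:LSAP_robust}. Applied to any feasible $\mathbf{P}$ (which never places a $1$ on a $+\infty$ diagonal), the row/column one-sums force $\sum_{ij} P_{ij}(r_i+r_j)=2\sum_i r_i$, which is permutation-invariant and cancels in any optimality comparison. Combining the optimality $\operatorname{Tr}(\widetilde{\mathbf P}^{(1)\,T}\widetilde{\mathbf D})\le \operatorname{Tr}(\mathbf P^{(1)\,T}\widetilde{\mathbf D})$ with Lemma~\ref{lem:Geometric LSAP Cost} and the uniform $\epsilon$ bound yields
\begin{equation*}
\tfrac{1}{n}\operatorname{Tr}\bigl(\widetilde{\mathbf P}^{(1)\,T}\mathbf D\bigr)\;\le\;c\,n^{-\alpha}+2\max_{i\neq j}|\epsilon_{ij}|.
\end{equation*}
For the second round I would apply the same scheme with the masked cost matrices $\widetilde{\mathbf D}^{\prime}$ and $\mathbf D^{\prime}$: the key point is that Lemma~\ref{lem:Geometric LSAP Cost} holds for \emph{any} masking permutation, in particular for the (data-dependent) $\widetilde{\mathbf P}^{(1)}$, so an analogous inequality holds with $\widetilde{\mathbf P}^{(2)}$ replacing $\widetilde{\mathbf P}^{(1)}$. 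Finally, the cross-distance $D_{\widetilde\sigma_1(i)\widetilde\sigma_2(i)}$ in the $\hat r_i$ decomposition is controlled by the (squared) triangle inequality $\|a-b\|^2\le 2\|a-i\|^2+2\|i-b\|^2$, which expresses it as twice the sum of the two endpoint squared distances already bounded.

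Putting these pieces together, the normalized $\ell_1$ noise-magnitude error is at most a constant multiple of $n^{-\alpha}+\max_{ij}|\epsilon_{ij}| \lesssim t(\mathcal{E}(m)+n^{-\alpha})$, giving~\eqref{eq:estimation_L1_bound_noise}. Substituting into $\hat D_{ij}-D_{ij}=(r_i-\hat r_i)+(r_j-\hat r_j)+\epsilon_{ij}$ and averaging over $i\neq j$ produces~\eqref{eq:estimation_L1_bound_distance} on the same probability event, with a suitable constant $C''$. The main obstacle is the second-round LSAP analysis: the mask $\widetilde{\mathbf P}^{(1)}$ is random and correlated with the noise, so I need Lemma~\ref{lem:Geometric LSAP Cost} applied pointwise in the mask together with the simultaneous-over-$\mathbf{P}$ noise control $\sup_{\mathbf{P}\in\mathcal{P}^n}\tfrac{1}{n}|\operatorname{Tr}(\mathbf P^T \mathbf E)|\le \max_{ij}|\epsilon_{ij}|$ to avoid any additional union bound over permutations. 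A secondary subtlety is verifying that the combined constants $m_0,n_0,t_0$ are chosen so that the high-probability $\epsilon$ bound and the assumed $m\ge n^\gamma$ scaling of Assumption~\ref{assump:m_and_n} are compatible with the probability budget $n^{-t}$.
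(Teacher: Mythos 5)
Your plan follows essentially the same route as the paper's proof: the same algebraic decomposition of $\hat r_i - r_i$ and $\hat D_{ij}-D_{ij}$, the same optimality comparison between $\widetilde{\mathbf P}^{(k)}$ and the clean optima of Lemma~\ref{lem:Geometric LSAP Cost} (applied with the data-dependent mask $\widetilde{\mathbf P}^{(1)}$, exactly as the paper does), the same squared-triangle-inequality control of $D_{\widetilde\sigma_1(i)\widetilde\sigma_2(i)}$, and the same union-bound treatment of the $\epsilon_{ij}$ terms under Assumption~\ref{assump:m_and_n}. The argument is correct; the only cosmetic difference is that you carry a uniform $\max_{i\neq j}|\epsilon_{ij}|$ bound while the paper bounds the relevant sums $\sum_i|\epsilon_{i\sigma(i)}|$ directly, which is equivalent.
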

        
        In Theorem~\ref{thm:estimation_error_bound}, $n$ and $m$ are arbitrary, provided they satisfy Assumption~\ref{assump:m_and_n}, while $t_0$, $m_0$, $n_0$, and $\alpha$ are fixed constants. The constants $C^{\prime}$ and $C^{\prime\prime}$ may depend on those fixed constants, but are independent of $n$ and $m$. Theorem~\ref{thm:estimation_error_bound} establishes theoretical guarantees for the estimation accuracy of $\hat{\mathbf{r}}$ and $\hat{\mathbf{D}}$. Specifically, it states that the normalized $\ell_1$ estimation errors for $\hat{\mathbf{r}}$ and $\hat{\mathbf{D}}$ are bounded by $ \mathcal{O}\left( \mathcal{E}(m) + n^{-\alpha}\right)$ with high probability for high-dimensional datasets with large sample sizes. Under Assumption~\ref{assump:noise}, we have $\mathcal{E}(m) \leq \max\{C m^{-1/4}, C^2\left(\log m\right)^{-1/2}\}$, which tends to zero as $m$ increases. Consequently, the error bound $\mathcal{O}\left( \mathcal{E}(m) + n^{-\alpha}\right)$ comprises two terms that converge to zero as $m,n\rightarrow \infty$, with $\mathcal{E}(m)$ capturing the concentration property of $\epsilon_{ij}$ and $n^{-\alpha}$ reflecting the average squared distance between any $\mathbf{x}_i \in \mathbf{X}$ and its assigned neighbor. The convergence of $ \mathcal{O}\left( \mathcal{E}(m) + n^{-\alpha}\right)$ guarantees consistent estimation in the asymptotic regime. 
        See Supplement~\ref{sec:estimation_error_bound} for the proof.

    \section{Experiments}\label{sec:experiment}
    
        \subsection{Numerical Validation of Main Theoretical Claims}\label{sec:demonstrating_example}
            In this section, we illustrate the theoretical results in Section~\ref{sec:theoretical guarantee} through numerical simulations.
            
            We begin by examining a clean dataset $\mathbf{X} = \{\mathbf{x}_i\}_{i=1}^n \subset \mathbb{R}^m$, whose generative model falls within the framework of Corollary~\ref{col:data_generating_process}, and demonstrate how the geometric properties of $\mathbf{X}$ relate to the LSAP costs, as established in Lemma~\ref{lem:Geometric LSAP Cost}. Specifically, we consider a data generating process where $\mathbf{X}$ is generated by first sampling $\mathbf{Z} = \{\mathbf{z}_i\}_{i=1}^n \subset \mathbb{R}^{3}$ and then embedding $\mathbf{Z}$ into $\mathbb{R}^m$ via a random orthogonal transformation. More precisely, each $\mathbf{z}_i \in \mathbf{Z}$ is sampled independently from a mixture model: 
            with probability $0.8$, we sample uniformly from the ball $ \mathcal{B} = \{\mathbf{z} \in \mathbb{R}^3 : \|\mathbf{z} - [0.5,0.5,0.5]^{T}\|_2 \leq 0.3\}$, and with probability $0.2$, we sample uniformly from the circle $\mathcal{C} = \{\mathbf{z} \in \mathbb{R}^3 : \|\mathbf{z} - [0.5,0.5,0.5]^{T}\|_2 = 0.4, z_3 = 0.5\}$. We then embed each $\mathbf{z}_i$ into $\mathbb{R}^m$ to obtain $\mathbf{x}_i$ following $\mathbf{x}_i = \mathbf{R}_m \mathbf{z}_i$, where $\mathbf{R}_m \in \mathbb{R}^{m\times 3}$ has random orthonormal columns (i.e., $\mathbf{R}_m^T\mathbf{R}_m = \mathbf{I}_3$). Figure~\ref{fig:partition_saturn}(a) illustrates an example of $\mathbf{Z}$.  
            \begin{figure}[!ht]
                \centering
                \includegraphics[width=0.95\textwidth]{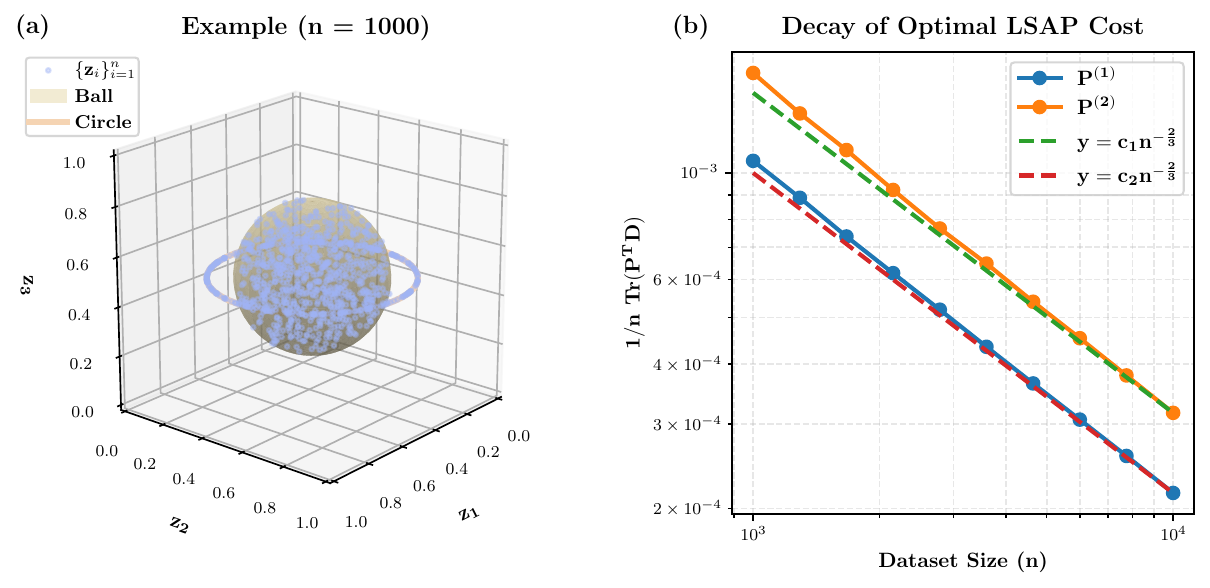}
                \caption{Illustration of the simulated data in Section~\ref{sec:demonstrating_example} and the empirical LSAP costs (a) Simulated data $\mathbf{Z}$ with $n = 10^3$. (b) Log-log plot of empirical LSAP costs associated with optimal solutions $\mathbf{P}^{(1)}$ and $\mathbf{P}^{(2)}$ from~\eqref{eq:lemma_lsap} as a function of the data size $n$.}
                \label{fig:partition_saturn}
            \end{figure} 

            We now examine the relationship between the geometric properties of $\mathbf{X}$ and the LSAP costs associated with~\eqref{eq:lemma_lsap}. Theoretically, applying  Lemma~\ref{lem:Geometric LSAP Cost} to our case, where we generate $\mathbf{D}^{\prime}$ based on $\mathbf{P}^{(1)}$ from~\eqref{eq:lemma_lsap} following~\eqref{eq:masked_D_def}, we expect the LSAP costs associated with both $\mathbf{P}^{(1)}$ and $\mathbf{P}^{(2)}$ from~\eqref{eq:lemma_lsap} to exhibit polynomial decay as the dataset size $n$ increases; see~\eqref{eq:clean_bounds}. In Figure~\ref{fig:partition_saturn}(b), we display the LSAP costs as a function of $n$, with each point representing the average over 10 independent trials. The costs decay polynomially as $n$ increases, with empirical decay rates of approximately $n^{-2/3}$.
            
            We note that this empirical behavior aligns closely with Theorem~\ref{thm:partition_assumption}. Specifically, both $\mathcal{B}$ and $\mathcal{C}$ in our generative model are contained in the unit cube $\mathcal{Q}\subset \mathbb{R}^3$ with a probability distribution that allows zero probability for some regions in $\mathcal{Q}$. Hence, we can apply part (a) of Theorem~\ref{thm:partition_assumption}, which predicts decay rates of $n^{-1/5}$. However, our empirical results exhibit faster rates that align with part (b) of Theorem~\ref{thm:partition_assumption}. This can be explained by the specific geometric properties of our model: $\mathcal{B}$ and $\mathcal{C}$ have intrinsic dimensions $d_1 = 3$ and $d_2 = 1$, respectively, each with a uniform probability distribution that is bounded away from zero. While part (b) of Theorem~\ref{thm:partition_assumption} is derived for hypercubes for analytical simplicity, its underlying principle should extend to more general geometries. Consequently, we expect the decay rates to be $n^{-2/3}$ and $n^{-2}$ for $\mathcal{B}$ and $\mathcal{C}$, respectively. In regimes where $n$ is large, the slower-decaying term $n^{-2/3}$ dominates the overall behavior, matching the results in Figure~\ref{fig:partition_saturn}(b). For the remainder of this section, we assume that $\mathbf{X}$ satisfies Assumption~\ref{assump:generalized_cluster} with $\alpha = \frac{2}{3}$.

            We next examine the corrupted dataset $\mathbf{Y} = \{\mathbf{y}_i\}_{i=1}^n \subset \mathbb{R}^m$ and demonstrate numerically how the estimation errors of the noise magnitudes $\{{\hat{r}}_i\}_{i=1}^n$ from~\eqref{eq:noise_mag_estimate} and the corrected distance matrix $\hat{\mathbf{D}}$ from~\eqref{eq:correction} scale with the dataset size $n$ and the feature dimension $m$. The dataset $\mathbf{Y}$ is constructed following~\eqref{eq:noise_model}, where the noise vectors $\{\boldsymbol{\eta}_i\}_{i=1}^n$ are generated as follows. We first sample two sets of heterogeneity parameters $\{\tau_i\}_{i=1}^{n}$ and $\{\delta_j\}_{j=1}^{m}$, representing sample-specific and feature-specific noise levels, respectively. Each $\tau_i$ and $\delta_j$ are drawn independently and uniformly from $[0.01, 0.15]$. For each $\mathbf{x}_i \in \mathbf{X}$, the corresponding noise vector $\boldsymbol{\eta}_i$ is sampled from a multivariate normal distribution $\mathcal{N}(0,\boldsymbol{\Sigma}_i)$, where $\boldsymbol{\Sigma}_i \in \mathbb{R}^{m \times m}$ is diagonal with its ($k$,$k$)th entry as $\Sigma_i[k,k] = {\tau_i \delta_k}/m$, for all $k \in [m]$. We note that such noise satisfies Assumption~\ref{assump:noise} (see the discussion following Assumption~\ref{assump:noise}). Under this setting, the noise magnitude for each $\mathbf{x}_i \in \mathbf{X}$ satisfies $10^{-4} \leq \mathbb{E}\|\boldsymbol{\eta}_i\|_2^2 \leq 2.25\times 10^{-2}$ and can vary across the data due to the variation in $\{\mathbf{\Sigma}_i\}_{i=1}^n$. Applying Theorem~\ref{thm:estimation_error_bound}, we expect $\frac{1}{n} \|\hat{\mathbf{r}} - \mathbf{r}\|_1$ in~\eqref{eq:estimation_L1_bound_noise} and $\frac{1}{n(n-1)} \sum_{i=1}^n \sum_{j\neq i}^n |\hat{D}_{ij} - D_{ij}|$ in~\eqref{eq:estimation_L1_bound_distance} to be bounded by $\mathcal{O}\left(\sqrt{\log m} \cdot m^{-1/2} + n^{-2/3}\right)$ with high probability, where the term $\sqrt{\log m} \cdot m^{-1/2}$ corresponds to $\mathcal{E}(m)$ and $n^{-2/3}$ is observed in Figure~\ref{fig:partition_saturn}(b).
            
            Figure~\ref{fig:decay_rate} illustrates how the estimation errors, namely $\frac{1}{n(n-1)} \sum_{i=1}^n \sum_{j\neq i}^n |\hat{D}_{ij} - D_{ij}|$ and $\frac{1}{n} \|\hat{\mathbf{r}} - \mathbf{r}\|_1$, scale with the feature dimension $m$ and the dataset size $n$. To isolate the influence of $m$, Figure~\ref{fig:decay_rate}(a) examines the regime where $ \mathcal{E}(m) \gg n^{-2/3}$ by fixing $n$ to be sufficiently large ($n = 10^4$) while varying $m$. The observed decay rates approximate $m^{-1/2}$, closely matching the theoretical rate of $\mathcal{E}(m)$. Figure~\ref{fig:decay_rate}(b) examines the dependency of the error bounds on $n$ by considering the regime where $ n^{-2/3} \gg \mathcal{E}(m)$. Fixing $m$ to be sufficiently large ($m = 5\times 10^5$) while varying $n$ reveals the expected rate of $n^{-2/3}$. 

            \begin{figure}[!htp]
                \centering
                \includegraphics[width=0.95\textwidth]{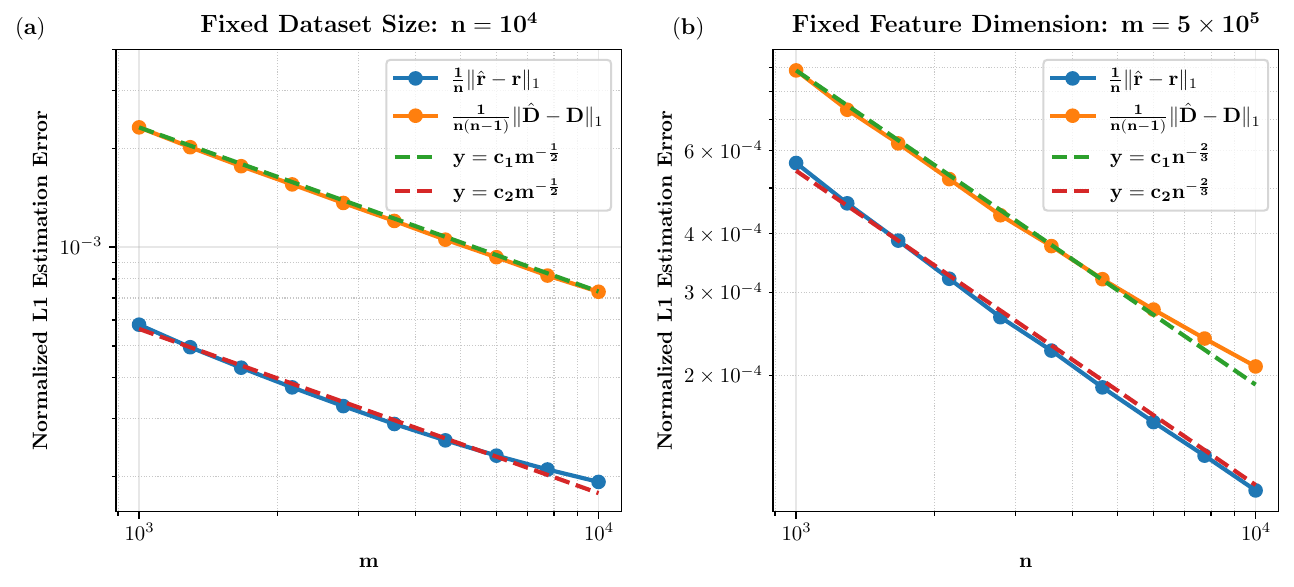}
                \caption{Empirical evaluation of estimation errors of Algorithm~\ref{alg:debias}. (a) Normalized $\ell_1$ estimation errors as a function of the feature dimension $m$ with fixed dataset size $n = 10^4$. (b) Normalized $\ell_1$ estimation errors as a function of the dataset size $n$ with fixed feature dimension $m = 5\times10^5$.}
                \label{fig:decay_rate}
            \end{figure}  

            Having demonstrated the theoretical guarantees numerically, we now apply our approach to challenging scenarios within our framework to show the advantages of our approach.

        \subsection{Robust distance estimation and graph construction under data-dependent heteroskedastic noise}\label{sec:uniform_density}
    
            In this example, we apply our approach to a simulated dataset $\mathbf{Y}$ corrupted by heteroskedastic noise whose magnitudes depend on the clean dataset $\mathbf{X}$. We show that our approach accurately estimates the varying noise magnitudes and pairwise distances, enabling assessment of local signal-to-noise ratios and significantly enhancing the noise robustness of Gaussian kernel matrices and KNN graphs constructed from the corrected distances. 

            \begin{figure}[!ht]
                \centering
                \includegraphics[width=0.9\textwidth]{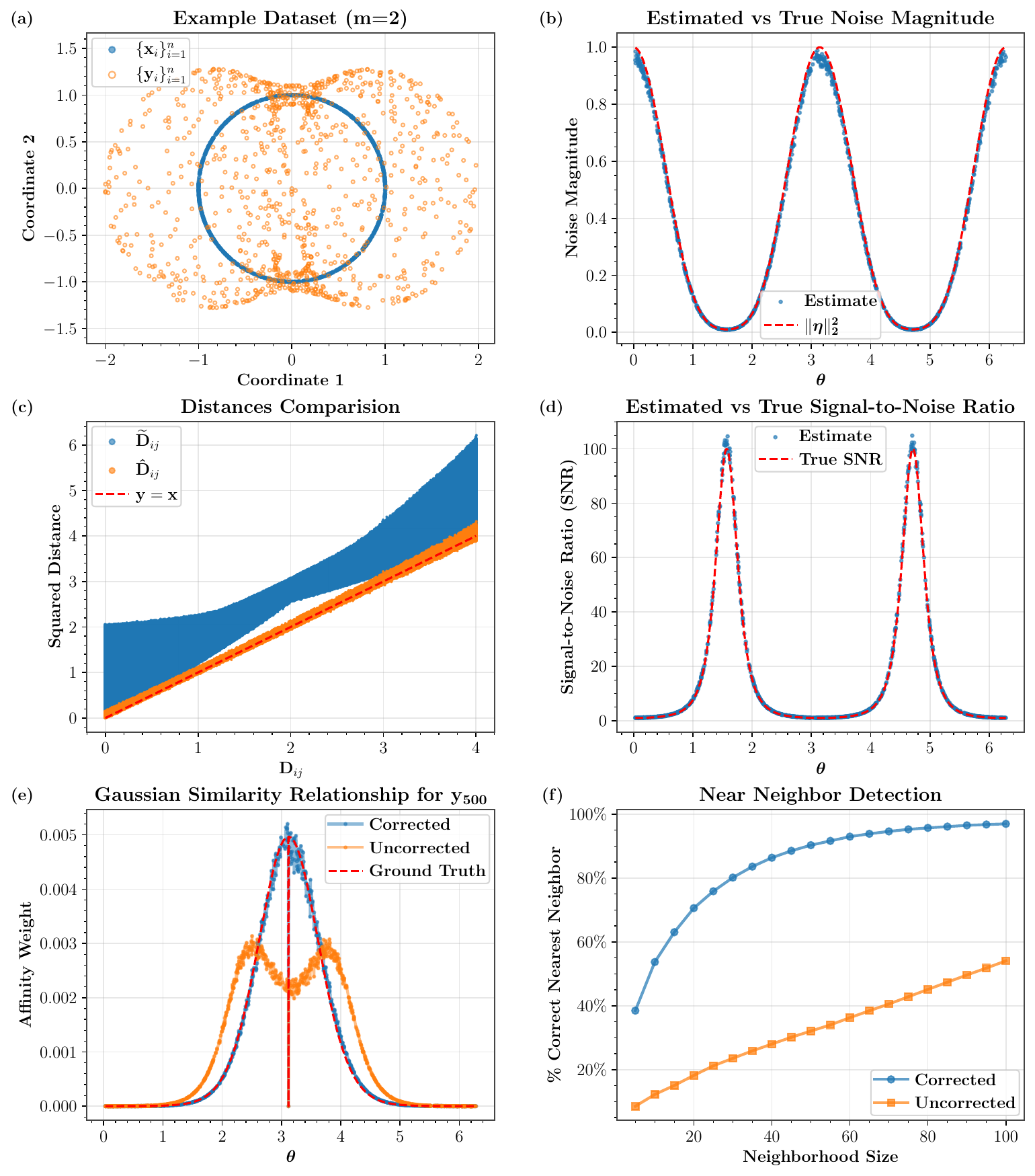}
                \caption{Results of applying our approach to simulated data from a circle corrupted by varying noise. (a) Illustration of simulated data for $n=10^3$. (b) Estimated noise magnitudes $\hat{\mathbf{r}}$ from~\eqref{eq:noise_mag_estimate} and the ground truth $\{{r}_i\}_{i=1}^n$ from~\eqref{eq:squared_distance_cost} as functions of the angles $\theta$ on the circle. (c) Comparison of the clean distances $\mathbf{D}$ from~\eqref{eq:dist_mat}, the corrupted distances $\widetilde{\mathbf{D}}$ from~\eqref{eq:corrupted_dist_mat}, and the corrected distances $\hat{\mathbf{D}}$ from~\eqref{eq:correction}. (d) Comparison of $\widehat{\operatorname{SNR}}$ estimates from~\eqref{eq:snr} with ground truth. (e) Similarity weights for $\mathbf{y}_{500}$: comparison of the $500$th row of row-stochastic Gaussian kernel matrices $\mathbf{W}$, $\widetilde{\mathbf{W}}$, and $\hat{\mathbf{W}}$ constructed from $\mathbf{D}$, $\widetilde{\mathbf{D}}$, and $\hat{\mathbf{D}}$ following~\eqref{eq:rs_Gaussian}. (f) Accuracy of nearest neighbor identification as a function of the neighborhood size $k$ using $\widetilde{\mathbf{D}}$ and $\hat{\mathbf{D}}$.}
                \label{fig:unit_circle}
            \end{figure} 
        
            The experiment involves a clean dataset $\mathbf{X} = \{\mathbf{x}_i\}_{i=1}^n \subset \mathbb{R}^m$ of sample size $n = 10^3$ and feature dimension $m = 10^4$, where each $\mathbf{x}_i \in \mathbf{X}$ is sampled uniformly from a unit circle in $\mathbb{R}^2$ and embedded into $\mathbb{R}^{m}$ via a random orthogonal transformation. In more detail, we first sample $\{\theta_i\}_{i=1}^n$ independently and uniformly from $[0, 2\pi]$ and then generate a random matrix $\mathbf{R}_m \in \mathbb{R}^{m \times 2}$ with orthonormal columns. Each $\mathbf{x}_i \in \mathbf{X}$ is computed as:
            \begin{equation}\label{eq:clean_circle}
                \theta_i \sim \operatorname{Uniform}[0, 2\pi], 
                \quad \mathbf{x}_i = \mathbf{R}_m \cdot \begin{bmatrix} \cos(\theta_i) \\ \sin(\theta_i) \end{bmatrix}.
            \end{equation}
            The heteroskedastic noise vectors $\{\boldsymbol{\eta}_i\}_{i=1}^n$ are generated by sampling independent random vectors $\{\mathbf{u}_i\}_{i=1}^n$ uniformly from an $m$-dimensional unit sphere $\mathbb{S}^{m-1}$, and scaling their magnitudes with a function that depends on $\mathbf{X}$. Concretely, $\{\boldsymbol{\eta}_i\}_{i=1}^n$ are generated as:
            \begin{equation}\label{eq:geometry_noise}
            \mathbf{u}_i \sim \mathbb{S}^{m-1}, \quad g(\theta_i, \varphi)= 0.1 + 0.9 \cdot \frac{1+\cos(2\theta_i + \varphi)}{2}, \quad \boldsymbol{\eta}_i = g(\theta_i, \varphi) \cdot \mathbf{u}_i.
            \end{equation}
            The process in~\eqref{eq:geometry_noise} creates a heteroskedastic pattern where the noise magnitudes vary smoothly from $0.1$ to $1$ along the underlying circle. We set $\varphi = 0$ in~\eqref{eq:geometry_noise} and create the corrupted dataset $\mathbf{Y}$ following~\eqref{eq:noise_model}. See Figure~\ref{fig:unit_circle}(a) for examples of $\mathbf{X}$ and $\mathbf{Y}$ for $m = 2$.

            We first evaluate the performance of our approach on noise magnitude estimation and distance correction. Figure~\ref{fig:unit_circle}(b) compares the estimated noise magnitudes $\hat{\mathbf{r}}$ from~\eqref{eq:noise_mag_estimate} against the ground truth calculated from~\eqref{eq:geometry_noise}, where we plot both quantities against the parameters $\{\theta_i\}_{i=1}^n$ used in data generation. The estimated noise magnitudes closely match the ground truth across the entire range of $\{\theta_i\}_{i=1}^n$. This accurate estimation of noise magnitudes enables precise distance correction. In Figure~\ref{fig:unit_circle}(c), we compare three distance matrices: the clean distance matrix $\mathbf{D}$ from~\eqref{eq:dist_mat}, the corrupted distance matrix $\widetilde{\mathbf{D}}$ from~\eqref{eq:corrupted_dist_mat}, and the corrected distance matrix $\hat{\mathbf{D}}$ from~\eqref{eq:correction}. We exclude the diagonal elements (self-distances) from this comparison. As evident in Figure~\ref{fig:unit_circle}(c), the corrupted distances in $\widetilde{\mathbf{D}}$ exhibit substantial non-linear deviations from their clean counterparts in $\mathbf{D}$, illustrating the distortion introduced by the heteroskedastic noise (see~\eqref{eq:squared_distance_cost}). In contrast, $\hat{\mathbf{D}}$ aligns closely around $\mathbf{D}$ with small variation. This pronounced improvement validates our approach's capacity to accurately reconstruct the underlying geometric relationship between data points, even in the presence of heteroskedastic noise whose magnitude depends on the clean data.
        
            Next, we demonstrate that the noise magnitude estimates $\hat{\mathbf{r}}$ can be utilized to assess the signal-to-noise ratio (SNR) for each data observation, providing valuable insights into data quality and reliability. Specifically, under the noise model in~\eqref{eq:noise_model}, for each $\mathbf{x}_i \in \mathbf{X}$, we define an estimator $\hat{s}_i$ for  $\|\mathbf{x}_i\|_2^2$ as $\hat{s}_i: = \|\mathbf{y}_i\|_2^2 - \hat{r}_i$, and estimate the SNR for each $\mathbf{y}_i \in \mathbf{Y}$ following:\begin{equation}\label{eq:snr}
            \widehat{\operatorname{SNR}}(\mathbf{y}_i) 
            : = \frac{\hat{s}_i}{\hat{r}_i} 
            = \frac{\|\mathbf{y}_i\|_2^2 - \hat{r}_i}{\hat{r}_i}.
            \end{equation}
            Figure~\ref{fig:unit_circle}(d) shows that our SNR estimates closely match the ground truth, clearly revealing regions of degraded data quality around $\theta = 0,\pi$, and $2\pi$ as designed in~\eqref{eq:geometry_noise}.

            We proceed to show that the corrected distance matrix $\hat{\mathbf{D}}$ from~\eqref{eq:correction} can improve the fidelity of distance-based computations, focusing on Gaussian kernel-based similarity measurements and KNN graphs. For pairwise data similarity assessment, we employ a row-stochastic Gaussian kernel. Formally, given a distance matrix $\mathbf{D}$ from~\eqref{eq:dist_mat} and a bandwidth parameter $\sigma$, we construct the similarity matrix $\mathbf{W}$ based on the row-stochastic Gaussian kernel as:
            \begin{equation}\label{eq:rs_Gaussian}
                \mathbf{W} = \operatorname{diag}(\mathbf{d})^{-1}\mathbf{K}, \quad \mathbf{d} = \mathbf{K} \mathbf{1}_n,
                \quad
                \mathbf{K} = \exp{\left(-\frac{\mathbf{D}}{\sigma^2}\right)}.
            \end{equation}
            Here, $\mathbf{1}_n \in \mathbb{R}^n$ is the all-one vector, $\mathbf{d} \in \mathbb{R}^n$ is the degree vector containing the row sums of $\mathbf{K}$, $\operatorname{diag}(\cdot)$ is the operation that transforms a vector into a diagonal matrix, and $\exp(\cdot)$ is applied entry-wise. We note that $\mathbf{W}$ has unit row sums by construction.
            
            In Figure~\ref{fig:unit_circle}(e), we compare three similarity matrices: $\mathbf{W}$, $\widetilde{\mathbf{W}}$, and $\hat{\mathbf{W}}$ derived from $\mathbf{D}$, $\widetilde{\mathbf{D}}$ and $\hat{\mathbf{D}}$ respectively, following~\eqref{eq:rs_Gaussian}, with the bandwidth set to $\sigma^2 = 0.5$. Specifically, we focus on the similarity relationships between a representative data point $\mathbf{y}_{500}$ and all other data points. We select $\mathbf{y}_{500}$ because $\theta_{500} \approx \pi$, which, by the noise design in~\eqref{eq:geometry_noise}, makes it the most severely corrupted point and consequently the most challenging case for accurate similarity relationship reconstruction. Moreover, under our setup, any $\mathbf{x}_j \in \mathbf{X}$ with $\theta_j$ closer to $\theta_{500}$ has smaller distance to $\mathbf{x}_{500}$ and should consequently receive a higher similarity score. Sorting $\{\theta_i\}_{i=1}^n$ in ascending order, we therefore expect to observe a Gaussian-shaped similarity curve centered at $\theta_{500}$. As shown in Figure~\ref{fig:unit_circle}(e), the similarity relationships between $\mathbf{y}_{500}$ and other points captured by $\hat{\mathbf{W}}$ closely track the ground truth in $\mathbf{W}$, whereas those from $\widetilde{\mathbf{W}}$ deviate significantly, exhibiting an unusual bimodal shape that is strongly influenced by the heteroskedastic noise. This experiment confirms that the corrected distances from our approach can effectively render Gaussian-kernel matrices robust to heteroskedastic noise.
            
            For a more direct assessment of how effectively $\hat{\mathbf{D}}$ preserves local neighborhood structure, we compare KNN graphs constructed from both $\widetilde{\mathbf{D}}$ and $\hat{\mathbf{D}}$, assessing the accuracy of nearest neighbor identification by computing the percentage of correctly identified neighbors (i.e., overlap with the ground truth nearest neighbors identified using $\mathbf{D}$). As illustrated in Figure~\ref{fig:unit_circle}(f), KNN graphs constructed using our corrected distance matrix $\hat{\mathbf{D}}$ demonstrate substantially higher accuracy across all examined neighborhood sizes, achieving more than three-fold improvement for $k \leq 40$ and an approximately two-fold improvement for $k \in (40,100]$, in comparison with those constructed using the uncorrected matrix $\widetilde{\mathbf{D}}$. This significant enhancement in preserving local neighborhood structure demonstrates our algorithm's advantage in promoting robustness against heteroskedastic noise in distance-based computations.

        \subsection{Self-tuning kernels with corrected distances are robust to heteroskedastic noise}\label{sec:self_tune_eig}

            In this example, we apply our approach to a dataset $\mathbf{Y}$ with non-uniform sampling density and demonstrate that the corrected distances obtained from Algorithm~\ref{alg:debias} can enhance the robustness of density-adaptive kernels, such as the self-tuning kernel~\cite{self_tune}. Specifically, we show that when constructing a Laplacian matrix using the self-tuning kernel applied to our corrected distances, its leading eigenvectors (corresponding to eigenvalues sorted in ascending order) remain robust even in the presence of heteroskedastic noise whose noise magnitudes depend on the clean data. The robustness of leading eigenvectors is crucial for spectral clustering~\cite{spectral_clustering} and dimensionality reduction methods~\cite{laplacian_eigenmap}, as the performance of these techniques depends on the fidelity of the eigenvector representation of the underlying data geometry. 
        
            For this example, we consider a clean dataset $\mathbf{X} = \{\mathbf{x}_i\}_{i=1}^n \subset \mathbb{R}^m$ with sample size $n = 4 \times 10^3$, where each $\mathbf{x}_i \in \mathbf{X}$ is sampled with equal probability from two circles: a large circle $\mathbf{C}_l \subset \mathbb{R}^2$ and a small circle $\mathbf{C}_s \subset \mathbb{R}^2$, both embedded in $\mathbb{R}^m$ via a random orthogonal transformation. Specifically, $\mathcal{C}_l$ is centered at the origin with unit radius and features a non-uniform angular density where $\theta \sim \mathcal{N}(0, (0.17\cdot 2\pi)^2)$, while $\mathcal{C}_s$ is centered at $(1.3,0)$ with a radius of 0.1 and possesses uniform angular density. Formally, each $\mathbf{x}_i \in \mathbf{X}$ is generated as:
            \begin{equation}\label{eq:two_circle}
                    \theta_i \sim 
                    \begin{cases}
                        \mathcal{N}\left(0, (0.17\cdot 2\pi)^2\right), &i \leq 2000 \\
                        \operatorname{Uniform}[0, 2\pi],&i > 2000
                                    \end{cases}, 
                \;\;
                    \mathbf{x}_i = \begin{cases}
                        \mathbf{R}_m \cdot \begin{bmatrix} 
                        \cos(\theta_i), \sin(\theta_i)
                        \end{bmatrix}^{T}, &i \leq 2000 \\
                        \mathbf{R}_m \cdot
                        \begin{bmatrix} 
                             1.3 + \frac{\cos(\theta_i)}{10}, \frac{\sin(\theta_i)}{10}  
                        \end{bmatrix}^{T}, &i > 2000
                        \end{cases},
            \end{equation}
            where $\mathbf{R}_m \in \mathbb{R}^{m\times2}$ has random orthonormal columns. The random noise vectors $\{\boldsymbol{\eta}_i\}_{i=1}^n$ are generated similar to~\eqref{eq:geometry_noise}, where for $\mathbf{x}_i \in \mathbf{X}$ that is generated from $\mathcal{C}_l$, the corresponding $\boldsymbol{\eta}_i$ has a magnitude following $g(\theta_i, \pi)$ in~\eqref{eq:geometry_noise}, while for $\mathbf{x}_i$ generated from $\mathcal{C}_s$, its noise magnitude follows $0.1 \cdot g(\theta_i, 0)$, with the scaling factor accounting for the smaller radius of $\mathcal{C}_s$. We generate $\mathbf{Y}$ following~\eqref{eq:noise_model}. See Figure~\ref{fig:two_circle}(a) for an example of $\mathbf{X}$ and $\mathbf{Y}$ with $m = 2$.
         
            \begin{figure}[!ht]
                \centering
                \includegraphics[width=0.95\textwidth]{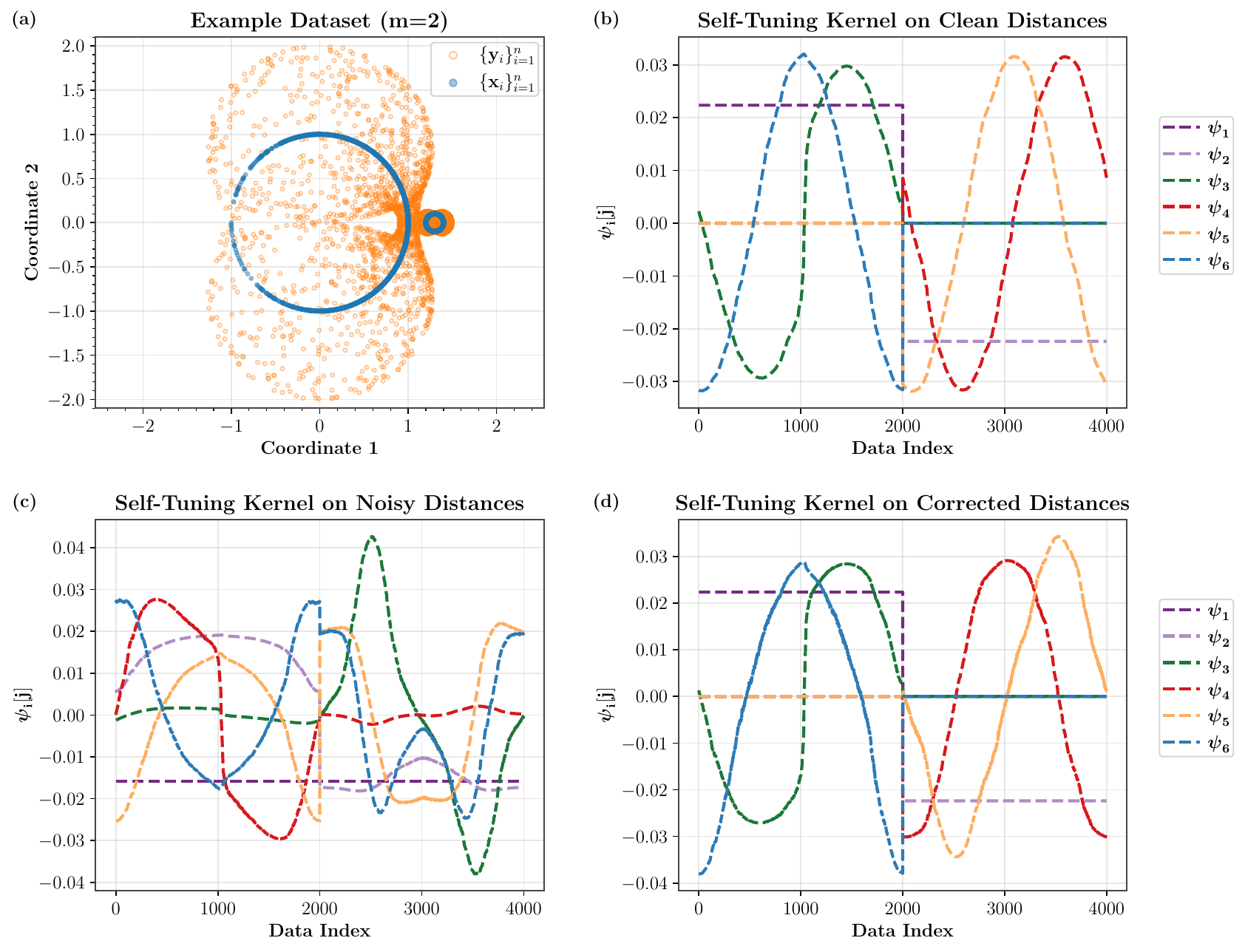}
                \caption{Self-tuning kernels with the corrected distances reveal the clean geometry of a heteroskedastic noise-corrupted dataset. (a) Illustration of simulated data for $n=4\times 10^3$. (b)-(d) Leading eigenvectors of self-tuning Laplacian matrices $\mathbf{L}$, $\widetilde{\mathbf{L}}$, and $\hat{\mathbf{L}}$ constructed from $\mathbf{D}$, $\widetilde{\mathbf{D}}$, and $\hat{\mathbf{D}}$ respectively, according to~\eqref{eq:lap}, with bandwidth set as the distance to the 150th nearest neighbor.}
                \label{fig:two_circle}
            \end{figure}   

            A widely-used approach for constructing affinity matrices that adapt to varying sampling density is through the self-tuning kernel~\cite{self_tune}. Unlike the standard Gaussian kernel which employs a single fixed global bandwidth, the self-tuning kernel adapts to variations in local data density by allowing data-specific bandwidth parameters. Formally, given a squared distance matrix $\mathbf{D}$ defined in~\eqref{eq:dist_mat}, the self-tuning affinity matrix $\mathbf{K}$ is constructed as:
            \begin{equation}\label{eq:self-tune}
                K_{ij} = \exp{\left(-\frac{D_{ij}}{\sigma_i\sigma_j}\right)},
            \end{equation}
            $\forall i,j \in [n]$, where $\sigma_i$ and $\sigma_j$ represent adaptive bandwidths determined by the local density around $\mathbf{x}_i$ and $\mathbf{x}_j$. These bandwidth parameters are typically set as the distance between each point and its $k$th nearest neighbor. A popular normalization for the self-tuning kernel matrix $\mathbf{K}$ from~\eqref{eq:self-tune} is symmetric normalization, where the normalized matrix $\mathbf{W}$ is defined as:
            \begin{equation}\label{eq:symmetric_normalization}
                \mathbf{W} = \operatorname{diag}(\mathbf{d})^{-1/2}\mathbf{K}\operatorname{diag}(\mathbf{d})^{-1/2}, \quad \mathbf{d} = \mathbf{K}\mathbf{1}_n.
            \end{equation}
            
            In this experiment, we compare three symmetrically-normalized similarity matrices: $\mathbf{W}$, $\widetilde{\mathbf{W}}$, and $\hat{\mathbf{W}}$ constructed from $\mathbf{D}$ in~\eqref{eq:dist_mat}, $\widetilde{\mathbf{D}}$ in~\eqref{eq:corrupted_dist_mat}, and $\hat{\mathbf{D}}$ in~\eqref{eq:correction}, respectively. All three matrices are computed following~\eqref{eq:self-tune} and~\eqref{eq:symmetric_normalization}. For each data point, we set the bandwidth as its distance to its 150th nearest neighbor in the corresponding distance matrix. 
        
            To evaluate how effectively these similarity matrices capture the underlying data geometry, we examine the leading eigenvectors of the Laplacian matrix. The spectrum of the Laplacian matrix is particularly illuminating as the leading eigenvectors encode the intrinsic data geometry and provide meaningful low-dimensional embeddings~\cite{Hall_laplacian}. Given a similarity matrix $\mathbf{W}$ defined in~\eqref{eq:symmetric_normalization}, the Laplacian matrix is constructed as:
            \begin{equation}\label{eq:lap}
                \mathbf{L} = \operatorname{diag}(\mathbf{d}) - \mathbf{W}, \quad \mathbf{d} = \mathbf{W} \mathbf{1}_n.
            \end{equation}
            We construct $\mathbf{L}$, $\widetilde{\mathbf{L}}$, and $\hat{\mathbf{L}}$ from $\mathbf{W}$, $\widetilde{\mathbf{W}}$, and $\hat{\mathbf{W}}$ respectively, all following~\eqref{eq:lap}.

            As illustrated in Figure~\ref{fig:two_circle}(b), the first two eigenvectors of $\mathbf{L}$, denoted by $\boldsymbol{\psi}_1$ and $\boldsymbol{\psi}_2$, exhibit distinct piecewise constant behaviors. Specifically, $\boldsymbol{\psi}_1$ maintains a non-zero constant value for all data points from $\mathcal{C}_l$, while being zero for data from $\mathcal{C}_s$, and conversely, $\boldsymbol{\psi}_2$ is constant and non-zero for all data points from $\mathcal{C}_s$ while being zero for data from $\mathcal{C}_l$. This pattern indicates that $\mathbf{W}$ correctly identifies $\mathcal{C}_l$ and $\mathcal{C}_s$ as disconnected. The subsequent eigenvectors ($\boldsymbol{\psi}_3$ through $\boldsymbol{\psi}_6$) exhibit oscillatory patterns resembling sine and cosine waves on data generated from $\mathcal{C}_l$ and $\mathcal{C}_s$ respectively. These patterns are expected as the eigenvectors of the graph Laplacian approximate the eigenfunctions of the Laplace-Beltrami operator, which are sine and cosine functions with different frequencies for the circular geometry of $\mathcal{C}_l$ and $\mathcal{C}_s$.
            
            In contrast, as depicted in Figure~\ref{fig:two_circle}(c), the eigenvectors of $\widetilde{\mathbf{L}}$ fail to capture the correct geometry. Specifically, the second eigenvector $\widetilde{\boldsymbol{\psi}}_2$ exhibits irregular patterns instead of maintaining the expected piecewise-constant property, erroneously suggesting connectivity between $\mathcal{C}_l$ and $\mathcal{C}_s$. This spurious connectivity occurs due to the heteroskedastic noise, where distances are inflated non-uniformly (see~\eqref{eq:squared_distance_cost}), making some true near neighbors on the same circle appear more distant than false near neighbors on the other circle. Such distortion creates non-negligible weights in $\widetilde{\mathbf{W}}$ that incorrectly suggest connections between $\mathcal{C}_l$ and $\mathcal{C}_s$. The subsequent eigenvectors also deviate significantly from the expected oscillatory patterns of sine and cosine waves, failing to reflect the circular geometry inherent in $\mathbf{X}$.  

            Figure~\ref{fig:two_circle}(d) demonstrates that the leading eigenvectors from $\hat{\mathbf{L}}$ closely resemble those obtained from the clean dataset (Figure~\ref{fig:two_circle}(b)). The first two eigenvectors maintain the piecewise-constant property that distinguishes $\mathcal{C}_l$ and $\mathcal{C}_s$. Additionally, the subsequent four eigenvectors exhibit the expected oscillatory patterns resembling sine and cosine waves, accurately reflecting the circular geometry. This result confirms that the distance correction performed by Algorithm~\ref{alg:debias} successfully recovers the true geometric relationships from the corrupted data, thereby conferring robustness to the self-tuning kernel against heteroskedastic noise.

        \subsection{Application to single-cell RNA sequencing (scRNA-seq)}\label{sec:scRNAseq}

            scRNA-seq is a revolutionary technology that enables genome-wide profiling of gene expression in individual cells~\cite{scRNAseq_intro}. It provides unprecedented insights into cellular heterogeneity and has become an indispensable tool for modern biological research. This technology has proven transformative for discovering novel cell types~\cite{novel_celltype1,novel_celltype2} and reconstructing developmental trajectories~\cite{trajectory1,trajectory2}—applications that rely on accurate quantification of cell-cell similarities.
        
            A typical scRNA-seq dataset is represented as a non-negative count matrix $\mathbf{Y} \in \mathbb{Z}_+^{n \times m}$, where $n$ denotes the number of cells (typically $10^3$ - $10^4$), $m$ denotes the number of measured genes (typically on the order of $10^4$), and the entry $y_{ij}$ represents the expression level of gene $j$ in cell $i$. A fundamental challenge in scRNA-seq analysis stems from heteroskedastic noise, where the variance of gene expression depends on its average expression level, with highly expressed genes exhibiting greater variability~\cite{scRNAseq_hetro}. Such heteroskedasticity distorts the geometric relationships between cells, compromising analyses that require accurate similarity measurements. We demonstrate that applying our approach as a preprocessing step effectively mitigates these distortions, yielding more reliable quantification of cell-cell relationships. Additionally, our method provides accurate estimates of the varying noise levels across the cells, which can be beneficial for data quality assessment and control.
        
            In this experiment, we apply Algorithm~\ref{alg:debias} to a scRNA-seq dataset of peripheral blood mononuclear cells (PBMC) from~\cite{zheng2017pbmc}. This dataset measures the expression of 32,738 genes across 94,655 cells, encompassing 11 distinct cell types. The dataset is particularly valuable as a benchmark as it includes cell type annotations derived from flow cytometry—a technology that classifies cells based on surface protein markers—thus providing ground truth labels for the evaluation of computationally derived cell-cell relationships. For our experiment, we randomly sample 500 cells from each of six cell types characterized by distinct surface markers: CD19+ B cells, CD14+CLEC9A- monocytes, CD34+ cells, CD56+ natural killer cells, CD4+ T cells, and CD8+CD45RA+ naive cytotoxic T cells. We refer to this procedure as downsampling hereafter. The resulting downsampled dataset is represented as a cell-by-gene count matrix $\mathbf{Y} \in \mathbb{Z}_+^{n\times m}$, where $n = 3\times 10^3$ and $m = 32,738$. 

            \begin{figure}[!ht]
                \centering
                \includegraphics[width=0.95\textwidth]{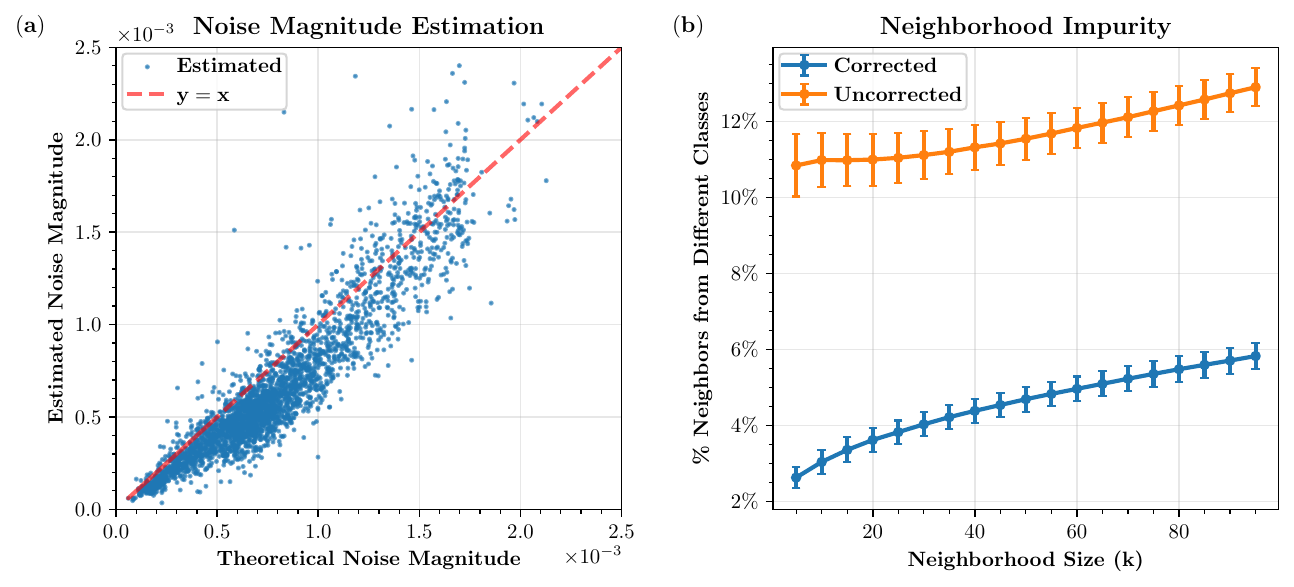}
                 \caption{Application to a downsampled scRNA-seq dataset of PBMC. (a) Comparison of estimated noise magnitudes $\hat{\mathbf{r}}$ from~\eqref{eq:noise_mag_estimate} and the theoretical predictions from the prototypical Poisson model. (b) Neighborhood impurity score (defined in~\eqref{eq:impurity}) for KNN graphs constructed using the corrected distances $\hat{\mathbf{D}}$ and the uncorrected distances $\widetilde{\mathbf{D}}$ as a function of neighborhood size $k$.}
                \label{fig:pbmc_neighborhood}
            \end{figure}  
            
            To account for variations in sequencing depths (i.e., the total number of mRNA molecules captured in a cell), we apply standard library normalization~\cite{best_sc_practice}. Specifically, for each cell $i$, we define its library size as $\sum_{j=1}^m y_{ij}$, and calculate the normalized matrix $\widetilde{\mathbf{Y}}$ by:
            \begin{equation} \label{eq:library_normalize}
                \widetilde{y}_{ij} = \frac{y_{ij}}{\sum_j y_{ij}},
            \end{equation}
            $\forall i\in[n], \forall j\in[m]$. We define noise as the deviation from the expected expression:
            \begin{equation}\label{eq:scrnaseq_noise_def}
                \boldsymbol{\eta}_i = \widetilde{\mathbf{y}}_i - \mathbb{E}\left[\widetilde{\mathbf{y}}_i\right],
            \end{equation}
            $\forall i \in [n]$, where $\widetilde{\mathbf{y}}_i \in \mathbb{R}_+^m$ represents the normalized expression vector of cell $i$.

            We first demonstrate that our approach can accurately estimate noise magnitudes using the dataset $\widetilde{\mathbf{Y}}$ without requiring any prior information. Specifically, we compare $\hat{\mathbf{r}}$ from~\eqref{eq:noise_mag_estimate} with the theoretical values from a prototypical Poisson model~\cite{scrnaseq_poisson_model}. Under the Poisson model, the noise magnitude $\|\boldsymbol{\eta}_i\|_2^2 \approx 1/{\|\mathbf{y}_i\|_1}$~\cite{boris2}, approximately equals the inverse of the library size. As shown in Figure~\ref{fig:pbmc_neighborhood}(a), our estimates $\hat{\mathbf{r}}$ show good agreement with the Poisson model. 

            Many scRNA-seq analyses rely on the similarity relationships captured by KNN graphs. Here, we demonstrate that KNN graphs constructed from $\hat{\mathbf{D}}$ defined in~\eqref{eq:correction} more accurately reflect the ground truth cell similarities. Given a KNN Graph $\mathcal{G}$ with vertex set $\mathbf{V}$ and node label $\mathbf{c}$, we quantify graph quality using the neighborhood impurity score:
            \begin{equation}\label{eq:impurity}
                \operatorname{\text{Impurity Score}}(\mathcal{G}, k) = \frac{\sum_{i\in \mathbf{V}} \sum_{j \in \operatorname{nbr}(i)} \mathbf{1}(c_i \neq c_j)}{k |\mathbf{V|}},
            \end{equation}
            where $\operatorname{nbr}(\cdot)$ gives the indices of a node's $k$-nearest neighbors and $c_i$ denotes the ground truth label for node $i$. This impurity score quantifies the fraction of nearest neighbors belonging to different cell types than the reference cell. Since cells of the same type usually cluster together, a lower impurity score indicates better preservation of the underlying biological structure. This metric quantifies how effectively our approach preserves biologically relevant relationships while mitigating the effects of heteroskedastic experimental noise.

            To compare graph quality, we construct KNN graphs using both the corrected distance matrix $\hat{\mathbf{D}}$ and the uncorrected distance matrix $\widetilde{\mathbf{D}}$. We repeat the downsampling process 10 times, build a KNN graph for each resulting dataset, and calculate the impurity score for each graph. Figure~\ref{fig:pbmc_neighborhood}(b) displays the mean and standard deviation of the neighborhood impurity scores against different neighborhood sizes. We see that KNN graphs constructed from the corrected distances consistently outperform their uncorrected counterparts, exhibiting significantly lower impurity scores (approximately 70\% reduction for $k \leq 30$ and 55\% reduction for $k \geq 30$) and reduced variability across all examined neighborhood sizes. 
    
\section{Discussion}

    In this work, we address the task of estimating noise magnitudes and pairwise Euclidean distances from high-dimensional datasets corrupted by heteroskedastic noise. Specifically, we develop a principled approach that achieves both goals while being fully data-driven and hyperparameter-free, requiring no prior knowledge, and enjoying theoretical guarantees for estimation accuracy under minimal assumptions. We demonstrate the practical utility of our approach in improving the robustness of distance-based computations, including Gaussian kernels, KNN graphs, and notably, the density-adaptive self-tuning kernels.

    The results reported in this work suggest several possible future research directions. On the practical side, Algorithm~\ref{alg:debias} is easily extendable to accommodate alternative methods for estimating the noise magnitudes $\hat{\mathbf{r}}$. For instance, one could solve for $\hat{\mathbf{r}}$ by applying least squares estimation across all $n$ linear systems of~\eqref{eq:linear_system} (one system for each $\mathbf{y}_i \in \mathbf{Y}$). Alternatively, one could solve each linear system individually and then average the individual estimates to obtain $\hat{\mathbf{r}}$ (note that each $\hat{r}_i \in \hat{\mathbf{r}}$ appears in exactly three of the $n$ linear systems). Extensive evaluations are needed to understand the performance of these alternative approaches, which is beyond the scope of our work. On the theoretical side, it is of interest to derive probabilistic estimation error bounds for the Euclidean distance estimates $\hat{\mathbf{D}}$ in the normalized $\ell_\infty$ norm, which could enable rigorous convergence analysis of density estimators and self-tuning kernels based on $\hat{\mathbf{D}}$. Furthermore, the theoretical guarantees in Theorem~\ref{thm:estimation_error_bound} can be extended beyond the sub-Gaussian noise class (Assumption~\ref{assump:noise}) to broader classes, such as sub-exponential noise. We leave such extensions for future work.

\section*{Acknowledgments}
The authors acknowledge using Claude Sonnet for language polishing. This work was supported by NIH grants UM1PA051410, U54AG076043, U54AG079759, U01DA053628, P50CA121974, and R33DA047037.

\begin{appendices} 
\section{Proof for Theorem~\ref{thm:partition_assumption} }\label{sec:proof_partition}
    \subsection{Proof for Case (a)}
        \begin{proof}
            We prove Case (a) by constructing a specific partition $\mathcal{P}_\mathbf{Z}$ and demonstrating that it satisfies the conditions stated in~\eqref{eq:case(a)} with the desired probability.

            We choose an integer $k: = \left\lfloor \left(\frac{n}{ (\log n)^2}\right)^\frac{1}{2(d+2)}\right\rfloor$ and partition the unit hypercube $\mathcal{Q}$ along its edges into $k^d$ disjoint subcubes, each with edge length $\frac{1}{k}$ and volume $\frac{1}{k^d}$. We refer to these subcubes as boxes and denote them by $\mathcal{B}_i$ for $i \in [k^d]$. For each box $\mathcal{B}_i$, we let $p_i$ denote the probability that a randomly drawn data point falls within $\mathcal{B}_i$ under the probability distribution $f$. Note that, with this choice of $k$, there exists $n_1(d) > 0$ such that for all $n > n_1(d)$, 
            \begin{equation}\label{eq:k}
               1 < \left(\frac{n}{2(\log n)^2}\right)^\frac{1}{2(d+2)} \leq \left(\frac{n}{(\log n)^2}\right)^\frac{1}{2(d+2)} -1 \leq k \leq \left(\frac{n}{ (\log n)^2}\right)^\frac{1}{2(d+2)}.
            \end{equation} 

            We define a threshold parameter $a := k^{-(d+2)}$ and use it to categorize the boxes. From~\eqref{eq:k}, we have 
            \begin{equation}\label{eq:a}
                \frac{\log n}{\sqrt{n}} \leq  a =\frac{1}{k^{d+2}} \leq \frac{\sqrt{2}\log n}{\sqrt{n}}.
            \end{equation}
            We categorize the $k^d$ boxes into two disjoint sets $\mathbf{S}_\mathcal{B}^{r}$ and $\mathbf{S}_\mathcal{B}^{ir}$ by comparing the probabilities $\{p_i\}_{i=1}^{k^d}$ with the threshold $a$. Specifically, we define $\mathbf{S}_\mathcal{B}^{r}$ as the set of \textit{regular} boxes, where
            \begin{equation} \label{eq:p_bound_regular}
                p_i \in (a, 1], \quad \forall \mathcal{B}_i \in \mathbf{S}_\mathcal{B}^{r},
            \end{equation}
            and $\mathbf{S}_\mathcal{B}^{ir}$ as the set of \textit{irregular} boxes, where
            \begin{equation} \label{eq:p_bound_irregular}
                p_i \in [ 0, a], \quad \forall \mathcal{B}_i \in \mathbf{S}_\mathcal{B}^{ir}.
            \end{equation}

            Next, we bound the cardinality of the set $\mathbf{S}_\mathcal{B}^{ir}$ (i.e., the number of irregular boxes), denoted by $|\mathbf{S}_\mathcal{B}^{ir}|$. Specifically, since the probabilities $\{p_i\}_{i=1}^{k^d}$ sum to 1, we have
            \begin{equation}
                1 = \sum_{\mathcal{B}_i \in \mathbf{S}_\mathcal{B}^{ir}} p_i + \sum_{\mathcal{B}_i \in \mathbf{S}_\mathcal{B}^{r}} p_i       
                \leq |\mathbf{S}_\mathcal{B}^{ir}| \cdot \max_{\mathcal{B}_i \in \mathbf{S}_\mathcal{B}^{ir}} p_i + (k^d - |\mathbf{S}_\mathcal{B}^{ir}|) \cdot \max_{\mathcal{B}_i \in \mathbf{S}_\mathcal{B}^{r}} p_i 
                \leq |\mathbf{S}_\mathcal{B}^{ir}|\cdot a + (k^d - |\mathbf{S}_\mathcal{B}^{ir}|) \cdot 1.
            \end{equation}
            Hence, we have
            \begin{equation}\label{eq:size_of_irregular_sets}
                |\mathbf{S}_\mathcal{B}^{ir}| \leq \frac{k^d - 1}{1-a}.
            \end{equation}
        
            In the following analysis, we treat all boxes in $\mathbf{S}_\mathcal{B}^{ir}$ as a single aggregated region, denoted by $\mathcal{R}$. The probability $q:= \sum_{\mathcal{B}_i \in \mathbf{S}_\mathcal{B}^{ir}} p_i$ that a randomly drawn data point falls into $\mathcal{R}$ satisfies\begin{equation}\label{eq:q}
                    q \leq a \cdot |\mathbf{S}_\mathcal{B}^{ir}| 
                    \leq \frac{k^d-1}{1-a} \cdot a 
                    = \frac{k^d-1}{1-\frac{1}{k^{d+2}}} \cdot \frac{1}{k^{d+2}}
                    = \frac{k^d - 1}{k^{d+2}-1}
                    \leq \frac{1}{k^2} \leq \left(\frac{2(\log n)^2}{n}\right)^\frac{1}{d+2},
            \end{equation}
            where the first two inequalities follow from~\eqref{eq:p_bound_irregular} and~\eqref{eq:size_of_irregular_sets}, respectively, and the last two inequalities follow from~\eqref{eq:k}. 
          
            To prove Case (a) of Theorem~\ref{thm:partition_assumption}, we proceed in three steps: first, we show that with high probability, all regular boxes in $\mathbf{S}_\mathcal{B}^{r}$ contain at least 4 data points (i.e., $|\mathcal{B}_i| \geq 4$ for all $\mathcal{B}_i \in \mathbf{S}_\mathcal{B}^{r}$, where $|\mathcal{B}_i|$ denotes the number of data points falling within $\mathcal{B}_i$); second, we derive a high probability upper bound on the number of data points falling within $\mathcal{R}$, denoted by $|\mathcal{R}|$; finally, we construct a partition $\mathcal{P}_{\mathbf{Z}}$ and verify that it satisfies~\eqref{eq:case(a)}.

            \paragraph{Step 1:} We first show that any regular box in $\mathbf{S}_\mathcal{B}^{r}$ has small probability of containing fewer than 4 data points. Specifically, for any regular box $\mathcal{B}_i \in \mathbf{S}_\mathcal{B}^{r}$, the occurrence of a random point falling within $\mathcal{B}_i$ follows a  Bernoulli distribution with parameter $p_i$. Applying Hoeffding's inequality~\cite{Hoeffding} to the $n$ independent events (one for each point), for any $h > 0$,
            \begin{equation}\label{eq:hoeffding}
                \mathbb{P}\{|\mathcal{B}_i| - na \leq -h\} 
                \leq \mathbb{P}\{|\mathcal{B}_i| - np_i \leq -h\} 
                \leq \operatorname{exp}\left(-\frac{2h^2}{n}\right),
            \end{equation}
            where the first inequality follows from~\eqref{eq:p_bound_regular}.

            For any $t>0$, we set $h = \sqrt{\frac{1}{2}\left(t+\frac{d}{2(d+2)}\right)n\log n}$ in~\eqref{eq:hoeffding} and apply the union bound over all $|\mathbf{S}_\mathcal{B}^{r}|$ boxes. Then there exists an $n_2(d,t) \geq n_1(d)$ such that for any $n > n_2(d,t)$,
            \begin{equation}\label{eq:regular_mean}
             \begin{aligned}
                na - h &= na - \sqrt{\frac{1}{2}\left(t+\frac{d}{2(d+2)}\right)n\log n}
                \geq n\frac{\log n}{\sqrt{n}} - \sqrt{\frac{1}{2}\left(t+\frac{d}{2(d+2)}\right)n\log n} \\
                &= \left(\sqrt{\log n} - \sqrt{\frac{1}{2}\left(t+\frac{d}{2(d+2)}\right)}\right) \sqrt{n \log n} 
                \geq 4,        
            \end{aligned}        
            \end{equation}
            and 
            \allowdisplaybreaks[4]
            \begin{align}\label{eq:union_box_bound_regular}
                \mathbb{P} \left\{ \exists \text{ }\mathcal{B}_i \in \mathbf{S}_\mathcal{B}^{r}  \text{ s.t. } |\mathcal{B}_i| \leq 4\right\} 
                     &\leq
                     \mathbb{P} \left\{ \exists \text{ }\mathcal{B}_i \in \mathbf{S}_\mathcal{B}^{r} \text{ s.t. } |\mathcal{B}_i| - na \leq -h\right\}  \nonumber\\
                     &\leq |\mathbf{S}_\mathcal{B}^{r}| \cdot \operatorname{exp}\left(-\frac{2h^2}{n}\right) 
                     \leq k^d \cdot \operatorname{exp}\left(-\frac{2h^2}{n}\right) \nonumber\\
                     &= k^d \cdot \operatorname{exp}\left(-\frac{2\left(\sqrt{\frac{1}{2}\left(t+\frac{d}{2(d+2)}\right)n\log n}\right)^2}{n}\right) \\
                     &= k^d \cdot n^{-\left(t+\frac{d}{2(d+2)}\right)}  
                     \leq \left(\frac{n}{(\log n)^2}\right)^\frac{d}{2(d+2)} \cdot n^{-\left(t+\frac{d}{2(d+2)}\right)}\nonumber\\
                     &= \left(\frac{1}{\log n}\right)^\frac{d}{d+2} \cdot n^{-t}
                    \leq \frac{1}{2} n^{-t} \nonumber.
            \end{align}      
            In deriving~\eqref{eq:regular_mean}, we use~\eqref{eq:a}. To derive~\eqref{eq:union_box_bound_regular}, we apply the union bound \sloppy to~\eqref{eq:hoeffding}, use $|\mathbf{S}_\mathcal{B}^{r}| \leq k^d$ in the second line, and apply~\eqref{eq:k} in the fourth line.

            \paragraph{Step 2:} Next, we derive an upper bound for $|\mathcal{R}|$ (i.e., the number of data points in region $\mathcal{R}$) using Hoeffding's inequality~\cite{Hoeffding}: for any $h > 0$,
            \begin{equation}\label{eq:hoeffding2}
                \mathbb{P}\{|\mathcal{R}| - nq \geq h\} \leq \operatorname{exp}\left(-\frac{2h^2}{n}\right).
            \end{equation}
            For any $t>0$, setting $h = \sqrt{\frac{t \cdot n\log n}{2}+\frac{1}{2}n}$ in~\eqref{eq:hoeffding2}, we have:
            \begin{equation}\label{eq:hoeffding_t}
            \begin{aligned}
                \mathbb{P}\left\{|\mathcal{R}| - nq \geq \sqrt{\frac{t\cdot n\log n}{2}+\frac{1}{2}n}\right\} 
                &\leq 
                \operatorname{exp}\left(-2 \cdot \frac{\left(\sqrt{\frac{t \cdot n\log n}{2}+\frac{1}{2}n}\right)^2}{n}\right) \\
                &= \frac{1}{e} n^{-t} \leq \frac{1}{2} n^{-t}.     
            \end{aligned}
            \end{equation}

            Applying the union bound to~\eqref{eq:union_box_bound_regular} and~\eqref{eq:hoeffding_t}, we have that for any $t>0$ and any $n>n_2(d,t)$, with probability at least $1-n^{-t}$, 
            \begin{equation}\label{eq:union_step12}
                |\mathcal{B}_i| \geq 4\text{, } \forall \mathcal{B}_i \in \mathbf{S}_\mathcal{B}^{r} \quad \text{and} \quad |\mathcal{R}| 
                \leq nq + \sqrt{\frac{t \cdot n\log n}{2}+\frac{1}{2}n}.
            \end{equation}

            \paragraph{Step 3:} Lastly, we define a specific partition $\mathcal{P}_\mathbf{Z}$ and verify that it satisfies~\eqref{eq:case(a)} with the desired probability.
            
            In general, $\mathcal{P}_\mathbf{Z}$ is constructed by assigning data points that fall within the same box or region to the same subset. With a slight abuse of notation, we denote these subsets as $\mathcal{B}_i$ or $\mathcal{R}$, and interpret them as a collection of data points that fall within $\mathcal{B}_i$ or $\mathcal{R}$. Specifically, we define $\mathcal{P}_\mathbf{Z}$ based on $|\mathcal{R}|$ as follows:
            \begin{equation}\label{eq:partition}
             \mathcal{P}_\mathbf{Z} :=
                \begin{cases}
                    \{\mathcal{B}_i: |\mathcal{B}_i| > 0, \mathcal{B}_i \in \mathbf{S}_\mathcal{B}^{r}\}, & |\mathcal{R}| = 0,\\
                    \{\mathcal{B}_i: |\mathcal{B}_i| > 0,  \mathcal{B}_i \in  \mathbf{S}_\mathcal{B}^{r}, i\neq j\} \cup \{\widetilde{\mathcal{R}}\} \cup \{\widetilde{\mathcal{B}}_j: |\widetilde{\mathcal{B}}_j| > 0\}, & 1\leq |\mathcal{R}| \leq 3,\\
                    \{\mathcal{B}_i: |\mathcal{B}_i| > 0, \mathcal{B}_i \in \mathbf{S}_\mathcal{B}^{r}\} \cup \{\mathcal{R}\}, &  |\mathcal{R}| \geq 4,
                \end{cases}
            \end{equation}
            where $\widetilde{\mathcal{R}}$ and $\widetilde{\mathcal{B}}_j$ are defined as follows to ensure $|\widetilde{\mathcal{R}}| \geq 4$:
            \begin{enumerate}
            \renewcommand{\labelenumi}{(\roman{enumi})}
                \item if there exists $\mathcal{B}_j \in \{\mathcal{B}_i: |\mathcal{B}_i| > 0,  \mathcal{B}_i \in  \mathbf{S}_\mathcal{B}^{r}\}$ where $|\mathcal{B}_j| \geq 7$, randomly select 3 points from $\mathcal{B}_j$ and merge into $\mathcal{R}$ to form $\widetilde{\mathcal{R}}$. Denote the modified box as $\widetilde{\mathcal{B}}_j$ and note $|\widetilde{\mathcal{B}}_j| \geq 4$.
                
                \item otherwise, $\forall \mathcal{B}_j \in \{\mathcal{B}_i: |\mathcal{B}_i| > 0,  \mathcal{B}_i \in  \mathbf{S}_\mathcal{B}^{r}\} \text{ satisfies } |\mathcal{B}_j| \leq 6$, randomly select a box $\mathcal{B}_j$ that satisfies $|\mathcal{B}_j| \geq 3$, and merge it with $\mathcal{R}$ to form $\widetilde{\mathcal{R}}$. Denote the modified box as $\widetilde{\mathcal{B}}_j$ and note $|\widetilde{\mathcal{B}}_j| = 0$.
            \end{enumerate}

            With $\mathcal{P}_{\mathbf{Z}}$ in~\eqref{eq:partition}, following from~\eqref{eq:union_step12}, we have: for any $t>0$ and any $n>n_2(d,t)$, with probability at least $1-n^{-t}$,
            \begin{equation}\label{eq:partition_size_bound}
                \forall \mathcal{P}_i \in \mathcal{P}_\mathbf{Z} \text{, } |\mathcal{P}_i|\geq 4
            \end{equation}
            Furthermore, we note that in all cases in~\eqref{eq:partition}, the total number of points in the set $\mathbf{S}_\mathcal{B}^{r}$ is smaller than $n$, and the total number of points in $\mathcal{R}$ or $\widetilde{\mathcal{R}}$ (when it exists) is bounded with high probability by $nq + \sqrt{\frac{t \cdot n\log n}{2}+\frac{1}{2}n} + 6$, which follows from~\eqref{eq:union_step12} and the construction of $\widetilde{\mathcal{R}}$. In addition, we upper bound the diameters of $\mathcal{R}$ and $\widetilde{\mathcal{R}}$ by the length of the main diagonal of the unit cube $\mathcal{Q}$, and upper bound the diameter of the rest subsets derived from boxes by the length of the main diagonal of each box. Specifically, using~\eqref{eq:k}, we have
            \begin{equation}\label{eq:individual_diamter_bound}
            \begin{aligned}
                &\operatorname{diam}(\mathcal{R}) \leq \sqrt{d}, \quad \operatorname{diam}(\widetilde{\mathcal{R}}) 
                \leq \sqrt{d},\\
                &\operatorname{diam}(\mathcal{B}_i)
                \leq \frac{\sqrt{d}}{k} 
                \leq \sqrt{d} \left(\frac{2(\log n)^2}{n}\right)^\frac{1}{2(d+2)}, \quad \forall \mathcal{B}_i \in \mathcal{P}_\mathbf{Z} \text{ } (\mathcal{B}_i \neq \mathcal{R} \text{ and }\mathcal{B}_i \neq \widetilde{\mathcal{R}}).   
            \end{aligned}
            \end{equation}
            With~\eqref{eq:individual_diamter_bound} and the previous discussion of the number of points falling inside the set $\mathbf{S}_\mathcal{B}^{r}$ and the region $\mathcal{R}$ (or $\widetilde{\mathcal{R}}$), for any $t>0$, there exists an $n_0(d,t) \geq n_2(d,t)$ such that for any $n>n_0(d,t)$,
            \allowdisplaybreaks[4]
            \begin{align} \label{eq:diamter_bound}
                      &\frac{1}{n} \sum_{\mathcal{P}_i \in \mathcal{P}_\mathbf{Z}}  |\mathcal{P}_i| \left(\operatorname{diam}(\mathcal{P}_i)\right)^2 \nonumber\\
                    = &\frac{1}{n} \sum_{\mathcal{B}_i \in \mathcal{P}_\mathbf{Z}}  |\mathcal{B}_i| \left(\operatorname{diam}(\mathcal{B}_i)\right)^2 + \frac{1}{n}|\mathcal{R}| \left(\operatorname{diam}(\mathcal{R})\right)^2 \cdot \mathbf{1}\{|\mathcal{R}| \geq 4\} \nonumber\\
                    & \hspace{1in}
                    + \frac{1}{n}|\widetilde{\mathcal{R}}| \left(\operatorname{diam}(\widetilde{\mathcal{R}})\right)^2 \cdot \mathbf{1}\{1\leq |\mathcal{R}| \leq 3\}\nonumber\\
                    \leq &\frac{1}{n} \sum_{\mathcal{B}_i \in \mathcal{P}_\mathbf{Z}}  |\mathcal{B}_i| \left(\sqrt{d} \left(\frac{2(\log n)^2}{n}\right)^\frac{1}{2(d+2)}\right)^2 \nonumber\\
                    & \hspace{1in}
                    + \frac{1}{n} \cdot  
                    \left(|\widetilde{\mathcal{R}}| \cdot \mathbf{1}\{1 \leq |\mathcal{R}| \leq 3\} + |\mathcal{R}| \cdot \mathbf{1}\{|\mathcal{R}| \geq 4\} \right) \cdot \left(\sqrt{d}\right)^2 \\
                    = &d \left(\frac{2(\log n)^2}{n}\right)^\frac{1}{d+2} \left(\frac{1}{n} \sum_{\mathcal{B}_i \in \mathcal{P}_\mathbf{Z}} |\mathcal{B}_i|\right)+  \frac{d}{n} \cdot \left(|\widetilde{\mathcal{R}}| \cdot \mathbf{1}\{1 \leq |\mathcal{R}| \leq 3\} +  |\mathcal{R}| \cdot \mathbf{1}\{|\mathcal{R}| \geq 4\}\right) \nonumber \\
                    \leq &d \left(\frac{2(\log n)^2}{n}\right)^\frac{1}{d+2} \left(\frac{n}{n} \right)+  \frac{d}{n} \cdot \left(nq + \sqrt{\frac{t \cdot n\log n}{2}+\frac{1}{2}n } + 6 \right) \nonumber\\
                    = &d \left(\frac{2(\log n)^2}{n}\right)^\frac{1}{d+2} +  d \cdot \left(q+ \sqrt{\frac{t\log n}{2n} + \frac{1}{2n}} + \frac{6}{n}\right) \nonumber\\
                    \leq &d \left(\frac{2(\log n)^2}{n}\right)^\frac{1}{d+2} + d \cdot \left(\left(\frac{2(\log n)^2
                    }{n}\right)^\frac{1}{d+2} + \sqrt{\frac{t\log n}{2n} + \frac{1}{2n}} + \frac{6}{n}\right)\nonumber\\
                    \leq &d \left(\frac{(\log n)^2}{n}\right)^\frac{1}{d+2} \cdot \left(2 \cdot 2^\frac{1}{d+2} +1\right) 
                    \leq 4d \left(\frac{(\log n)^2}{n}\right)^\frac{1}{d+2}, \nonumber
            \end{align}  
            where $\mathbf{1}\{\cdot\}$ denotes the indicator function. The derivation proceeds by: applying~\eqref{eq:individual_diamter_bound} to bound the diameter of each subset (third line), using upper bounds on the number of points falling inside the set $\mathbf{S}_\mathcal{B}^{r}$ and the region $\mathcal{R}$ (or $\widetilde{\mathcal{R}}$) (fifth line), and using~\eqref{eq:q} (seventh line).
            
            Combining~\eqref{eq:partition_size_bound} and~\eqref{eq:diamter_bound}, $\mathcal{P}_{\mathbf{Z}}$ from~\eqref{eq:partition} satisfies the statement in~\eqref{eq:case(a)} with the desired probability.
        \end{proof}

    \subsection{Proof for Case (b)}
        \begin{proof}
            We prove Case (b) by constructing a specific partition $\mathcal{P}_\mathbf{Z}$ and demonstrating that it satisfies the conditions stated in~\eqref{eq:case(b)} with the desired probability.

            We first note that by definition $d \geq 1$. For any such $d$, there exists $t_0 > 0$ such that for any $t > t_0$, the inequality $\frac{1}{3}at^{2d} -t - 4 > 0$ holds, where $a$ is the lower bound for the probability distribution $f$. 
            
            We now construct the partition $\mathcal{P}_\mathbf{Z}$. Let $t$ be any constant satisfying $t > t_0$. We define an integer $k = \left\lceil 
            \left(\frac{n}{t^{2d} \log n}\right)^{\frac{1}{d}}\right\rceil$ and partition the unit hypercube $\mathcal{Q}$ along its edges into $k^d$ disjoint subcubes, each with edge length $\frac{1}{k}$ and volume $\frac{1}{k^d}$. We refer to these subcubes as boxes and denote them by $\mathcal{B}_i$ for $i \in [k^d]$. For each box $\mathcal{B}_i$, we let $p_i$ denote the probability that a randomly drawn data point falls within $\mathcal{B}_i$ under the probability distribution $f$. We note that under this choice of $k$, there exists $n_1(a,d,t) >0$ such that for all $n > n_1(a,d,t)$, \begin{equation}\label{eq:k_cube}
                    \left(\frac{n}{t^{2d} \log n}\right)^\frac{1}{d} 
                    \leq k 
                    \leq \left(\frac{n}{t^{2d} \log n}\right)^\frac{1}{d} + 1 
                    \leq \left(\frac{2n}{t^{2d} \log n}\right)^\frac{1}{d}.
            \end{equation}
      
            We construct the partition $\mathcal{P}_\mathbf{Z}$ by grouping data points according to the box they fall into. Specifically, for each non-empty box $\mathcal{B}_i$, we  form a subset containing all data points in $\mathcal{B}_i$. With a slight abuse of notation, we denote this subset as $\mathcal{B}_i$ and interpret it as a collection of data points that fall within $\mathcal{B}_i$. This gives the partition: \begin{equation}\label{eq:case(b) partition}
                \mathcal{P}_{\mathbf{Z}} = \{\mathcal{B}_i: |\mathcal{B}_i| > 0, i \in [k^d] \},
            \end{equation}
            where $|\mathcal{B}_i|$ denotes the number of data points in $\mathcal{B}_i$.
            We will demonstrate that $\mathcal{P}_\mathbf{Z}$ in~\eqref{eq:case(b) partition} satisfies~\eqref{eq:case(b)} with the desired probability.
            
            To analyze the properties of $\mathcal{P}_\mathbf{Z}$, we first establish a bound on the probability that a randomly drawn point falls within any given box. Since the probability distribution satisfies $f(\mathbf{z}) \geq a$ for any $\mathbf{z} \in \mathcal{Q}$, and each box has volume $\frac{1}{k^d}$, the probability $p_i$ of a point falling in $\mathcal{B}_i$ satisfies:
            \begin{equation} \label{eq:p_bound}
                    p_i \in [ \frac{a}{k^d}, 1].
            \end{equation}

            Next, we bound the probability that any non-empty box $\mathcal{B}_i$ contains exactly 1, 2, or 3 data points. For any $n > \max\{5, n_1(a,d,t)\}$,
            \begin{align}\label{eq:bad_box}
                \mathbb{P} \big\{ |\mathcal{B}_i| \in \{1,2,3\} \big\} 
                &= \binom{n}{1} p_i^1 (1 - p_i)^{n-1} + \binom{n}{2} p_i^2 (1 - p_i)^{n-2} + \binom{n}{3} p_i^3 (1 - p_i)^{n-3}\nonumber\\
                &= n p_i \left((1 - p_i)^2 +\frac{n-1}{2} p_i(1 - p_i) +\frac{(n-1)(n-2)}{6}p_i^2 \right) \left(1-p_i\right)^{n-3}\nonumber\\
                &= \frac{1}{6} n p_i \left( (n^2 -6n+11)p_i^2 + (3n-15) p_i +6 \right) (1 - p_i)^{n-3} \nonumber\\
                &= \frac{1}{6} n p_i \left((n^2 -6n+11)p_i^2 + (3n-15) p_i +6 \right) (1 - p_i)^{\frac{1}{p_i} \cdot p_i \cdot {(n-3)}}\\
                &\leq \frac{1}{6} n p_i \left((n^2 -6n+11)p_i^2 + (3n-15) p_i +6 \right) \cdot e^{- p_i \cdot {(n-3)}}\nonumber\\
                & \leq \frac{1}{6} n \cdot 1 \cdot \left((n^2 -6n+11) \cdot 1 +  (3n-15)\cdot 1 +6\right) \cdot e^{- \frac{a}{k^d} \cdot {(n-3)}} \nonumber\\
                &= \frac{1}{6} n \cdot \left( n^2 - 3n+ 2\right)\cdot e^{- \frac{a}{k^d} \cdot {(n-3)}}
                \leq \frac{1}{6}n \cdot (2n^2) \cdot e^{- \frac{a}{k^d} \cdot {(n-3)}} \nonumber\\
                &= \frac{1}{3} n^3 \cdot e^{- \frac{a}{k^d} \cdot {(n-3)}} \nonumber.
            \end{align}
            Here, we apply $(1-p_i)^{\frac{1}{p_i}} \leq \frac{1}{e}$ in the fifth line, and use~\eqref{eq:p_bound} in the sixth line.
    
            Applying the union bound to~\eqref{eq:bad_box} over all $k^d$ boxes, there exists \sloppy $n_0(a,d,t) > \max\{9,n_1(a,d,t)\}$ such that for any $n > n_0(a,d,t)$, we have
            \begin{equation} \label{eq:union_box_bound}
            \begin{aligned}
                &\mathbb{P} \{ \exists \text{ }\mathcal{B}_i, i \in[k^d] \text{ s.t. } |\mathcal{B}_i| \in \{1, 2,3\} \}   
                \leq  k^d \cdot \frac{1}{3} n^3 \cdot e^{- \frac{a}{k^d} \cdot {(n-3)}}\\
                &\leq \left(\frac{2n}{t^{2d} \log n}\right) \cdot \frac{1}{3} n^3 \cdot e^{- a\frac{t^{2d}}{2}\frac{\log n}{n} (n-3)}
                =\frac{2}{3t^{2d} \log n} \cdot n^{4-\frac{n-3}{2n} at^{2d}}  \\
                &\leq \frac{2}{3t^{2d} \log n} \cdot n^{4-\frac{1}{3} at^{2d}} \leq n^{-t}.
            \end{aligned}
            \end{equation} 
            Here, the second inequality follows from~\eqref{eq:k_cube}, and the last inequality holds by  our choice of $t > t_0$, which guarantees $4-\frac{1}{3}at^{2d} < -t$. The bound in~\eqref{eq:union_box_bound} establishes that for the partition $\mathcal{P}_\mathbf{Z}$ in~\eqref{eq:case(b) partition}, all its subsets contain at least 4 data points with probability at least $1-n^{-t}$.

            Next, we derive the bound on the weighted average diameter. For any subset $\mathcal{P}_j \in \mathcal{P}_\mathbf{Z}$, its diameter is bounded by the length of the main diagonal of the corresponding box, which is $\frac{\sqrt{d}}{k}$. From~\eqref{eq:k_cube}, we have $\frac{\sqrt{d}}{k} \leq t^2\sqrt{d} \left(\frac{\log n}{n}\right)^\frac{1}{d}$. This leads to:
            \begin{equation}\label{eq:avg_diam}
                    \begin{aligned}
                        \frac{1}{n} \sum_{\mathcal{P}_j \in \mathcal{P}_\mathbf{Z}}  |\mathcal{P}_j| \left(\operatorname{diam}(\mathcal{P}_j)\right)^2 &\leq \frac{1}{n}\sum_{\mathcal{P}_j \in \mathcal{P}_\mathbf{Z}}  |\mathcal{P}_j| \cdot \left(t^2\sqrt{d} \left(\frac{\log n}{n}\right)^\frac{1}{d}  \right)^2 \\
                        &= \frac{1}{n} \cdot t^{4} d\left(\frac{\log n}{n}\right)^\frac{2}{d} \sum_{\mathcal{P}_j \in \mathcal{P}_\mathbf{Z}}  |\mathcal{P}_j| 
                        = \frac{1}{n} \cdot t^{4} d \left(\frac{\log n}{n}\right)^\frac{2}{d} \cdot n  \\
                        &=  t^{4} d \left(\frac{\log n}{n}\right)^\frac{2}{d}
                    \end{aligned}
            \end{equation}
            Combining~\eqref{eq:union_box_bound} and~\eqref{eq:avg_diam}, for any $t > t_0$ and $n > n_0(a,d,t)$, there exists a partition $\mathcal{P}_\mathbf{Z}$ defined in~\eqref{eq:case(b) partition} that satisfies~\eqref{eq:case(b)} with probability at least $1 - n^{-t}$. 
        \end{proof}

        
        \section{Proof for Corollary~\ref{col:data_generating_process}}\label{sec:proof_corollary}

        \begin{proof}     

            For each $j \in [k]$, applying Theorem~\ref{thm:partition_assumption} to the corresponding dataset $\mathbf{Z}^{(j)} = \{\mathbf{z}_i^{(j)}\}_{i=1}^{n_j}$, we have that as $n_j \to \infty$, with probability approaching 1, there exists a partition $\mathcal{P}_{\mathbf{Z}^{(j)}} = \{ \mathcal{P}_{l}^{(j)} \}_{l=1}^{L^{(j)}}$ such that
            \begin{equation}\label{eq:z_avg}
                \lim_{n_j \to \infty} \frac{1}{n_j} \sum_{l=1}^{L^{(j)}} |\mathcal{P}_{l}^{(j)}| \left(\operatorname{diam}(\mathcal{P}_{l}^{(j)})\right)^2 = 0, \quad \text{and} \quad |\mathcal{P}_{l}^{(j)}| \geq 4, \text{ } \forall l \in [L^{(j)}].
            \end{equation} 
            
            Next, for each $j \in [k]$, we define the corresponding partition for the embedded dataset $\mathbf{X}^{(j)} = \{\mathbf{x}_i^{(j)}\}_{i=1}^{n_j}$ as $\mathcal{P}_{\mathbf{X}^{(j)}} = \{ \mathcal{P}^{\prime (j)}_{l} \}_{l=1}^{L^{(j)}}$, where $\mathcal{P}^{\prime (j)}_{l} = \{\mathbf{x}_i^{(j)} : \mathbf{z}_i^{(j)} \in \mathcal{P}^{(j)}_{l}\}$. That is, we partition $\mathbf{X}^{(j)}$ in the same way as $\mathbf{Z}^{(j)}$, with each subset $\mathcal{P}^{\prime (j)}_{l}$ containing $\mathbf{x}_i^{(j)}$ if and only if the corresponding $\mathbf{z}_i^{(j)} \in \mathcal{P}^{(j)}_{l}$ for any $i \in [n_j]$.

            We note that orthogonal transformations preserve distances. Specifically, since each $\mathbf{x}_i^{(j)} \in \mathbf{X}^{(j)}$ relates to $\mathbf{z}_i^{(j)}$ via $\mathbf{x}_i^{(j)} = \mathbf{R}^{(j)}\mathbf{z}_i^{(j)}$, where $\mathbf{R}^{(j)}$ satisfies $\left(\mathbf{R}^{(j)}\right)^{T} \mathbf{R}^{(j)} = \mathbf{I}$, we have
            \begin{equation}\label{eq:same_diam}
                \operatorname{diam}(\mathcal{P}_{l}^{(j)}) = \operatorname{diam}(\mathcal{P}_{l}^{\prime(j)}),
            \end{equation} 
            for any $l \in [L^{(j)}]$. 

            Therefore, for any $j \in [k]$, by the construction of $\mathcal{P}_{\mathbf{X}^{(j)}}$ and the distance preservation property in~\eqref{eq:same_diam}, the partition $\mathcal{P}_{\mathbf{X}^{(j)}}$ satisfies the following analogous to~\eqref{eq:z_avg}: as $n_j \to \infty$, with probability approaching 1,
            \begin{equation}\label{eq:x_avg}
                \lim_{n_j \to \infty} \frac{1}{n_j} \sum_{l=1}^{L^{(j)}} |\mathcal{P}_{l}^{\prime(j)}| \left(\operatorname{diam}(\mathcal{P}_{l}^{\prime(j)})\right)^2 = 0, \quad \text{and} \quad |\mathcal{P}_{l}^{\prime(j)}| \geq 4, \text{ } \forall l \in [L^{(j)}].
            \end{equation}

            Finally, we define the partition for the merged dataset $\mathbf{X} = \bigcup_{j=1}^k \{\mathbf{x}_i^{(j)}\}_{i=1}^{n_j}$ as $\mathcal{P}_\mathbf{X} = \bigcup_{j=1}^k \mathcal{P}_{\mathbf{X}^{(j)}}$. By the construction of $\mathcal{P}_\mathbf{X}$, as $n_j \to \infty$ for all $j \in [k]$, we have the following result with probability approaching 1: 
            \begin{equation}
                |\mathcal{P}| \geq 4 \text{ } \forall  \mathcal{P} \in\mathcal{P}_\mathbf{X},
            \end{equation}
            and
            \begin{equation}\label{eq:corollary_limit}
            \begin{aligned}
                &\lim_{\substack{n_1, n_2, \ldots, n_k \to \infty}} \frac{1}{n} \sum_{\mathcal{P} \in \mathcal{P}_\mathbf{X}} |\mathcal{P}| \left(\operatorname{diam}(\mathcal{P})\right)^2 \\
                &\hspace{1in}= \lim_{\substack{n_1, n_2, \ldots, n_k \to \infty}} \sum_{j=1}^k \frac{n_j}{n} \left( \frac{1}{n_j} \sum_{l=1}^{L^{(j)}} |\mathcal{P}_{l}^{\prime(j)}| \left(\operatorname{diam}(\mathcal{P}_{l}^{\prime(j)})\right)^2\right) 
                = 0,
            \end{aligned}
            \end{equation}
            where $n = \sum_{j=1}^k n_j$. In deriving~\eqref{eq:corollary_limit}, we use~\eqref{eq:x_avg} for each $j \in [k]$ and apply the union bound over the $k$ events.
        \end{proof}
        
        \section{Auxiliary Definition and Lemma for the Proof of Lemma~\ref{lem:Geometric LSAP Cost}}\label{sec:perfect_matching}

            In Section~\ref{sec:Geometric LSAP Cost}, we prove Lemma~\ref{lem:Geometric LSAP Cost} using a graph-theoretical approach based on Hall's Marriage Theorem~\cite{Hall}. To facilitate this proof, we first establish the necessary foundations in this section. Specifically, we define the notion of \textit{perfect matching} and present a sufficient condition for the existence of a perfect matching in certain bipartite graphs (see Lemma~\ref{lem:bipartite-matching}). These results are standard and can be found in the classical graph theory literature, such as~\cite{graph_theory}. We include them here for completeness.
            \begin{definition}\label{def:perfect_matching}
                Let $G = (U, V; E)$ be a bipartite graph with $|U| = |V|$, where $|\cdot|$ measures the cardinality of a set. We say that $G$ has a perfect matching if there exists a subset of edges $M \subseteq E$ with $|M| = |U| = |V|$ such that
                \begin{enumerate}[label=(\roman*)]
                    \item each vertex $u \in U$ is incident to exactly one edge in $M$;
                    \item each vertex $v \in V$ is incident to exactly one edge in $M$.
                \end{enumerate}
            Equivalently, $M$ defines a bijection between the vertex sets $U$ and $V$.
            \end{definition}

            \begin{lemma}\label{lem:bipartite-matching}
                Let $G = (U, V; E)$ be a bipartite graph, where $|U| = |V| = n$. Let $\deg(v)$ denote the degree of any vertex $v \in U \cup V$ (i.e., the number of edges that are incident to vertex $v$). If $\deg(v) \geq \frac{n}{2}$ for every vertex $v \in U \cup V$, then $G$ contains a perfect matching.
            \end{lemma}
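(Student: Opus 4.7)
The plan is to prove Lemma~\ref{lem:bipartite-matching} by invoking Hall's Marriage Theorem, which asserts that a bipartite graph $G = (U, V; E)$ with $|U| = |V| = n$ admits a perfect matching if and only if Hall's condition holds: for every subset $S \subseteq U$, the neighborhood $N(S) \subseteq V$ satisfies $|N(S)| \geq |S|$. Given the minimum degree assumption $\deg(v) \geq n/2$ for all $v \in U \cup V$, the task reduces to verifying Hall's condition by a short case analysis on $|S|$.

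First, I would handle the small-subset case $|S| \leq n/2$. Choose any vertex $u \in S$. Since $N(\{u\}) \subseteq N(S)$, we have $|N(S)| \geq \deg(u) \geq n/2 \geq |S|$, which immediately establishes Hall's condition for this range.

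Next, I would address the large-subset case $|S| > n/2$. The key observation here is that in this regime $N(S) = V$. To see this, suppose for contradiction that some $v \in V$ lies outside $N(S)$. Then every edge incident to $v$ must go to $U \setminus S$, so $\deg(v) \leq |U \setminus S| = n - |S| < n/2$, contradicting the minimum-degree assumption. Hence $|N(S)| = n \geq |S|$, completing the verification of Hall's condition.

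The argument is elementary, so no step presents a serious obstacle; the only subtlety is remembering to exploit the degree condition on \emph{both} sides of the bipartition — the first case uses only $\deg(u) \geq n/2$ for $u \in U$, while the second case crucially uses $\deg(v) \geq n/2$ for $v \in V$. Once Hall's condition is confirmed for every $S \subseteq U$, Hall's theorem yields a matching saturating $U$, and since $|U| = |V|$, this matching is automatically perfect in the sense of Definition~\ref{def:perfect_matching}.
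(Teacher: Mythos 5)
Your proposal is correct and follows essentially the same route as the paper's proof: both verify Hall's condition by splitting into the cases $|S| \leq n/2$ (using the degree of a single $u \in S$) and $|S| > n/2$ (showing $\mathcal{N}(S) = V$ via the degree bound on vertices of $V$). No substantive differences to note.
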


            \begin{proof}
                By Hall's marriage theorem~\cite{Hall}, a bipartite graph $G = (U, V; E)$ has a perfect matching if and only if 
                \begin{equation}\label{eq:Hall'scondition}
                |S| \leq |\mathcal{N}(S)|, \quad \forall S \subseteq U,
                \end{equation}
                where $\mathcal{N}(S)$ is the set of all neighbors of vertices in $S$ (i.e., $\mathcal{N}(S) = \bigcup_{u \in S} \mathcal{N}(u)$, with $\mathcal{N}(u) = \{v \in V \mid (v,u) \in E\}$ denoting the set of neighbors of the vertex $u$), and $|\cdot|$ denotes the cardinality. 
            
                To prove Lemma~\ref{lem:bipartite-matching}, it suffices to verify~\eqref{eq:Hall'scondition}. Consider any $S\subseteq U$: 
                \begin{enumerate}[label=(\alph*)]
                    \item If $|S| \leq \frac{n}{2}$, since for any $u \in S$, $|\mathcal{N}(u)| = \deg(u) \geq \frac{n}{2}$, we have 
                    \begin{equation}\label{eq:hall's lemma(a)}
                        |\mathcal{N}(S)| = 
                        \left|\bigcup_{u \in S} \mathcal{N}(u)\right| \geq |\mathcal{N}(u)| \geq \frac{n}{2} \geq |S|, \quad \forall u \in S.
                    \end{equation}
                    \item Otherwise, $\frac{n}{2} < |S| \leq n$, and $|U \setminus S| < \frac{n}{2}$. For any $v \in V$, since $\deg(v) \geq \frac{n}{2}$, $v$ must have at least one neighbor in $S$ (otherwise, $\deg(v) \leq |U \setminus S| < \frac{n}{2}$, which leads to a contradiction). Consequently, $ V \subseteq \mathcal{N}(S)$. By the definition of a bipartite graph, we have $\mathcal{N}(S) \subseteq V$, leading to $\mathcal{N}(S) = V$. As a result, we have
                    \begin{equation}\label{eq:hall's lemma(b)}
                        |\mathcal{N}(S)| = |V| = n \geq |S|.
                    \end{equation}
                \end{enumerate}
                Combining~\eqref{eq:hall's lemma(a)} and~\eqref{eq:hall's lemma(b)},~\eqref{eq:Hall'scondition} is satisfied for all subsets $S \subseteq U$. Therefore, $G$ contains a perfect matching.
        \end{proof}
        
        \section{Proof for Lemma~\ref{lem:Geometric LSAP Cost}}\label{sec:Geometric LSAP Cost}
        \begin{proof}
            For ease of analysis, we reorder the data points (i.e., rows) in $\mathbf{X}$ according to the partition $\mathcal{P}_\mathbf{X} = \{\mathcal{P}_i\}_{i=1}^{k}$ in Assumption~\ref{assump:generalized_cluster}, such that data points belonging to the same subset are grouped together:  
            \begin{equation}
                \mathbf{X} = 
                \begin{bmatrix}
                    -\mathbf{x}_1^{\top}- \\
                    -\mathbf{x}_2^{\top}- \\
                    \vdots \\
                    -\mathbf{x}_n^{\top}-
                \end{bmatrix}\!
                \begin{matrix*}[l]
                    \left. \vphantom{\begin{array}{c}
                        -\mathbf{x}_1^{\top}- \\
                        -\mathbf{x}_2^{\top}-
                    \end{array}} \right\} \mathcal{P}_1 \\
                    \vphantom{\begin{array}{c}
                        \vdots
                    \end{array}} \vdots \\
                    \left. \vphantom{\begin{array}{c}
                        -\mathbf{x}_n^{\top}-
                    \end{array}} \right\} \mathcal{P}_k
                \end{matrix*}.
                \end{equation}
                
            We define a permutation matrix $\mathbf{P}_1$ as:
            \begin{equation}\label{eq:P1}
                \mathbf{P}_1 = 
                \left[
                \begin{array}{cccc}
                \mathbf{B}_1 & & & \\
                & \mathbf{B}_2 & & \\
                & & \ddots & \\
                & & & \mathbf{B}_k
                \end{array} 
                \right], \quad
                \mathbf{B}_i = \begin{bmatrix}
                0 & 1 & 0 & \cdots & 0 \\
                0 & 0 & 1 & \cdots & 0 \\
                0 & 0 & \ddots & \ddots & \vdots \\
                \vdots & \vdots & \ddots & 0 & 1 \\
                1 & 0 & \cdots & 0 & 0
                \end{bmatrix}, \quad \forall i \in [k], 
            \end{equation}
            where each block $\mathbf{B}_i$ on the diagonal of $\mathbf{P}_1$ is of size $|\mathcal{P}_i| \times |\mathcal{P}_i|$ and is a cyclic permutation matrix with 1s on the superdiagonal. We note that $\mathbf{P}_1$ is feasible for the LSAP optimization in~\eqref{eq:lemma_lsap} with the cost matrix $\mathbf{D}$ as $\operatorname{diag}(\mathbf{P}_1) = \mathbf{0}$.  
            
            By the construction of $\mathbf{P}_1$ in~\eqref{eq:P1}, we establish the following bound for $\mathbf{P}^{(1)}$ from~\eqref{eq:lemma_lsap}:\begin{equation} \label{eq:clean_iter1}
            \begin{aligned}
              \frac{1}{n}\operatorname{Tr}((\mathbf{P}^{(1)})^T{\mathbf{D}})
              &= \frac{1}{n} \min_{\mathbf{P} \in \mathcal{P}^n} \operatorname{Tr}(\mathbf{P}^T\mathbf{D}) 
              \leq \frac{1}{n}\operatorname{Tr}(\mathbf{P}_1^T{\mathbf{D}}) \\
               &\leq \frac{1}{n} \sum_{\mathcal{P}_i \in \mathcal{P}_\mathbf{X}} |\mathcal{P}_i| \left(\operatorname{diam}(\mathcal{P}_i)\right)^2
               \leq  c n^{-\alpha}.
            \end{aligned}
            \end{equation}
            The first inequality follows from the optimality of the minimization. For the second inequality, we note that each diagonal element of $\mathbf{P}_1^T\mathbf{D}$ represents the squared distance between a data point and another point from the same subset. Thus, by the definition of diameter (see Definition~\ref{def:diameter}), each diagonal element of $\mathbf{P}_1^T\mathbf{D}$ is upper bounded by the squared diameter of the corresponding subset. Applying this upper bound for each diagonal element and summing yields the second inequality. The last inequality follows from Assumption~\ref{assump:generalized_cluster}. 

            Next, we demonstrate that for the LSAP optimization with the masked cost matrix $\mathbf{D}^\prime$ in~\eqref{eq:masked_D_def}, there exists a feasible permutation matrix $\mathbf{P}_2$ that shares the same block structure as $\mathbf{P}_1$ from~\eqref{eq:P1}. Specifically, $\mathbf{P}_2$ can be written as:
            \begin{equation}\label{eq:P2}
                \mathbf{P}_2 = 
                \left[
                \begin{array}{cccc}
                \mathbf{B}_1^\prime & & & \\
                & \mathbf{B}_2^\prime & & \\
                & & \ddots & \\
                & & & \mathbf{B}_k^\prime
                \end{array} 
                \right],
            \end{equation}
            where each $\mathbf{B}_i^{\prime}$ for any $i \in [k]$ is a permutation matrix of size $|\mathcal{P}_i| \times |\mathcal{P}_i|$, matching the size of the corresponding block $\mathbf{B}_i$ in $\mathbf{P}_1$.

            To prove the existence of $\mathbf{P}_2$, it suffices to prove the existence of each permutation matrix $\mathbf{B}_l^{\prime}$ for all $l \in [k]$. We note that a permutation matrix can be interpreted as a bijection from the set of rows to the set of columns. Thus, the existence of each permutation matrix $\mathbf{B}_l^{\prime}$ is equivalent to the existence of a perfect matching (see Definition~\ref{def:perfect_matching}) in a bipartite graph constructed from the rows and columns of $\mathbf{B}_l^\prime$. Specifically, for each $\mathbf{B}_l^\prime$ with $n_l= |\mathcal{P}_l|$ rows and columns, we construct a bipartite graph $G_l = (U_l, V_l; E_l)$, where:
            \begin{enumerate}[label=(\roman*)]
                \item $U_l= \{u_1, u_2, \ldots, u_{n_l}\}$ and $V_l= \{v_1, v_2, \ldots, v_{n_l}\}$, with $u_j$ and $v_j$ representing the $j$th row and column of $\mathbf{B}_l^\prime$, respectively;
                \item $E_l = \{(u_i,v_j) \mid \text{the corresponding entry in } \mathbf{D}^\prime \text{ is not} +\infty\}$.
            \end{enumerate}
            We note (ii) removes the edges between rows and columns that correspond to $+\infty$ in the cost matrix $\mathbf{D}^\prime$. This construction ensures that if a perfect matching exists in each $G_l$ for all $l \in [k]$, the resulting $\mathbf{P}_2$ will be feasible for the LSAP optimization with $\mathbf{D}^\prime$. 
            
            We now prove the existence of each $\mathbf{B}_l^\prime$ for all $l \in [k]$. By the definition of the edge set $E_l$, for each vertex $s \in U_l \cup V_l$, we have $\deg(s) \geq |\mathcal{P}_l| - 2$. This is because $\mathbf{D}^\prime$ from~\eqref{eq:masked_D_def} contains at most two $+\infty$ entries in each row and each column. As a result, when we focus on the block in $\mathbf{D}^\prime$ that corresponds to $\mathbf{B}_l^{\prime}$, each row and column contains at most two $+\infty$ entries. Thus, by the construction of $G_l$, we exclude at most two edges per vertex. Under the minimum size condition (i.e., $\forall l \in [k]$, $|\mathcal{P}_l| \geq 4$), we have $|\mathcal{P}_l| - 2 \geq \frac{|\mathcal{P}_l|}{2}$. By Lemma~\ref{lem:bipartite-matching}, a perfect matching exists for each $G_l$, thus establishing the existence of each $\mathbf{B}_l^\prime$, and consequently $\mathbf{P}_2$.

            Since $\mathbf{P}_2$ in~\eqref{eq:P2} shares the same block structure as $\mathbf{P}_1$ from~\eqref{eq:P1}, following the same derivation in~\eqref{eq:clean_iter1}, we have:
            \begin{equation}
            \begin{aligned}
                \frac{1}{n}\operatorname{Tr}((\mathbf{P}^{(2)})^T{\mathbf{D}^{\prime}})
              &= \frac{1}{n} \min_{\mathbf{P} \in \mathcal{P}^n} \operatorname{Tr}(\mathbf{P}^T\mathbf{D}^{\prime}) \\
              &\leq \frac{1}{n}\operatorname{Tr}(\mathbf{P}_2^T{\mathbf{D}}^{\prime}) 
              = \frac{1}{n}\operatorname{Tr}(\mathbf{P}_2^T{\mathbf{D}})
               \leq \frac{1}{n} \sum_{\mathcal{P}_i \in \mathcal{P}_\mathbf{X}} |\mathcal{P}_i| \left(\operatorname{diam}(\mathcal{P}_i)\right)^2
               \leq  c n^{-\alpha},
            \end{aligned}
            \end{equation}
            where $\frac{1}{n}\operatorname{Tr}(\mathbf{P}_2^T{\mathbf{D}}^{\prime}) 
              = \frac{1}{n}\operatorname{Tr}(\mathbf{P}_2^T{\mathbf{D}})$ follows from the feasibility of $\mathbf{P}_2$ and~\eqref{eq:masked_D_def}.
        \end{proof}
      
        
        \section{Auxiliary Definition for the Proof of Theorem~\ref{thm:estimation_error_bound}}
            For notational simplicity, we introduce the notation of \textit{order in probability}, denoted as $\widetilde{\mathcal{O}}_{m,n}$~\cite{boris2}.
            \begin{definition} \label{def:order_with_high_probability}
                    For a random variable $z$, we say $z = \widetilde{\mathcal{O}}_{m,n}\left(f(m,n)\right)$ if there exist $t_0, m_0, n_0, C >0$ such that for all $t>t_0$, $m>m_0$, and $n>n_0$,
                    \begin{equation}
                        |z| \leq {t} C f(m,n),
                    \end{equation}
                    with probability at least $1-n^{-t}$.
            \end{definition}
        
            Note that under Definition~\ref{def:order_with_high_probability}, given a polynomial function $P(n)$, if for any $ i = 1, 2, \ldots,P(n)$, $z_i = \widetilde{\mathcal{O}}_{m,n}\left(f(m,n)\right)$, then 
            \begin{equation}\label{eq:union_sum_max}
                \sum_{i=1}^{P(n)} z_i = \widetilde{\mathcal{O}}_{m,n}\left(P(n)\cdot f(m,n)\right), 
            \end{equation}
            which can be derived by applying the union bound.

        
        \section{Supporting Lemma for the Proof of Theorem~\ref{thm:estimation_error_bound}\label{sec:epsilon_bound_in_prob}}

        \begin{lemma} \label{lem:epsilon_bound_in_prob}
            Given a dataset $\mathbf{X} = \{\mathbf{x}_i\}_{i=1}^n \subset \mathbb{R}^{m}$  
            where $\|\mathbf{x}_i\|_2 \leq 1$ for all $i \in [n]$, under Assumption~\ref{assump:noise}, there exist constants $c^{\prime}, m_0, t_0 > 0$ such that for any $t > t_0$ and $m > m_0$,
                \begin{equation} \label{eq:epsilon_bound_in_prob}
                    \Pr\left\{|\epsilon_{ij}| > c^{\prime} t \cdot \mathcal{E}(m)\right\} \leq m^{-t},
                \end{equation}
                for any $i, j \in [n]$ with $i \neq j$, where $\epsilon_{ij}$ is defined in~\eqref{eq:squared_distance_cost}, and $\mathcal{E}(m) := \sqrt{\log m} \cdot \max\{E, E^2\sqrt{m}\}$ for $E$ defined in Assumption~\ref{assump:noise}.
        \end{lemma}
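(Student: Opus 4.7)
} The plan is to split $\epsilon_{ij}$ into its linear and cross-quadratic parts, control each separately, and combine via a union bound. Write $\epsilon_{ij}=T_1+T_2$ with
\begin{equation*}
T_1 = 2\langle \mathbf{x}_i-\mathbf{x}_j,\boldsymbol{\eta}_i-\boldsymbol{\eta}_j\rangle,\qquad T_2 = -2\langle \boldsymbol{\eta}_i,\boldsymbol{\eta}_j\rangle.
\end{equation*}
The term $T_1$ behaves like a sub-Gaussian random variable of scale $E$, while $T_2$ is a product of two independent sub-Gaussian vectors and therefore should behave at the scale $E^2\sqrt{m}$ (up to log factors), which is precisely what is required to match the two cases in $\mathcal{E}(m)=\sqrt{\log m}\cdot\max\{E,E^2\sqrt{m}\}$.

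For $T_1$, I would observe that $\mathbf{u}:=\mathbf{x}_i-\mathbf{x}_j$ is deterministic with $\|\mathbf{u}\|_2\le 2$, and that $\boldsymbol{\eta}_i,\boldsymbol{\eta}_j$ are independent, zero-mean, sub-Gaussian vectors with $\|\boldsymbol{\eta}_i\|_{\psi_2},\|\boldsymbol{\eta}_j\|_{\psi_2}\le E$. By the definition of the vector sub-Gaussian norm, $\langle\mathbf{u},\boldsymbol{\eta}_i\rangle$ and $\langle\mathbf{u},\boldsymbol{\eta}_j\rangle$ are each sub-Gaussian random variables with norm at most $\|\mathbf{u}\|_2\,E\le 2E$, so by independence $T_1$ is sub-Gaussian with norm $\lesssim E$. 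A standard sub-Gaussian tail bound then yields $\Pr\{|T_1|>c_1 E\sqrt{t\log m}\}\le \tfrac{1}{2}m^{-t}$ for a suitable absolute constant $c_1$.

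For $T_2$, the plan is to condition on $\boldsymbol{\eta}_j$. Given $\boldsymbol{\eta}_j$, the quantity $\langle\boldsymbol{\eta}_i,\boldsymbol{\eta}_j\rangle$ is sub-Gaussian in $\boldsymbol{\eta}_i$ with norm at most $E\,\|\boldsymbol{\eta}_j\|_2$, so conditionally $\Pr\{|T_2|>s\mid\boldsymbol{\eta}_j\}\le 2\exp(-cs^2/(E^2\|\boldsymbol{\eta}_j\|_2^2))$. Choosing $s=c_2 E\|\boldsymbol{\eta}_j\|_2\sqrt{t\log m}$ gives conditional probability $\le \tfrac{1}{4}m^{-t}$. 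It then remains to bound $\|\boldsymbol{\eta}_j\|_2$ unconditionally. Using the standard concentration of the norm of a sub-Gaussian random vector (e.g., Theorem 3.1.1 in~\cite{HDP_book} applied to $\|\boldsymbol{\eta}_j\|_2^2$, or equivalently the Hanson–Wright inequality), one obtains $\|\boldsymbol{\eta}_j\|_2\le c_3 E(\sqrt{m}+\sqrt{t\log m})$ with probability at least $1-\tfrac{1}{4}m^{-t}$. For $m>m_0$ large enough and $t>t_0$ fixed, $\sqrt{m}$ dominates $\sqrt{t\log m}$, so $\|\boldsymbol{\eta}_j\|_2\lesssim E\sqrt{m}$ and hence $|T_2|\lesssim E^2\sqrt{m}\sqrt{t\log m}$ on this joint high-probability event.

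Combining the two bounds by a final union bound gives $|\epsilon_{ij}|\le |T_1|+|T_2|\lesssim E\sqrt{t\log m}+E^2\sqrt{m}\sqrt{t\log m}\le 2\sqrt{t}\,\sqrt{\log m}\cdot\max\{E,E^2\sqrt{m}\}$ with probability at least $1-m^{-t}$. Since $t>t_0\ge 1$ implies $\sqrt{t}\le t$, this absorbs into the required form $c't\mathcal{E}(m)$. The main technical step—and likely the only nontrivial obstacle—is the concentration of $\|\boldsymbol{\eta}_j\|_2$ for a general sub-Gaussian vector with possibly dependent coordinates; here one cannot just invoke $\chi^2$ concentration and must appeal to Hanson–Wright or the general norm bound for sub-Gaussian vectors. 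Everything else is a careful bookkeeping of sub-Gaussian tail inequalities, independence, and the scaling of constants to match Definition~\ref{def:order_with_high_probability}.
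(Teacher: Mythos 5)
Your proposal is correct in substance but takes a genuinely more self-contained route than the paper. The paper's proof is short: it expands $\epsilon_{ij}$ into five inner products ($\langle\mathbf{x}_i,\boldsymbol{\eta}_i\rangle$, $\langle\mathbf{x}_i,\boldsymbol{\eta}_j\rangle$, $\langle\mathbf{x}_j,\boldsymbol{\eta}_j\rangle$, $\langle\mathbf{x}_j,\boldsymbol{\eta}_i\rangle$, $\langle\boldsymbol{\eta}_i,\boldsymbol{\eta}_j\rangle$) via the triangle inequality and then simply cites two bounds from the appendix of~\cite{boris2} ((SM1.5) for the signal--noise terms and (SM1.11) for the noise--noise term), each of which is already $\widetilde{\mathcal{O}}_m(\mathcal{E}(m))$, finishing with a union bound. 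You instead keep the two-term decomposition $T_1=2\langle\mathbf{x}_i-\mathbf{x}_j,\boldsymbol{\eta}_i-\boldsymbol{\eta}_j\rangle$ and $T_2=-2\langle\boldsymbol{\eta}_i,\boldsymbol{\eta}_j\rangle$ and prove the tail bounds from scratch: a direct sub-Gaussian tail for $T_1$ at scale $E$, and for $T_2$ a conditioning argument plus a norm bound $\|\boldsymbol{\eta}_j\|_2\lesssim E\sqrt{m}$, which is exactly how the cited estimates are obtained in~\cite{boris2}. Your version buys transparency about where the two branches of $\mathcal{E}(m)=\sqrt{\log m}\cdot\max\{E,E^2\sqrt{m}\}$ come from; the paper's version buys brevity by outsourcing the concentration work.

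One caveat on the step you correctly flag as the only nontrivial one: Assumption~\ref{assump:noise} permits dependent, non-identically distributed coordinates, so neither Theorem~3.1.1 of~\cite{HDP_book} nor the Hanson--Wright inequality applies as stated (both assume independent coordinates). The right tool for $\Pr\{\|\boldsymbol{\eta}_j\|_2>CE(\sqrt{m}+u)\}\le 2e^{-u^2}$ under only $\|\boldsymbol{\eta}_j\|_{\psi_2}\le E$ is the $\varepsilon$-net bound on $\sup_{\|\mathbf{u}\|_2=1}\langle\boldsymbol{\eta}_j,\mathbf{u}\rangle$ (a $\tfrac12$-net of $\mathbb{S}^{m-1}$ of cardinality at most $5^m$ plus a union bound over the net). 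With that substitution your argument closes, and the remaining bookkeeping ($\sqrt{t}\le t$ for $t\ge t_0\ge 1$, absorbing constants into $c'$) matches the statement.
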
 

        By Definition~\ref{def:order_with_high_probability}, it suffices to show that $|\epsilon_{ij}| = \widetilde{O}_{m}(\mathcal{E}(m))$, for any $i,j \in [n]$ and $i \neq j$.

        \begin{proof}
            For any $i,j \in [n]$ with $i \neq j$, by the definition of $\epsilon_{ij}$ in~\eqref{eq:squared_distance_cost} and the triangle inequality, we have:
            \begin{equation}\label{eq:epsilon_decompose}
            \begin{aligned}
                |\epsilon_{ij}| &= 2|\langle \mathbf{x}_i, \boldsymbol{\eta}_i\rangle - \langle \mathbf{x}_i, \boldsymbol{\eta}_j\rangle + \langle \mathbf{x}_j, \boldsymbol{\eta}_j\rangle - \langle \mathbf{x}_j, \boldsymbol{\eta}_i\rangle - \langle \boldsymbol{\eta}_i, \boldsymbol{\eta}_j\rangle| \\
                &\leq 2\left(|\langle \mathbf{x}_i, \boldsymbol{\eta}_i\rangle| + |\langle \mathbf{x}_i, \boldsymbol{\eta}_j\rangle| + |\langle \mathbf{x}_j, \boldsymbol{\eta}_j\rangle| + |\langle \mathbf{x}_j, \boldsymbol{\eta}_i\rangle| + |\langle \boldsymbol{\eta}_i, \boldsymbol{\eta}_j\rangle|\right).
            \end{aligned}
            \end{equation}

            We next bound each term in~\eqref{eq:epsilon_decompose} using results from~\cite{boris2}. Specifically, under Assumption~\ref{assump:noise}, (SM1.5) in Appendix SM1.1 of~\cite{boris2} states that for any $i,j \in [n]$, 
            \begin{equation}\label{eq:x_eta}
                |\langle \mathbf{x}_i, \boldsymbol{\eta}_j\rangle| = \widetilde{O}_{m}(\mathcal{E}(m)).
            \end{equation}
            
            Additionally, by (SM1.11) in the same appendix, under Assumption~\ref{assump:noise}, for any $i,j \in [n]$ with $i \neq j$, we have:
            \begin{equation}\label{eq:eta_eta}
                |\langle \boldsymbol{\eta}_i, \boldsymbol{\eta}_j\rangle| = \widetilde{O}_{m}(\mathcal{E}(m)).
            \end{equation}

            Combining~\eqref{eq:x_eta} and~\eqref{eq:eta_eta} with~\eqref{eq:epsilon_decompose} and applying the union bound for each individual term in~\eqref{eq:epsilon_decompose}, we obtain the desired result:
            \begin{equation}\label{eq:individual_ep}
                |\epsilon_{ij}| = \widetilde{\mathcal{O}}_{m}(\mathcal{E}(m)),
            \end{equation}
            for any $i,j \in [n]$ with $i \neq j$.
        \end{proof}
        
        Lemma~\ref{lem:epsilon_bound_in_prob} is a formal statement of the concentration property of $\epsilon_{ij}$. Given $E$ in Assumption~\ref{assump:noise}, we have $\mathcal{E}(m) \leq \max\{C m^{-1/4}, C^2\left(\log m\right)^{-1/2}\}$, which diminishes as the feature dimension $m$ increases. Lemma~\ref{lem:epsilon_bound_in_prob} implies that as $m$ grows, all individual terms $|\epsilon_{ij}|$ would concentrate tightly around zero, and this concentration becomes stronger as $m$ increases. This concentration property justifies~\eqref{eq:approx_ep}.


        \section{Proof of Theorem~\ref{thm:estimation_error_bound}}\label{sec:estimation_error_bound}
    
        By Definition~\ref{def:order_with_high_probability}, it suffices to show that $ \frac{1}{n} \|\hat{\mathbf{r}} - \mathbf{r}\|_1 = \widetilde{\mathcal{O}}_{m,n} \left( \mathcal{E}(m) + n^{-\alpha} \right)$ and $\frac{1}{n(n-1)} \sum_{i=1}^n \sum_{j\neq i}^n |
        \hat{D}_{ij} - D_{ij}|= \widetilde{\mathcal{O}}_{m,n} \left( \mathcal{E}(m) + n^{-\alpha} \right)$.
        
        \begin{proof}
            First, we define some notation useful for the proof. Given any cost matrix $\mathbf{C}$ and a permutation matrix $\mathbf{P}$, we define the corresponding assignment cost as $\mathcal{L}_\mathbf{C}(\mathbf{P}): = \operatorname{Tr}(\mathbf{P}^T\mathbf{C})$. Let $\widetilde{\mathbf{D}}'$, $\widetilde{\mathbf{P}}^{(1)}, \widetilde{\mathbf{P}}^{(2)}, \widetilde{\sigma}_1, \widetilde{\sigma}_2$ be obtained by executing Algorithm~\ref{alg:debias} on the corrupted dataset $\mathbf{Y}$. Let $\mathbf{D}'$, $\mathbf{P}^{(1)}$, and $\mathbf{P}^{(2)}$ be obtained from~\eqref{eq:masked_D_def} and~\eqref{eq:lemma_lsap}, respectively, with the the masking permutation matrix in Lemma~\ref{lem:Geometric LSAP Cost} taken as $\widetilde{\mathbf{P}}^{(1)}$. We define the permutation functions $\sigma_1, \sigma_2 : [n] \rightarrow [n]$ for $\mathbf{P}^{(1)}$ and $\mathbf{P}^{(2)}$ respectively, where $\sigma_1(i) = j$ if $P^{(1)}_{ij} = 1$ and $\sigma_2(i) = j$ if $P^{(2)}_{ij} = 1$.

            Before analyzing $ \frac{1}{n} \|\hat{\mathbf{r}} - \mathbf{r}\|_1$ and $\frac{1}{n(n-1)} \sum_{i=1}^n \sum_{j\neq i}^n |\hat{D}_{ij} - D_{ij}|$, we first establish bounds for the following three quantities: $\left| \sum_{i=1}^n D_{i\widetilde{\sigma}_1(i)} \right|$, $\left| \sum_{i=1}^n D_{i\widetilde{\sigma}_2(i)} \right|$, and $\left| \sum_{i=1}^n D_{\widetilde{\sigma}_1(i)\widetilde{\sigma}_2(i)} \right|$.
    
            \paragraph{The bound for $\left| \sum_{i=1}^n D_{i\widetilde{\sigma}_1(i)} \right|$:} We establish a bound for $\left| \sum_{i=1}^n D_{i\widetilde{\sigma}_1(i)} \right|$ by comparing the assignment costs associated with $\mathbf{P}^{(1)}$ and $\widetilde{\mathbf{P}}^{(1)}$ under the corrupted cost matrix $\widetilde{\mathbf{D}}$. 
            
            First, by the definition of $\widetilde{\mathbf{P}}^{(1)}$ in~\eqref{eq:alg_LSAP_1} and the optimality of minimization, we have:
            \begin{equation}\label{eq:LSAP_iter1}
                    \mathcal{L}_{\widetilde{\mathbf{D}}}(\widetilde{\mathbf{P}}^{(1)})  
                    \leq \mathcal{L}_{\widetilde{\mathbf{D}}} 
                    (\mathbf{P}^{(1)}). 
            \end{equation}

            Next, we rewrite $\mathcal{L}_{\widetilde{\mathbf{D}}}(\widetilde{\mathbf{P}}^{(1)})$ and $\mathcal{L}_{\widetilde{\mathbf{D}}}(\mathbf{P}^{(1)})$ using~\eqref{eq:squared_distance_cost} and the permutation functions $\widetilde{\sigma}_1$ and $\sigma_1$. In particular, for $\mathcal{L}_{\widetilde{\mathbf{D}}}(\widetilde{\mathbf{P}}^{(1)})$, we have:
            \begin{equation} \label{eq:LC_rewrite1}
                \begin{aligned}
                    \mathcal{L}_{\widetilde{\mathbf{D}}}(\widetilde{\mathbf{P}}^{(1)}) 
                    &
                    = \sum_{i,j =1}^n \widetilde{P}^{(1)}_{ij} \widetilde{D}_{ij} = 
                    \sum_{i=1}^n \widetilde{D}_{i\widetilde{\sigma}_1(i)} 
                    =\sum_{i=1}^n \left(D_{i\widetilde{\sigma}_1(i)} + r_i + r_{\widetilde{\sigma}_1(i)} + \epsilon_{i\widetilde{\sigma}_1(i)}\right) \\
                    &= \sum_{i=1}^n \left(D_{i\widetilde{\sigma}_1(i)} + \epsilon_{i\widetilde{\sigma}_1(i)}\right) + \sum_{i=1}^n \left(r_i + r_{\widetilde{\sigma}_1(i)}\right) \\
                    &= \sum_{i=1}^n \left(D_{i\widetilde{\sigma}_1(i)} + \epsilon_{i\widetilde{\sigma}_1(i)}\right) + 2 \sum_{i=1}^n r_i. 
                    \end{aligned}
            \end{equation}
            Following the analogous derivation in~\eqref{eq:LC_rewrite1}, for $\mathcal{L}_{\widetilde{\mathbf{D}}}(\mathbf{P}^{(1)})$, we have:
            \begin{equation} \label{eq:LC_rewrite2}
                \mathcal{L}_{\widetilde{\mathbf{D}}}(\mathbf{P}^{(1)}) = \sum_{i=1}^n \left(D_{i\sigma_1(i)} + \epsilon_{i\sigma_1(i)}\right) + 2 \sum_{i=1}^n r_i. 
            \end{equation}
                
            We now substitute~\eqref{eq:LC_rewrite1} and~\eqref{eq:LC_rewrite2} into~\eqref{eq:LSAP_iter1}, rearrange terms, take absolute value on both sides, and apply the triangle inequality to obtain:
            \begin{equation} \label{eq:iter1_bound}
                \begin{aligned}
                    \left|\sum_{i=1}^n D_{i\widetilde{\sigma}_1(i)} \right| 
                    & \leq \left|\sum_{i=1}^n D_{i\sigma_1(i)} +\sum_{i=1}^n \epsilon_{i\sigma_1(i)} -  \sum_{i=1}^n \epsilon_{i\widetilde{\sigma}_1(i)}\right| \\
                    & \leq \left|\sum_{i=1}^n D_{i\sigma_1(i)}\right| +\sum_{i=1}^n\left|\epsilon_{i\sigma_1(i)}\right| +  \sum_{i=1}^n \left|\epsilon_{i\widetilde{\sigma}_1(i)}\right| \\
                    & = \widetilde{\mathcal{O}}_{n} (n^{-\alpha+1}) + 2 \widetilde{\mathcal{O}}_{m,n}(n \mathcal{E}(m)) \\
                    & = \widetilde{\mathcal{O}}_{m,n} \left( n \mathcal{E}(m) + n^{-\alpha+1} \right).
                \end{aligned}
            \end{equation}
            In deriving~\eqref{eq:iter1_bound}, we use the following results:
            \begin{enumerate}[label = (\alph*)]
                \item By~\eqref{eq:clean_bounds} in Lemma~\ref{lem:Geometric LSAP Cost} and the non-negativity of squared pairwise distances:
                    \begin{equation}\label{eq:D_bound}
                     \left|\sum_{i=1}^n D_{i\sigma_1(i)} \right| = \sum_{i=1}^n D_{i\sigma_1(i)} = \operatorname{Tr}((\mathbf{P}^{(1)})^T\mathbf{D}) \leq n \cdot c n^{-\alpha} = \widetilde{\mathcal{O}}_{n}\left(n^{-\alpha+1}\right).
                    \end{equation}
                \item Under Assumption~\ref{assump:m_and_n}, applying the union bound to~\eqref{eq:individual_ep} in Lemma~\ref{lem:epsilon_bound_in_prob} over all $\epsilon$-term, we obtain:
                \begin{equation}\label{eq:e_bound}
                    \sum_{i=1}^n\left|\epsilon_{i\sigma_1(i)}\right| = \widetilde{\mathcal{O}}_{m,n}(n \mathcal{E}(m)),
                    \quad \text{and} \quad
                    \sum_{i=1}^n \left|\epsilon_{i\widetilde{\sigma}_1(i)}\right| = \widetilde{\mathcal{O}}_{m,n}(n \mathcal{E}(m)).
                \end{equation}
            \end{enumerate}

            \paragraph{The bound for $\left| \sum_{i=1}^n D_{i\widetilde{\sigma}_2(i)} \right|$:} We establish a bound for $\left| \sum_{i=1}^n D_{i\widetilde{\sigma}_2(i)} \right|$ by comparing the assignment costs associated with $\mathbf{P}^{(2)}$ and $\widetilde{\mathbf{P}}^{(2)}$ under the corrupted cost matrix $\widetilde{\mathbf{D}}$. 

            First, we observe that any permutation matrix $\mathbf{P}$ feasible for the LSAP optimization in~\eqref{eq:alg_LSAP_2} satisfies $\mathcal{L}_{\widetilde{\mathbf{D}}}(\mathbf{P}) = \mathcal{L}_{\widetilde{\mathbf{D}}'}(\mathbf{P})$. This equality holds because, by the construction of $\widetilde{\mathbf{D}}'$ from $\widetilde{\mathbf{D}}$ according to~\eqref{eq:modified_D}, the modified entries are set to $+\infty$ and will not be selected by any feasible permutation matrix. Hence, by the feasibility of $\widetilde{\mathbf{P}}^{(2)}$, we have:
            \begin{equation}\label{eq:cost_mod_unmod1}
                \mathcal{L}_{\widetilde{\mathbf{D}}}(\widetilde{\mathbf{P}}^{(2)}) = \mathcal{L}_{\widetilde{\mathbf{D}}'}(\widetilde{\mathbf{P}}^{(2)})
            \end{equation}
            
            Furthermore, since $\widetilde{\mathbf{D}}'$ and $\mathbf{D}'$ are generated using the same masking matrix $\widetilde{\mathbf{P}}^{(1)}$, any permutation matrix feasible for the LSAP optimization with cost matrix $\mathbf{D}'$ will also be feasible for the optimization with cost matrix $\widetilde{\mathbf{D}}'$. Consequently, by the definition of $\mathbf{P}^{(2)}$ in~\eqref{eq:lemma_lsap}, $\mathbf{P}^{(2)}$ is feasible for the LSAP optimization in~\eqref{eq:alg_LSAP_2}, leading to:
            \begin{equation}\label{eq:cost_mod_unmod2}
                \mathcal{L}_{\widetilde{\mathbf{D}}}(\mathbf{P}^{(2)}) = \mathcal{L}_{\widetilde{\mathbf{D}}'}(\mathbf{P}^{(2)}).
            \end{equation}

            Next, we note that by the definition of $\widetilde{\mathbf{P}}^{(2)}$ in~\eqref{eq:alg_LSAP_2} and the optimality of minimization:
            \begin{equation}\label{eq:optimal_second_round}
                \mathcal{L}_{\widetilde{\mathbf{D}}^{'}}(\widetilde{\mathbf{P}}^{(2)})
                \leq \mathcal{L}_{\widetilde{\mathbf{D}}^{'}}(\mathbf{P}^{(2)}).
            \end{equation}
            Combining~\eqref{eq:cost_mod_unmod1},~\eqref{eq:cost_mod_unmod2}, and~\eqref{eq:optimal_second_round}, we obtain:
            \begin{equation}
                \mathcal{L}_{\widetilde{\mathbf{D}}}(\widetilde{\mathbf{P}}^{(2)})
                \leq \mathcal{L}_{\widetilde{\mathbf{D}}}(\mathbf{P}^{(2)}).
            \end{equation}

            We now follow the analogous derivation steps in~\eqref{eq:LC_rewrite1},~\eqref{eq:LC_rewrite2}, and~\eqref{eq:iter1_bound} to obtain the following bound:\begin{equation} \label{eq:iter2_bound}
                \begin{aligned}
                    \left| \sum_{i=1}^n D_{i\widetilde{\sigma}_2(i)} \right| 
                    &\leq \left|\sum_{i=1}^n D_{i\sigma_2(i)}\right| +\sum_{i=1}^n\left|\epsilon_{i\sigma_2(i)}\right| +  \sum_{i=1}^n \left|\epsilon_{i\widetilde{\sigma}_2(i)}\right| \\
                    &= \widetilde{\mathcal{O}}_{m,n}\left( n \mathcal{E}(m) + n^{-\alpha+1} \right).        
                \end{aligned}
                \end{equation}
            In deriving~\eqref{eq:iter2_bound}, we use the following results: (1) $\left|\sum_{i=1}^n D_{i\sigma_2(i)} \right|  = \widetilde{\mathcal{O}}_{n}\left(n^{-\alpha+1}\right)$, which can be established using the same approach as~\eqref{eq:D_bound}, and (2)
            $\sum_{i=1}^n\left|\epsilon_{i\sigma_2(i)}\right| = \widetilde{\mathcal{O}}_{m,n}(n \mathcal{E}(m))$ and $\sum_{i=1}^n \left|\epsilon_{i\widetilde{\sigma}_2(i)}\right| = \widetilde{\mathcal{O}}_{m,n}(n \mathcal{E}(m))$, both of which follow from applying the same technique used to derive~\eqref{eq:e_bound}.
            
            \paragraph{The bound for $\left| \sum_{i=1}^n D_{\widetilde{\sigma}_1(i)\widetilde{\sigma}_2(i)} \right|$:}
            We derive a bound for $\left| \sum_{i=1}^n D_{\widetilde{\sigma}_1(i)\widetilde{\sigma}_2(i)} \right|$ using~\eqref{eq:iter1_bound} and~\eqref{eq:iter2_bound}.
            
            First, since pairwise Euclidean distances satisfy the triangle inequality, for any $i \in [n]$, we have: 
            \begin{equation}\label{eq:Euclidean_triangle}
                \sqrt{D_{\widetilde{\sigma}_1(i)\widetilde{\sigma}_2(i)}} \leq \sqrt{D_{i\widetilde{\sigma}_1(i)}} + \sqrt{D_{i\widetilde{\sigma}_2(i)}}.
            \end{equation}
            
            Next, we square both sides of~\eqref{eq:Euclidean_triangle}, apply the arithmetic mean-geometric mean inequality~\cite{inequality}, and sum over all $i \in [n]$ to obtain:
            \begin{equation} \label{eq:iter3_bound}
                \begin{aligned}
                    \left|\sum_{i=1}^n D_{\widetilde{\sigma}_1(i)\widetilde{\sigma}_2(i)} \right| &= \sum_{i=1}^n D_{\widetilde{\sigma}_1(i)\widetilde{\sigma}_2(i)} 
                    \leq \sum_{i=1}^n \left( D_{i\widetilde{\sigma}_1(i)} + D_{i\widetilde{\sigma}_2(i)} + 2\sqrt{D_{i\widetilde{\sigma}_1(i)}D_{i\widetilde{\sigma}_2(i)}}\right)  \\
                    &\leq \sum_{i=1}^n 2\left( D_{i\widetilde{\sigma}_1(i)} + D_{i\widetilde{\sigma}_2(i)}\right) 
                    = 2 \left|\sum_{i=1}^n  D_{i\widetilde{\sigma}_1(i)}\right| + 2 \left|\sum_{i=1}^n D_{i\widetilde{\sigma}_2(i)} \right|\\
                    &= \widetilde{\mathcal{O}}_{m,n}\left( n \mathcal{E}(m) + n^{-\alpha+1} \right),
                \end{aligned}
            \end{equation}
            where the last equality follows from~\eqref{eq:iter1_bound} and~\eqref{eq:iter2_bound}.

            We now analyze the estimation error of noise magnitudes, namely $ \frac{1}{n} \|\hat{\mathbf{r}} - \mathbf{r}\|_1$. For the noise magnitude estimate $\hat{\mathbf{r}}$ from~\eqref{eq:noise_mag_estimate}, we have:
                \allowdisplaybreaks
                \begin{align*} \label{eq:error_bound_L1_norm}
                    & \quad \frac{1}{n} \| \mathbf{\hat{r}} - \mathbf{r} \|_1 
                    = \frac{1}{n} \sum_{i=1}^n \left|\frac{1}{2}\left(\widetilde{D}_{i\widetilde{\sigma}_1(i)} + \widetilde{D}_{i\widetilde{\sigma}_2(i)} - \widetilde{D}_{\widetilde{\sigma}_1(i)\widetilde{\sigma}_2(i)}\right) - r_i \right| \\
                    &= \frac{1}{2n} \sum_{i=1}^n \left| \left( D_{i\widetilde{\sigma}_1(i)} + D_{i\widetilde{\sigma}_2(i)} - D_{\widetilde{\sigma}_1(i)\widetilde{\sigma}_2(i)} + \epsilon_{i\widetilde{\sigma}_1(i)} + \epsilon_{i\widetilde{\sigma}_2(i)} - \epsilon_{\widetilde{\sigma}_1(i)\widetilde{\sigma}_2(i)}\right) \right| \\
                    &\leq \frac{1}{2n} \sum_{i=1}^n  \left( D_{i\widetilde{\sigma}_1(i)} + D_{i\widetilde{\sigma}_2(i)} + D_{\widetilde{\sigma}_1(i)\widetilde{\sigma}_2(i)} + \left| \epsilon_{i\widetilde{\sigma}_1(i)} \right| + \left| \epsilon_{i\widetilde{\sigma}_2(i)} \right| + \left| \epsilon_{\widetilde{\sigma}_1(i)\widetilde{\sigma}_2(i)} \right| \right) \stepcounter{equation}\tag{\theequation}\\
                    &= \frac{1}{2n} \left(3 \widetilde{\mathcal{O}}_{m,n}\left( n \mathcal{E}(m) + n^{-\alpha+1} \right) + 3\widetilde{\mathcal{O}}_{m,n}\left( n \mathcal{E}(m)\right)\right)\\
                    &=\widetilde{\mathcal{O}}_{m,n}\left(\mathcal{E}(m) + n^{-\alpha} \right),
                    \end{align*}
            where we apply~\eqref{eq:squared_distance_cost} in the second line and utilize the triangle inequality and non-negativity of $\mathbf{D}$ in the third line. In the fourth line, we employ~\eqref{eq:iter1_bound},~\eqref{eq:iter2_bound}, and~\eqref{eq:iter3_bound} to bound the $\mathbf{D}$-related sums, and apply the union bound (under Assumption~\ref{assump:m_and_n}) to~\eqref{eq:epsilon_bound_in_prob} in Lemma~\ref{lem:epsilon_bound_in_prob} to bound the $\epsilon$-related sums.

            Finally, we derive a bound for $\frac{1}{n(n-1)} \sum_{i=1}^n \sum_{j\neq i}^n |\hat{D}_{ij} - D_{ij}|$. For the corrected distance matrix $\hat{\mathbf{D}}$ from~\eqref{eq:correction}, we have:
                \allowdisplaybreaks
                    \begin{align*}
                        \frac{1}{n(n-1)} \sum_{i=1}^n \sum_{j\neq i}^n |
                \hat{D}_{ij} - D_{ij}| 
                        &= \frac{1}{n(n-1)} \sum_{i=1}^n \sum_{j\neq i}^n \left| \widetilde{D}_{ij} - \hat{r}_i - \hat{r}_j - D_{ij} \right| \\
                        &= \frac{1}{n(n-1)} \sum_{i=1}^n \sum_{j\neq i}^n \left| \left(D_{ij} + r_i + r_j + \epsilon_{ij}\right) - \hat{r}_i - \hat{r}_j - D_{ij} \right|\\
                        &= \frac{1}{n(n-1)} \sum_{i=1}^n \sum_{j\neq i}^n \left| \left(r_i - \hat{r}_i\right) + \left(r_j -\hat{r}_j\right) + \epsilon_{ij} \right|\\
                        &\leq \frac{1}{n(n-1)} \sum_{i=1}^n \sum_{j\neq i}^n \left(\left| r_i - \hat{r}_i \right| + \left|r_j  - \hat{r}_j\right| + \left|\epsilon_{ij} \right|\right) \stepcounter{equation}\tag{\theequation}\\
                        &= \frac{1}{n(n-1)} \left((n-1) \left( \sum_{i=1}^n |r_i - \hat{r}_i| + \sum_{j=1}^n |r_j - \hat{r}_j|\right) + \sum_{i=1}^n \sum_{j\neq i}^n |\epsilon_{ij}| \right)\\
                        &= \frac{1}{n} \|\mathbf{r} - \hat{\mathbf{r}}\|_1 + \frac{1}{n} \|\mathbf{r} - \hat{\mathbf{r}}\|_1 + \frac{1}{n(n-1)} \sum_{i = 1}^n \sum_{j\neq i}^n |\epsilon_{ij}|\\
                        &= \widetilde{\mathcal{O}}_{m,n}\left(\mathcal{E}(m) + n^{-\alpha} \right) + \widetilde{\mathcal{O}}_{m,n}\left(\mathcal{E}(m) + n^{-\alpha} \right) + \widetilde{\mathcal{O}}_{m,n}\left(\mathcal{E}(m)\right)\\
                        &= \widetilde{\mathcal{O}}_{m,n}\left(\mathcal{E}(m) + n^{-\alpha} \right),
                    \end{align*}
            where we employ~\eqref{eq:squared_distance_cost} in the second line and apply the triangle inequality in the fourth line. In the second-to-last line, we bound the first two terms using~\eqref{eq:error_bound_L1_norm} and bound the last $\epsilon$-related term by applying the union bound to~\eqref{eq:epsilon_bound_in_prob} from Lemma~\ref{lem:epsilon_bound_in_prob} under Assumption~\ref{assump:m_and_n}.
        \end{proof}   
\end{appendices}   

\bibliographystyle{plain}
\bibliography{references}
\end{document}